\documentclass[11pt]{article}

\usepackage[T1]{fontenc}
\usepackage[utf8]{inputenc}
\usepackage[letterpaper,margin=1in]{geometry}
\usepackage[numbers,sort&compress]{natbib}

\usepackage{amsmath, amssymb, amsfonts, mathtools, amsthm}
\usepackage{nicefrac}
\usepackage{microtype}

\usepackage[dvipsnames]{xcolor}

\usepackage{graphicx}
\usepackage{booktabs}
\usepackage{multirow}
\usepackage{subcaption}
\usepackage{pgfplots}
\pgfplotsset{compat=1.18}
\usepackage{tikz}
\usetikzlibrary{arrows.meta, decorations.pathreplacing, calc, fit}

\usepackage{enumitem}

\usepackage{algorithm}
\usepackage{algpseudocode}

\usepackage[colorlinks=true,linkcolor=blue!80!black,citecolor=blue!80!black,urlcolor=blue!80!black]{hyperref}

\newtheorem{theorem}{Theorem}
\newtheorem{proposition}{Proposition}
\newtheorem{lemma}{Lemma}
\newtheorem{definition}{Definition}
\newtheorem{remark}{Remark}
\newtheorem{corollary}{Corollary}

\newcommand{\R}{\mathbb{R}}

\newcommand{\zono}[1]{\langle #1 \rangle}

\newcommand{\operator}[1]{{\normalfont \texttt{#1}}}
\newcommand{\Oc}{\mathcal{O}}
\newcommand{\id}{\mathsf{id}}
\DeclareMathSymbol{\shortminus}{\mathbin}{AMSa}{"39}

\DeclareMathOperator{\softmax}{softmax}
\DeclareMathOperator{\diag}{diag}

\newcommand{\pz}{\mathcal{P}}
\newcommand{\cpz}{\mathcal{C}}

\title{Certified Mechanistic Interpretability:\\Lifting Single-Input Findings to Bounded Neighbourhoods}

\author{%
  Zhen Zhang\textsuperscript{*} \quad Yanliang Huang\textsuperscript{*} \quad
  Peng Xie \quad Wenyuan Wu \quad Amr Alanwar \\
  School of Computation, Information and Technology \\
  Technical University of Munich, Germany \\
  {\small\texttt{\{zhenzhang.zhang, yanliang.huang, p.xie, wenyuan.wu, alanwar\}@tum.de}} \\
  {\small\textsuperscript{*}Equal contribution.}
}
\date{}

\begin{document}

\maketitle

\begin{abstract}
Mechanistic interpretability reverse-engineers transformer circuits one input at a time, leaving observed mechanisms without guarantees over bounded input neighbourhoods. We address this gap with a framework based on constrained polynomial-zonotope (CPZ) propagation that lifts mechanistic-interpretability observations from a single input to certified statements over a bounded set of perturbations. Three internal-attention queries (top-$k$ stability, evidence mass, and attention entropy) are formulated as tractable programs over the simplex of attention weights, and CPZ propagation through transformer blocks is shown to preserve the softmax simplex and the LayerNorm zero-mean identity exactly. A recursive Jacobian zonotope construction extends the same certificates across layer depth by linearising the block stack at the input and avoids per-layer generator growth. We instantiate the framework on transformer attention; the resulting certificates offer a way to sharpen mechanistic statements that single-input inspection cannot resolve on its own, and to inform downstream decisions in regimes where empirical heuristics may be misleading.
\end{abstract}

\section{Introduction}
\label{sec:intro}

Mechanistic interpretability has identified specific transformer mechanisms (induction heads~\citep{olsson2022context}, duplicate-token heads, IOI circuits~\citep{wang2023interpretability}, factual-recall heads) by examining single inputs. But a single-input observation is just that: an observation. Whether the head's behaviour holds across nearby inputs has not been formalised, and asking the question can change published findings: heads that single-input attention inspection classifies identically can split sharply once a bounded perturbation neighbourhood is taken into account.

\begin{figure}[t]
\centering
\includegraphics[width=\linewidth]{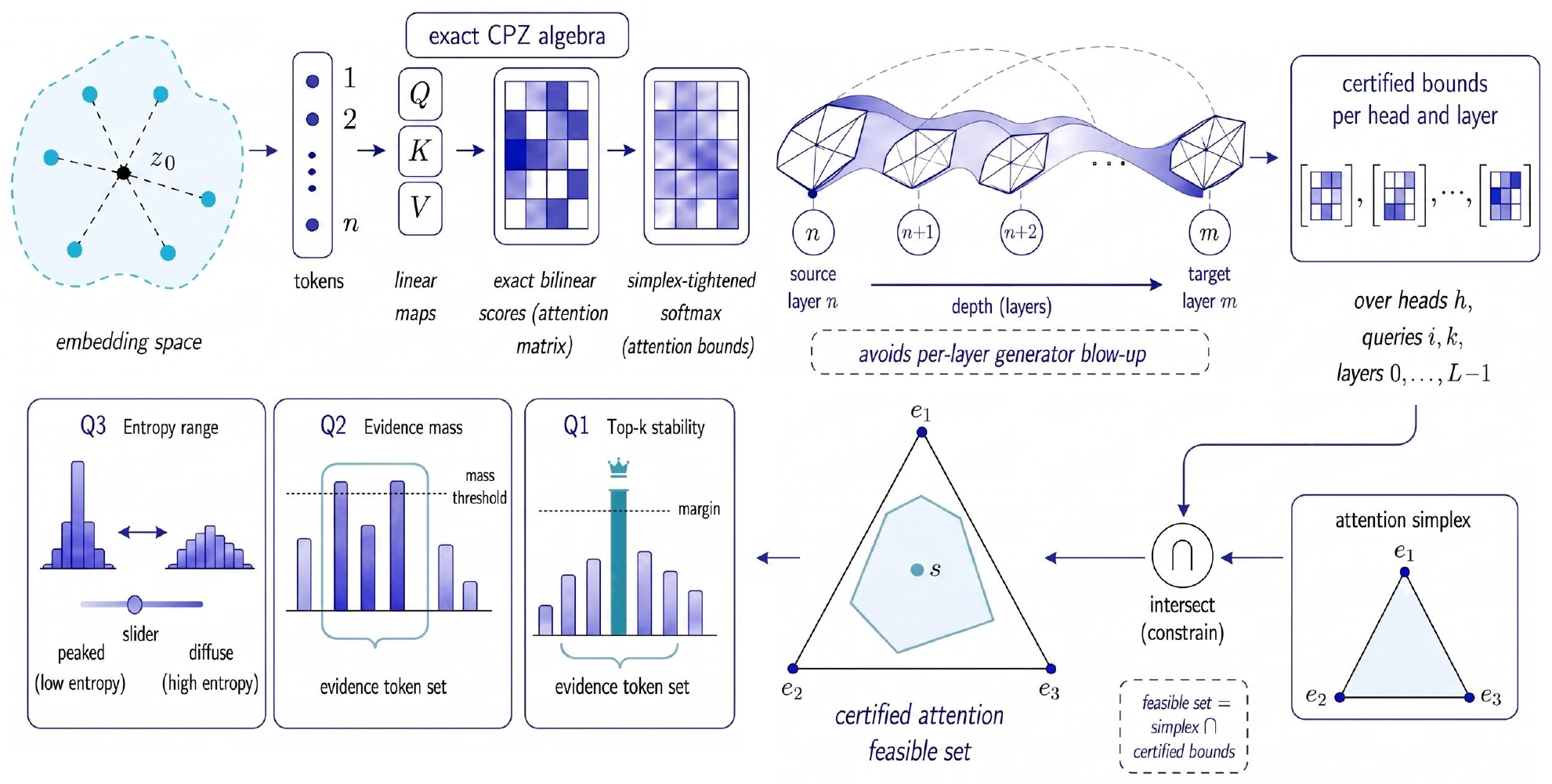}
\caption{\textbf{Pipeline overview.} A post-embedding perturbation set around the clean input $z_0$ is encoded as a CPZ and propagated through the transformer. \textbf{Top:} $Q^\top K$ stays exact under CPZ algebra; softmax is tightened by the simplex constraint $\sum_j s_{ij}{=}1$; a recursive Jacobian--zonotope connector (\S\ref{sec:jac_zono}) lifts bounds across layer depth. \textbf{Bottom:} the feasible attention polytope $R_\epsilon = \Delta_S \cap [\underline{s}, \overline{s}]$ supports three certified queries Q1/Q2/Q3 (\S\ref{sec:cert_queries}).}
\label{fig:overview}
\end{figure}

Consider a transformer with two attention heads at perfect accuracy on an induction task. Ablation rates them equally functional, and gradient norm declares one head much more sensitive than the other; standard gradient pruning would drop the smaller-gradient head. This decision misfires under bounded perturbation: the smaller-gradient head is the one whose attention actually shifts to track the duplicate token, and pruning it loses adversarial accuracy that pruning the other preserves. No empirical interpretability tool catches this in advance; a framework that bounds attention over the whole neighbourhood does.

Mechanistic interpretability and neural-network verification are adjacent but neither addresses internal mechanisms in a neighbourhood. Mechanistic interpretability~\citep{olsson2022context,wang2023interpretability} studies internal circuits by observation, without proving they hold in a neighbourhood. Neural-network verification~\citep{gowal2018effectiveness,zhang2018crown,singh2019abstract} issues certificates for output properties only; sequential polynomial-zonotope propagation~\citep{ladner2025towards} is intractable at BERT scale because generators grow with depth. Internal mechanisms have not been certified, and set-based methods have not reached pretrained transformer scale even on outputs.

We give a framework, \emph{certified mechanistic interpretability}, that lifts a single-input observation of an internal mechanism to a statement holding across a bounded perturbation neighbourhood at pretrained transformer scale (Fig.~\ref{fig:overview}). The certified mechanisms are attention-level: per-head top-$k$ stability, evidence-mass concentration, and attention-entropy bounds, formulated as tractable programs over the attention simplex. Constrained polynomial zonotopes (CPZs)~\citep{kochdumper2023constrained} preserve the simplex and LayerNorm identities through transformer operations, and a score-margin theorem reduces top-$1$ certification to a scalar sign check. A recursive Jacobian--zonotope connector linearises the block stack once at the input, avoids the per-layer generator blow-up that stops sequential propagation after one layer, and extends the certificates to pretrained-transformer scale.

\paragraph{Contributions.} We introduce \emph{certified mechanistic interpretability}, a framework that lifts single-input observations of transformer-internal mechanisms to statements holding over a bounded perturbation neighbourhood. We instantiate it through three attention-level queries---top-$k$ stability, evidence mass, and attention entropy---formulated as tractable programs over the attention simplex, and we give a constrained polynomial-zonotope propagation method that preserves the softmax simplex and the LayerNorm identity exactly, together with a recursive Jacobian-zonotope connector that extends the same certificates to pretrained-transformer depth. The resulting certificates separate published mechanistic findings that single-input inspection cannot distinguish, and inform pruning decisions where gradient-based heuristics fail.

\section{Background}
\label{sec:background}

\paragraph{Constrained Polynomial Zonotopes.}
\label{sec:cpz_def}
A CPZ~\citep{kochdumper2023constrained} represents a set as a polynomial expansion over factors $\alpha_k\in[-1,1]$ subject to polynomial constraints:
\begin{equation}
\label{eq:cpz}
\cpz = \bigg\{ c + \sum_{i=1}^{h} \Big(\textstyle\prod_{k=1}^{p} \alpha_k^{E_{k,i}}\Big) G_{\cdot,i} ~\bigg|~ \sum_{j=1}^{q} \Big(\textstyle\prod_{k=1}^{p} \alpha_k^{R_{k,j}}\Big) A_{\cdot,j} = b,\; \alpha_k \in [-1,1] \bigg\}.
\end{equation}
Center $c\in\R^n$, generator matrix $G$, exponent matrix $E$, constraint pair $(A,b)$, and constraint exponent matrix $R$ define the set; an identifier vector $\id$ labels factors so that \operator{mergeID}~\citep{kochdumper2023constrained} aligns factor spaces across operations. We additionally use independent generators $G_I$ with unconstrained factors $\beta_j\in[-1,1]$ to over-approximate non-polynomial operations. Tightening $G_I$ via constraints is central: when polynomial bounds $[\ell_{\text{dep}}, u_{\text{dep}}]$ intersect simplex/LayerNorm-induced bounds $[\ell_\cap, u_\cap]$, replacing the diagonal $G_I$ block by $\diag(\max(\ell_{\text{dep}}-\ell_\cap,\,u_\cap-u_{\text{dep}},\,0))$ (Eq.~\ref{eq:tighten_gi}) is sound and dominates the unconstrained bound at every coordinate.

\paragraph{CPZ algebra.} CPZs admit algebraically exact addition $\boxplus$ and element-wise multiplication $\odot$~\citep{zhang2026datadriven,zhang2025exactmult}: $\odot$ produces $h_1 h_2$ cross-term generators with exponents $E_{1}^{(\cdot,i)}+E_{2}^{(\cdot,j)}$, retaining polynomial dependency on the shared factor vector $\alpha$. The bilinear inner product $\cpz_1^\top \cpz_2 = \mathbf{1}^\top(\cpz_1 \odot \cpz_2)$ is thus exact, distinguishing CPZ from CROWN: the quadratic dependency $Q^\top K$ is preserved as a degree-$2$ polynomial in $\alpha$ where linear relaxation methods relax. Independent generators $G_I$ from softmax/LayerNorm are bounded conservatively; we tighten them at key points via structural constraints (\S\ref{sec:pz_attention}).

\paragraph{Notation.}
We consider a standard transformer encoder with $S$ tokens in $\R^d$ and $n_{\text{heads}}$ heads of dimension $d_h = d/n_{\text{heads}}$. Per head $h$: attention scores $a_{ij}^{(h)} = Q_i^{(h)\top} K_j^{(h)} / \sqrt{d_h}$, softmax weights $s_{ij}^{(h)} = \softmax_j(a_{i\cdot}^{(h)})$, and output $o_i^{(h)} = \sum_j s_{ij}^{(h)} V_j^{(h)}$. The block includes residual connections, LayerNorm, and a ReLU feedforward network.

\paragraph{Setup.} Synthetic experiments use a $2$-layer transformer encoder ($d{=}8$, $h{=}2$, $d_h{=}4$, $S{=}4$, $1{,}218$ parameters) trained to $100\%$ accuracy on a binary classification task; multi-layer scaling adds a $4$-layer synthetic transformer (App.~\ref{app:multilayer}); pretrained results use BERT-tiny on SST-$2$ and GPT-$2$ small ($124$M). All $S$ input tokens are perturbed jointly under $\ell_\infty$: $x \in [x_0 - \epsilon, x_0 + \epsilon] \subset \R^{S \times d}$. Embedding-space $\ell_\infty$ is the standard threat model for transformer verification~\citep{shi2020robustness, bonaert2021fast}; calibration (App.~\ref{app:emb_calibration}) places $\epsilon{=}0.001$--$0.002$ at $3$--$7\%$ of the nearest discrete token swap. We compare \textbf{CPZ} (ours), \textbf{PZ}~\citep{althoff2013reachability}, \textbf{CROWN}~\citep{xu2020automatic}, \textbf{IBP}, and \textbf{MC+PGD}: corner-biased Monte Carlo samples combined with multi-restart projected gradient descent~\citep{madry2018towards}, with per-table budgets given in each caption, used as a strong empirical upper bound on the certifiable rate rather than a ground truth.

\section{Certified Mechanistic Queries}
\label{sec:cert_queries}

Given attention weight bounds $s_j \in [\underline{s}_j, \overline{s}_j]$ from CPZ propagation, the simplex constraint $\sum_j s_j = 1$ with $s_j \geq 0$ is what lets us certify that a head's computational role is preserved under perturbation. The three queries below pose linear objectives (Q1 and Q2) or a concave one (Q3) on $\Delta_S \cap [\underline{s}, \overline{s}]$.

\subsection{Q1: Top-$k$ Attention Stability}
\label{sec:query_topk}

\begin{definition}[Certified Top-$k$ Attention Stability]
\label{def:topk_stability}
Given query token $i$, perturbation set $\mathcal{B}_\epsilon$, and softmax weights $s_{ij}(\delta)$, the top-$k$ pattern is certified stable if $\mathrm{top\text{-}k}(\{s_{ij}(\delta)\}_j) = T^*$ for all $\delta \in \mathcal{B}_\epsilon$, i.e., the $k$-token set receiving the highest weights is invariant.
\end{definition}

This certifies that a head's role (e.g.\ ``always attends most to the subject token'') is not an artefact of one input. A stronger variant certifies the ranking too:
\begin{corollary}[Certified Top-$k$ Ranking Preservation]
\label{cor:topk_ranking}
The ranking within $T^*$ is certified stable if for every pair $j, j' \in T^*$ with $s_{ij}^0 > s_{ij'}^0$ the simplex LP confirms $\max_{s \in \Delta_S \cap [\underline{s}, \overline{s}]} (s_{j'} - s_j) < 0$ ($\binom{k}{2}$ pairwise LPs in $\Oc(S)$ each; cross-head circuit AND of per-head certs in App.~\ref{app:extended_queries}).
\end{corollary}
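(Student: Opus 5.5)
The argument is a containment-plus-transitivity one. First I establish soundness of the relaxation. For every $\delta \in \mathcal{B}_\epsilon$, the true softmax row $s_{i\cdot}(\delta)$ lies in $R_\epsilon = \Delta_S \cap [\underline{s}, \overline{s}]$. Membership in the box is the soundness of CPZ propagation: the bounds $[\underline{s}, \overline{s}]$ over-approximate the reachable set, including after the sound $G_I$ tightening. Membership in $\Delta_S$ holds because every softmax output is nonnegative and sums to one. Hence, for any pair $j,j'$,
\begin{equation*}
s_{ij'}(\delta) - s_{ij}(\delta) \le \max_{s \in R_\epsilon} (s_{j'} - s_j) < 0 ,
\end{equation*}
so $s_{ij}(\delta) > s_{ij'}(\delta)$ for every admissible $\delta$ and every certified pair.

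Next I pass from pairwise orders to the ranking. The hypothesis covers every pair in $T^*$ that is ordered at the clean input. Assume the clean weights on $T^*$ are pairwise distinct; ties are not certifiable by a strict LP, and I would state this as an implicit assumption. Then the clean order is a strict total order on $T^*$, and each of its $\binom{k}{2}$ relations persists strictly at every $\delta$. A strict total order on a finite set is determined by its pairwise comparisons, so the ranking of $T^*$ at $\delta$ coincides with the clean ranking.

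I would then address the complexity claim. Each LP maximises the linear objective $s_{j'} - s_j$ over a box intersected with a single equality constraint, which is a continuous knapsack. A greedy solution settles it in $\Oc(S)$ time. Start every coordinate at its lower bound and check feasibility, i.e.\ $\sum \underline{s} \le 1 \le \sum \overline{s}$. Raise $s_{j'}$ as far as the remaining mass $1 - \sum \underline{s}$ allows, keep $s_j$ at $\underline{s}_j$, and spread any leftover mass over the coordinates with zero objective. If $s_j$ is forced above $\underline{s}_j$, because the other upper bounds cannot absorb the mass, the same greedy rule handles it. Infeasibility means no reachable point exists, so the claim holds vacuously.

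The main obstacle is interpreting the scope of the statement correctly. It certifies the internal order of $T^*$ only, not that $T^*$ remains the top-$k$ set. I would make this explicit, and note that combining it with Definition~\ref{def:topk_stability} yields full ranked top-$k$ stability.
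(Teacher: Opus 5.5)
Your proof is correct and follows the same route as the paper. The paper gives no separate proof of this corollary; it relies on applying the greedy pairwise simplex LP from the proof of Prop.~\ref{prop:cert_attn} to each clean-ordered pair in $T^*$, using the fact that every true softmax row lies in $\Delta_S \cap [\underline{s},\overline{s}]$. Your explicit no-ties assumption and your remark that this certifies only the internal order of $T^*$ (not the set itself) are sensible clarifications that the paper leaves implicit; the infeasible case you discuss cannot actually arise, since the clean point $\delta=0$ already lies in $R_\epsilon$.
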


\begin{proposition}[CPZ Certification via Simplex LP]
\label{prop:cert_attn}
Given softmax bounds $s_j \in [\underline{s}_j, \overline{s}_j]$ with the simplex constraint $\sum_j s_j = 1$, the top-$1$ attention target $j^*$ is certified stable if $\max_{k \neq j^*} \max_{s \in \Delta_S \cap [\underline{s}, \overline{s}]} (s_k - s_{j^*}) < 0$, where each inner LP is solvable in $\Oc(S)$ by greedy allocation. Without the simplex, the test degenerates to $\underline{s}_{j^*} > \max_{k \neq j^*} \overline{s}_k$, which is strictly weaker; see App.~\ref{app:query_proofs} for the proof.
\end{proposition}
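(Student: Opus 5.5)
Three things need to be shown: that the LP test is sound, that each inner LP is solved in $\Oc(S)$ by a greedy allocation, and that the simplex test is never weaker than the box test and sometimes strictly stronger.

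\textbf{Soundness.} Since CPZ propagation gives enclosures of the true softmax weights, every $\delta\in\mathcal{B}_\epsilon$ yields a weight vector $s(\delta)$. This vector lies in $[\underline{s},\overline{s}]$ by soundness of the bounds, and it lies in $\Delta_S$ because softmax outputs are nonnegative and sum to one. So $s(\delta)\in R_\epsilon=\Delta_S\cap[\underline{s},\overline{s}]$. If $\max_{s\in R_\epsilon}(s_k-s_{j^*})<0$ for every $k\neq j^*$, then $s_k(\delta)<s_{j^*}(\delta)$ for all $k\neq j^*$ and all $\delta$. Hence $j^*$ is the unique argmax, which is Definition~\ref{def:topk_stability} with $k=1$. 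If $R_\epsilon=\emptyset$, that is, $\sum_j\underline{s}_j>1$ or $\sum_j\overline{s}_j<1$, then the bounds are inconsistent with soundness. We either treat this case vacuously or note that it cannot occur for sound bounds.

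\textbf{Greedy solution of the inner LP.} Fix $k\neq j^*$ and start from the feasible lower corner $s=\underline{s}$, which leaves slack $r=1-\sum_j\underline{s}_j\ge 0$. The objective $s_k-s_{j^*}$ has coefficient $+1$ on $s_k$, $-1$ on $s_{j^*}$, and $0$ elsewhere. We first raise $s_k$ by $\min(r,\overline{s}_k-\underline{s}_k)$. Any remaining slack is then placed on the zero-coefficient coordinates $j\notin\{k,j^*\}$ up to their caps. Only if slack still remains is it forced onto $s_{j^*}$.

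Optimality follows from a standard exchange argument, since this is a fractional knapsack with one equality constraint. Any feasible point that puts mass on a lower-coefficient coordinate while a higher-coefficient coordinate is below its cap can move that mass and weakly increase the objective. Equivalently, LP duality with a single multiplier $\lambda$ on $\sum_j s_j=1$ certifies the greedy point.

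This gives the closed form
\[
\max_{s\in R_\epsilon}(s_k-s_{j^*}) = \min\bigl(\overline{s}_k,\ \underline{s}_k+r\bigr) - \Bigl(\underline{s}_{j^*}+\max\bigl(0,\ r-(\overline{s}_k-\underline{s}_k)-\textstyle\sum_{j\ne k,j^*}(\overline{s}_j-\underline{s}_j)\bigr)\Bigr),
\]
computed in $\Oc(S)$ once $r$ and the total capacity are precomputed. Looping over all $k$ takes $\Oc(S)$ per LP as claimed, and $\Oc(S)$ overall with shared sums.

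\textbf{Comparison with the box test.} Without the simplex the maximum is $\overline{s}_k-\underline{s}_{j^*}$, so the box test reads $\underline{s}_{j^*}>\max_k\overline{s}_k$. The LP value above is at most $\overline{s}_k-\underline{s}_{j^*}$, because the first term is at most $\overline{s}_k$ and the second is at least $\underline{s}_{j^*}$. Therefore box certification implies simplex certification.

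For strictness we exhibit an instance. Take $S=2$, $j^*=1$, and $\underline{s}=(0.4,0.3)$, $\overline{s}=(0.7,0.6)$. The box test fails since $0.4<0.6$. The simplex forces $s_2=1-s_1$ with $s_1\in[0.4,0.7]$, so $s_2\in[0.3,0.6]$ and $s_2-s_1=1-2s_1\le 0.2$, which is not negative; this instance does not separate the tests. Tightening to $\underline{s}=(0.55,0.3)$, $\overline{s}=(0.7,0.6)$ does. The box test fails because $0.55<0.6$. Under the simplex, $s_2=1-s_1\le 0.45$ and $s_2-s_1\le -0.1<0$, so certification succeeds. This shows "strictly weaker."

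\textbf{Main obstacle.} The delicate step is the bookkeeping in the greedy closed form, namely the order of filling among coefficients $+1$, $0$, $-1$ and the case where slack overflows onto $s_{j^*}$. I would verify it through the one-multiplier dual, checking complementary slackness for each of the three regimes of $r$.
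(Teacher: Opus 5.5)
Your proposal is correct and follows essentially the same route as the paper. Both arguments greedily allocate the slack $1-\sum_j\underline{s}_j$ to $s_k$ first, then to the neutral tokens, then to $s_{j^*}$ last, and both identify the box test with the simplex-infeasible corner $s_k=\overline{s}_k$, $s_{j^*}=\underline{s}_{j^*}$. Your version is more complete than the paper's sketch: you add the soundness step, the exchange/dual optimality argument, the explicit closed form, the implication from box certification to simplex certification, and a concrete $S=2$ separating instance, whereas the paper cites the simplex excess $E$ and defers a witness to Theorem~\ref{thm:separation}.
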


\paragraph{Polynomial score margin.} Softmax monotonicity gives $\arg\max_j s_{ij}{=}\arg\max_j a_{ij}$, so top-$1$ reduces to a scalar sign check; CPZ exposes a closed-form, optimisation-free quadratic upper bound on the score margin:
\begin{lemma}[Quadratic CPZ upper bound]
\label{lem:quadratic_bound}
Let $p(\alpha) = c + \mathbf{a}^\top \alpha + \alpha^\top H \alpha$ for $\alpha \in [-1,1]^n$, with $H$ symmetric. Then $\overline{p} \coloneqq c + \|\mathbf{a}\|_1 + \sum_k \max(0, H_{kk}) + \sum_{k<\ell} 2|H_{k\ell}| \geq \max_{\alpha\in[-1,1]^n} p(\alpha)$, exploiting $\alpha_k^2\in[0,1]$ on the diagonal. The bound is sound (per-term suprema) but generally not tight; on Layer-$0$ score margins the gap to $200$-restart PGD is $\leq 0.5\%$ across Tab.~\ref{tab:attention_stability}. Proof: App.~\ref{app:query_proofs}.
\end{lemma}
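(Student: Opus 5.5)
The plan is to expand $p$ into monomials and bound each one separately by its own supremum over the box. This gives a sum of per-term maxima, which dominates the maximum of the sum. Because $H$ is symmetric, we first write
\[
p(\alpha) = c + \sum_k a_k \alpha_k + \sum_k H_{kk}\alpha_k^2 + \sum_{k<\ell} 2H_{k\ell}\,\alpha_k\alpha_\ell .
\]
This regroups the off-diagonal pairs $H_{k\ell}\alpha_k\alpha_\ell + H_{\ell k}\alpha_\ell\alpha_k$ into a single coefficient $2H_{k\ell}$. With this grouping, every term depends on at most two factors, and its range over $[-1,1]^n$ is explicit.

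Next, we bound each class of terms for every $\alpha\in[-1,1]^n$.
\begin{itemize}
\item \emph{Linear terms.} Since $|\alpha_k|\le 1$, we have $a_k\alpha_k \le |a_k|$, and summing gives $\|\mathbf{a}\|_1$.
\item \emph{Diagonal terms.} Here $\alpha_k^2\in[0,1]$, which is the point where the squared structure is used. If $H_{kk}\ge 0$, then $H_{kk}\alpha_k^2\le H_{kk}$. If $H_{kk}<0$, then $H_{kk}\alpha_k^2\le 0$. Both cases together give $H_{kk}\alpha_k^2 \le \max(0,H_{kk})$.
\item \emph{Cross terms.} Since $|\alpha_k\alpha_\ell|\le 1$, we have $2H_{k\ell}\alpha_k\alpha_\ell\le 2|H_{k\ell}|$.
\end{itemize}
Adding these pointwise inequalities to $c$ gives $p(\alpha)\le \overline{p}$ for every feasible $\alpha$. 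Taking the supremum over the compact box then yields $\max_{\alpha} p(\alpha)\le\overline{p}$, and the maximum is attained because $p$ is continuous.

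No step here is a genuine obstacle; the only point needing care is bookkeeping. We must make sure the quadratic form is split into diagonal and strictly-upper off-diagonal parts without double counting, which is where the factor $2$ and the symmetry of $H$ enter. We must also make sure the diagonal terms are not bounded by $|H_{kk}|$, which would also be sound but is looser.

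For the remark that the bound is generally not tight, we would give a small witness. Take $n=2$, $\mathbf{a}=0$, and $H$ with $H_{12}=H_{21}=1$ and $H_{11}=H_{22}=-1$. Then $p(\alpha) = -(\alpha_1-\alpha_2)^2 \le 0$, whereas $\overline{p}=2$. The gap arises because the per-term maximisers conflict: the cross term wants $\alpha_1\alpha_2=1$, while the diagonal terms want $\alpha_k=0$. The empirical $\le 0.5\%$ gap is an experimental observation and is not part of the proof.
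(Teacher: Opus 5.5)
Your proposal is correct and follows the paper's own proof: use the symmetry of $H$ to split $p$ into linear, diagonal, and strictly-upper off-diagonal terms (which gives the factor $2$), then bound each term by its own supremum over the box, using $\alpha_k^2\in[0,1]$ to get $\max(0,H_{kk})$. Your non-tightness witness $p(\alpha)=-(\alpha_1-\alpha_2)^2$, which has maximum $0$ against $\overline{p}=2$, is valid and goes beyond what the paper supplies.
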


\begin{theorem}[Sound Top-1 via Score-Margin CPZ]
\label{thm:margin_sound}
Let $\Delta_j(\alpha) = a_j(x_0 {+} \epsilon \alpha) - a_{j^*}(x_0 {+} \epsilon \alpha)$, with $\alpha \in [-1,1]^{Sd}$ identified with the CPZ factor vector of Eq.~\eqref{eq:cpz}. CPZ propagation yields $\Delta_j(\alpha) = c_\Delta + \mathbf{a}^\top \alpha + \alpha^\top H \alpha + \beta^\top G_I \mathbf{1}$, where $H$ is the degree-$2$ part from $Q^\top K$ and $G_I \geq 0$ collects independent generators from any LayerNorm preceding $Q^\top K$. If $G_I=0$ (post-LN, e.g., BERT-tiny Layer-$0$), Lemma~\ref{lem:quadratic_bound} gives a closed-form upper bound $\overline{\Delta}_j$; if $G_I\ne 0$ (pre-LN, e.g., GPT-$2$), adding $\|G_I\|_1$ remains sound (the $G_I$ contribution is bounded by Theorem~\ref{thm:ln}). In either case, $\overline{\Delta}_j < 0$ for every challenger $j \neq j^*$ certifies $j^*$ as top-$1$ over $\mathcal{B}_\epsilon(x_0)$.
\end{theorem}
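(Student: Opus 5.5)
The proof chains three facts: the CPZ image of the perturbation ball contains every reachable score margin; the closed-form bound $\overline{\Delta}_j$ dominates that polynomial image; and softmax preserves the ordering of scores.

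\textbf{Step 1: soundness of the representation.} Every $x\in\mathcal{B}_\epsilon(x_0)$ can be written as $x_0+\epsilon\alpha$ with $\alpha\in[-1,1]^{Sd}$. This matches the initial CPZ with centre $x_0$, generator matrix $\epsilon I$ and identity exponents. Up to $Q^\top K$ the operations split into two kinds:
\begin{itemize}
\item Linear maps (embedding-space affine layers, $W_Q$, $W_K$) act exactly on CPZs.
\item The product $Q_i^\top K_j$ is computed with the exact $\odot$. Each of $Q_i,K_j$ is affine in $\alpha$, so the product is exactly a degree-$2$ polynomial $c+\mathbf{a}^\top\alpha+\alpha^\top H\alpha$. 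Symmetrising $H$ costs nothing.
\end{itemize}
Differencing $a_j-a_{j^*}$ with the same identifiers, via \operator{mergeID}, keeps this form. Hence for every $\alpha$ there is some $\beta\in[-1,1]^m$ with $\Delta_j(\alpha)$ equal to the CPZ expression.

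In the post-LN case no non-polynomial operation precedes $Q^\top K$, so $G_I=0$ and the equality is exact. In the pre-LN case, I would invoke Theorem~\ref{thm:ln}. It guarantees that the LayerNorm enclosure (polynomial part plus $G_I\beta$) contains the true LN output for each $\alpha$. Pushing that enclosure through the exact linear and bilinear steps produces the additive independent term in the statement. I read the statement's form as already lumping any $\beta$-dependent terms into $\beta^\top G_I\mathbf{1}$, and I take that as given.

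\textbf{Step 2: a bound on the margin.} Lemma~\ref{lem:quadratic_bound} gives $\max_\alpha(c_\Delta+\mathbf{a}^\top\alpha+\alpha^\top H\alpha)\le\overline{p}$. For the independent part, $|\beta^\top G_I\mathbf{1}|\le\|G_I\|_1$ whenever $\beta\in[-1,1]^m$ and $G_I\ge0$. Therefore
\[
\Delta_j(\alpha)\le\overline{\Delta}_j:=\overline{p}+\|G_I\|_1
\]
for all feasible $\alpha$, and $\overline{\Delta}_j=\overline{p}$ when $G_I=0$.

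\textbf{Step 3: from scores to top-$1$.} If $\overline{\Delta}_j<0$ for all $j\ne j^*$, then $a_{j^*}(x)>a_j(x)$ for every $x$ in the ball. Softmax is a strictly increasing function of each score against a common normaliser: $s_j/s_{j^*}=e^{a_j-a_{j^*}}<1$. So $s_{j^*}$ is the unique maximiser, which is exactly top-$1$ stability in the sense of Definition~\ref{def:topk_stability} with $k=1$.

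\textbf{Main obstacle.} The delicate step is the pre-LN case of Step 1. The outputs of Theorem~\ref{thm:ln} are multiplied inside $Q^\top K$, which could create $\beta\beta$ and $\alpha\beta$ cross terms that the stated form does not show. My first approach is to treat the $\odot$ product of the two enclosures as a CPZ over the joint factors $(\alpha,\beta)$. I would then bound every term involving $\beta$ by the absolute sum of its coefficients, over-approximating those terms into the independent generators. That keeps both soundness and the $\|G_I\|_1$ bound intact, with $G_I$ understood as this enlarged nonnegative matrix.
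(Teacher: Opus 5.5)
Your proposal is correct and matches the paper's proof essentially step for step: strict monotonicity of softmax reduces top-$1$ certification to $\sup_\alpha \Delta_j(\alpha)<0$, and exactness of the CPZ affine and bilinear algebra together with Lemma~\ref{lem:quadratic_bound} settles the post-LN case. In the pre-LN case both you and the paper push the LayerNorm enclosure's independent generators through $W_Q,W_K$ and bound them by their $\ell_1$ mass, and your explicit folding of the $\alpha\beta$ and $\beta\beta$ cross terms from $Q^\top K$ into an enlarged nonnegative $G_I$ makes rigorous a step the paper only asserts (that the substituted score already has the stated form with $G_I'$).
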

\paragraph{Layer-$0$ matches MC.} On a $d{=}16$ model ($10$ samples, $160$ queries/$\epsilon$), CPZ-margin matches MC+PGD across all $6$ perturbation radii (Tab.~\ref{tab:attention_stability}); the advantage over auto-LiRPA CROWN ranges from $7.5$\,pp to ${>}40$\,pp (mean ${\sim}28$\,pp) since interval arithmetic destroys $Q^\top K$ cross-correlations. Proof of Theorem~\ref{thm:margin_sound}: App.~\ref{app:query_proofs}.

\subsection{Q2: Evidence Mass Certification}
\label{sec:query_mass}

\begin{proposition}[Certified Evidence Mass]
\label{prop:evidence_mass}
Given evidence tokens $\mathcal{E} \subseteq \{1, \ldots, S\}$ and threshold $\tau$, the attention mass is certified if $\min_{s \in \Delta_S \cap [\underline{s}, \overline{s}]} \sum_{j \in \mathcal{E}} s_j \geq \tau$. The simplex LP greedily allocates budget to non-evidence tokens first; the simplex-constrained minimum is strictly higher than $\sum_{j \in \mathcal{E}} \underline{s}_j$. The proof is in App.~\ref{app:query_proofs}.
\end{proposition}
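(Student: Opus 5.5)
The proof has three parts: soundness of the certificate, a closed form for the LP minimum by a greedy argument, and a comparison with the naive bound $\sum_{j\in\mathcal{E}}\underline{s}_j$.

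\textbf{Soundness.} CPZ propagation gives $s(\delta)\in[\underline{s},\overline{s}]$ for every $\delta\in\mathcal{B}_\epsilon$. Softmax outputs are nonnegative and sum to one, so $s(\delta)\in\Delta_S$ exactly. The true weight vector is therefore feasible for the LP, and $\sum_{j\in\mathcal{E}}s_j(\delta)\ge \min_{s\in\Delta_S\cap[\underline{s},\overline{s}]}\sum_{j\in\mathcal{E}}s_j\ge\tau$. This step uses only the soundness of the bounds; the paper asserts simplex preservation in \S\ref{sec:pz_attention}. I would also note that the feasible set is nonempty, since it contains $s^0$.

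\textbf{Greedy solution.} Write $m=\sum_{j\in\mathcal{E}}s_j$ and $N=\{1,\dots,S\}\setminus\mathcal{E}$. Since $\sum_j s_j=1$, minimising $m$ is the same as maximising $\sum_{j\in N}s_j$. That sum is at most $\sum_{j\in N}\overline{s}_j$, and also at most $1-\sum_{j\in\mathcal{E}}\underline{s}_j$ because the evidence coordinates cannot go below their lower bounds. This gives the closed form
\[
m^\star=\max\Bigl(\textstyle\sum_{j\in\mathcal{E}}\underline{s}_j,\;1-\sum_{j\in N}\overline{s}_j\Bigr).
\]
To show the bound is attained, I would start from $s=\underline{s}$. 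This is valid because feasibility forces $\sum_j\underline{s}_j\le1\le\sum_j\overline{s}_j$. I would then raise the non-evidence coordinates toward $\overline{s}_j$ until the residual mass $1-\sum_j\underline{s}_j$ is used up. If non-evidence capacity runs out first, the leftover mass goes to evidence tokens, which gives the second term of the maximum. The work is a single pass over the coordinates, so it is $\Oc(S)$. This mirrors the greedy allocation in Prop.~\ref{prop:cert_attn}.

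\textbf{Main obstacle: strictness.} The statement's claim that the simplex minimum is \emph{strictly} higher than the naive bound is not true unconditionally. Equality holds whenever $\sum_{j\in N}\overline{s}_j\ge1-\sum_{j\in\mathcal{E}}\underline{s}_j$; a trivial example is $\mathcal{E}=\{1,\dots,S\}$, where both quantities equal $\sum_j\underline{s}_j$. So I would prove the weak inequality $m^\star\ge\sum_{j\in\mathcal{E}}\underline{s}_j$ in general. Strict improvement I would prove under the explicit condition $\sum_{j\in N}\overline{s}_j<1-\sum_{j\in\mathcal{E}}\underline{s}_j$, i.e., when the non-evidence upper bounds cannot absorb the residual mass. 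I would also remark that this condition is the typical regime for concentrated heads with tight $\overline{s}$ on non-evidence tokens.
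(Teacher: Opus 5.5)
Your proof is correct and follows the paper's route. The paper also obtains the minimum by greedily allocating the residual budget $D=1-\sum_j\underline{s}_j$ to non-evidence tokens first. Its closed form $\sum_{j\in\mathcal{E}}\underline{s}_j+\max\bigl(0,\,D-\sum_{j\notin\mathcal{E}}(\overline{s}_j-\underline{s}_j)\bigr)$ simplifies algebraically to your $\max\bigl(\sum_{j\in\mathcal{E}}\underline{s}_j,\,1-\sum_{j\in N}\overline{s}_j\bigr)$. Your soundness step and your lower-bound-plus-attainment argument supply details the paper leaves implicit.

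Your caveat on strictness is also right, and it is consistent with the paper's own formula, which only yields $\geq$. For example, take $S=2$, $\mathcal{E}=\{1\}$ and $s_1,s_2\in[0.3,0.7]$: the minimum is $0.3=\underline{s}_1$, attained at $s=(0.3,0.7)$. However, your ``trivial example'' $\mathcal{E}=\{1,\dots,S\}$ is backwards. There $N=\emptyset$, so your formula gives $m^\star=\max\bigl(\sum_j\underline{s}_j,1\bigr)=1$. This strictly exceeds $\sum_j\underline{s}_j$ unless the lower bounds already sum to one, so it is the wrong illustration. The degenerate equality case is $\mathcal{E}=\emptyset$, where both sides are $0$.
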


\begin{corollary}[Head Specialization]
\label{prop:head_spec}
A natural variant certifies head specialization: given disjoint token groups $\mathcal{A}, \mathcal{B}$, a head is certified $\mathcal{A}$-specialized if $\min_{s \in \Delta_S \cap [\underline{s}, \overline{s}]} (\sum_{j \in \mathcal{A}} s_j - \sum_{j \in \mathcal{B}} s_j) > 0$. The minimum is computed by the same greedy LP as Prop.~\ref{prop:evidence_mass} with coefficients $+1$ on $\mathcal{A}$ and $-1$ on $\mathcal{B}$.
\end{corollary}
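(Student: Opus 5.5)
The plan has two parts: first show that the condition in the statement really certifies $\mathcal{A}$-specialization, then show that the minimum can be computed by the same greedy LP as Proposition~\ref{prop:evidence_mass}.

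\textbf{Soundness.} For every $\delta \in \mathcal{B}_\epsilon$, CPZ propagation gives $s(\delta) \in [\underline{s}, \overline{s}]$, and softmax outputs always lie in $\Delta_S$. So the true attention vector $s(\delta)$ lies in $R_\epsilon = \Delta_S \cap [\underline{s}, \overline{s}]$. Hence $\sum_{j\in\mathcal{A}} s_j(\delta) - \sum_{j\in\mathcal{B}} s_j(\delta)$ is at least the minimum over $R_\epsilon$. If that minimum is positive, the margin is positive for every perturbation, which is exactly the specialization claim.

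\textbf{Computing the minimum.} Write the objective as $w^\top s$ with $w_j = 1$ on $\mathcal{A}$, $w_j = -1$ on $\mathcal{B}$, and $w_j = 0$ elsewhere. Substitute $s = \underline{s} + t$, with $0 \le t \le \overline{s} - \underline{s}$ and $\sum_j t_j = B \coloneqq 1 - \sum_j \underline{s}_j$. Feasibility requires $B \ge 0$ and $\sum_j (\overline{s}_j - \underline{s}_j) \ge B$, which holds because $s(\delta_0)$ lies in $R_\epsilon$. The problem is now a continuous knapsack: minimize $w^\top \underline{s} + w^\top t$ subject to a single equality on the total slack and box bounds. The standard exchange argument shows that the optimum fills slack in increasing order of $w_j$. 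So the greedy fills $\mathcal{B}$ tokens first (coefficient $-1$), then the neutral tokens, then $\mathcal{A}$ tokens, each up to its upper bound, until the budget $B$ is used up. The exchange argument runs as follows: if some optimal $t$ had $t_j > 0$ while a token $j'$ with $w_{j'} < w_j$ had unused slack, moving mass from $j$ to $j'$ would not increase the objective. Since there are only three distinct coefficient values, tokens within a class are interchangeable and no sorting beyond grouping is needed. The cost is therefore $\Oc(S)$.

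\textbf{Main obstacle.} The only point needing care is the difference from Proposition~\ref{prop:evidence_mass}: there the coefficients were $\{1,0\}$, while here a negative class appears. The exchange argument must be checked to hold with arbitrary ordered coefficients, and it does, since it only uses the ordering of the $w_j$. Alternatively, one can verify optimality through LP duality. Take the dual variable $\lambda$ of the equality constraint to be the coefficient of the token class where the budget runs out. Then every token below that class has reduced cost $w_j - \lambda \le 0$ and sits at its upper bound, and every token above it has reduced cost $\ge 0$ and sits at its lower bound. This satisfies complementary slackness and closes the proof.
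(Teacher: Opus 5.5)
Your proposal is correct and follows the paper's own argument. The paper's proof is exactly the greedy rule you derive: the slack budget goes to $\mathcal{B}$ first, then to the neutral tokens, then to $\mathcal{A}$ last. Your soundness step and the exchange/LP-duality optimality check add justification that the paper's one-line proof leaves implicit.
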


In experiments, $\tau$ expresses concentration above uniform $|\mathcal{E}|/S$: $|\mathcal{E}|{=}1$ uses $\tau{=}0.2$ ($1.6{\times}$ uniform), $|\mathcal{E}|{=}2$ uses $\tau{=}0.3$ ($1.2{\times}$). $\mathcal{E}$ is the clean-input top-attended position (Apps.~\ref{sec:bert},~\ref{sec:gpt2}) or fixed lookup $\{0,1\}$ (\S\ref{sec:exp_scaling}).

\subsection{Q3: Attention Entropy Certification}
\label{sec:query_entropy}

Q1--Q2 certify discrete structural properties; Q3 adds a continuous concentration measure via Shannon entropy $H(s) = -\sum_j s_j \log s_j$.
\begin{proposition}[Certified Attention Entropy Bounds]
\label{prop:entropy}
Given softmax bounds $s_j \in [\underline{s}_j, \overline{s}_j]$ with the simplex constraint, the certified entropy range $[\underline{H}, \overline{H}]$ is computed in $\Oc(S \log S)$: the lower bound enumerates vertices of the bounded simplex (concavity of $H$), and the upper bound solves a KKT bisection over the Lagrange multiplier $\lambda$ with $s_j^*(\lambda) = \min(\overline{s}_j, \max(\underline{s}_j, e^{\lambda - 1}))$. Without the simplex constraint, each $s_j$ varies independently in $[\underline{s}_j, \overline{s}_j]$, so the bounds collapse to the trivial range $[\underline{H}_{\mathrm{box}},\; \log S]$, where $\underline{H}_{\mathrm{box}} = 0$ whenever some $\underline{s}_j$ allows a degenerate point distribution; the upper bound $\log S$ is the uniform-distribution maximum and is reachable iff $1/S \in [\underline{s}_j, \overline{s}_j]$ for all $j$. Details are in App.~\ref{app:query_proofs}.
\end{proposition}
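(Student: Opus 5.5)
The plan is to treat the two halves separately: the lower bound (minimising a concave function over a polytope) and the upper bound (maximising a concave function, a convex program solved through KKT). Throughout, write $R = \Delta_S \cap [\underline{s},\overline{s}]$ and assume $R \neq \emptyset$, i.e.\ $\sum_j \underline{s}_j \le 1 \le \sum_j \overline{s}_j$; an empty $R$ would already signal unsound bounds. $H$ is strictly concave on the nonnegative orthant, with the convention $0\log 0 = 0$.

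\textbf{Lower bound.} A concave function attains its minimum over a compact polytope at a vertex, so $\underline{H} = \min_{v \in \mathrm{vert}(R)} H(v)$. I would characterise the vertices of $R$: since $R$ lies in the hyperplane $\sum_j s_j = 1$, a vertex has $S-1$ active box constraints. So all coordinates but at most one, the ``free'' index $f$, sit at a bound, and $s_f = 1 - \sum_{j\ne f} s_j$ must lie in $[\underline{s}_f,\overline{s}_f]$. This alone gives exponentially many vertices. To reach $\Oc(S\log S)$, I would add an exchange argument based on Schur-concavity of $H$: moving mass from a smaller coordinate to a larger one decreases $H$. Hence the minimiser fills coordinates greedily in some order. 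It starts from $\underline{s}$ and raises coordinates to $\overline{s}_j$ one by one until the residual budget $1-\sum_j\underline{s}_j$ is exhausted, with the last one partially filled.

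The step I expect to be the main obstacle is pinning down the right order. It is not obvious in general, because coordinates differ in both $\underline{s}_j$ and $\overline{s}_j$. I would first try sorting by $\overline{s}_j$ in decreasing order, and then check via majorisation that the resulting vector majorises every other feasible point. If that fails, the fallback is to enumerate the $S$ choices of free index and evaluate each greedy candidate with prefix sums after one sort. That keeps the cost at $\Oc(S\log S)$ and yields a sound bound, since every candidate is feasible and the true minimiser is among the vertices examined.

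\textbf{Upper bound.} Maximising the strictly concave $H$ over the convex set $R$ has a unique maximiser, and Slater/linear constraints make KKT necessary and sufficient. Form the Lagrangian with multiplier $\lambda$ for $\sum_j s_j = 1$ and multipliers for the box constraints. Stationarity $-\log s_j - 1 + \lambda = 0$ on inactive coordinates gives $s_j = e^{\lambda-1}$, and complementary slackness clips this to $s_j^*(\lambda) = \min(\overline{s}_j,\max(\underline{s}_j,e^{\lambda-1}))$. The map $\lambda \mapsto \sum_j s_j^*(\lambda)$ is continuous and nondecreasing. It runs from $\sum_j\underline{s}_j \le 1$ to $\sum_j\overline{s}_j \ge 1$, so a root exists and bisection finds it. Alternatively, sort the $2S$ breakpoints $\log\underline{s}_j+1$ and $\log\overline{s}_j+1$ and solve exactly on the correct piece in $\Oc(S\log S)$. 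One caveat: if $\underline{s}_j = 0$, the derivative of $H$ is infinite there, so the optimum has $s_j>0$ and the formula still applies.

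\textbf{Box comparison.} Without the simplex constraint, the infimum of $H$ over the box reaches $0$ whenever a point mass $e_j$ is representable, i.e.\ $\overline{s}_j = 1$ and $\underline{s}_k = 0$ for all $k\neq j$. The supremum is $\log S$, attained iff the uniform distribution lies in the box, i.e.\ $1/S \in [\underline{s}_j,\overline{s}_j]$ for every $j$. Both facts follow directly from the fact that the maximum of $H$ on $\Delta_S$ is $\log S$, attained only at the uniform distribution. Since $R$ is a subset of the box, the simplex range $[\underline{H},\overline{H}]$ is contained in this box range.
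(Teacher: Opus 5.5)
Your upper-bound argument is correct and follows the paper's route: clipped water-filling $s_j^*(\lambda)$ with a monotone $\lambda\mapsto\sum_j s_j^*(\lambda)$. The breakpoint-sorting variant is what actually delivers $\mathcal{O}(S\log S)$, and you rightly allow the maximiser to lie on the boundary. The genuine gap is the lower bound.

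First, the Schur-concavity exchange only shows that at a minimiser no feasible transfer from a smaller to a larger coordinate exists. That is a local condition and does not produce a fixed filling order. Your proposed order already fails for $S=3$, $s_1\in[0,0.6]$, $s_2,s_3\in[0.2,0.5]$:
\begin{itemize}
\item Sorting by $\overline{s}_j$ fills $s_1$ first and returns $(0.6,0.2,0.2)$ with $H\approx 0.950$.
\item The point $(0,0.5,0.5)\in R$ has $H=\log 2\approx 0.693$.
\end{itemize}
Both vertices are exchange-stable. $R$ also has no majorisation-maximal element: top-$1$ sum $0.6$ forces $(0.6,0.2,0.2)$, whose top-$2$ sum $0.8$ is below the attainable $1.0$. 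So the majorisation check you plan cannot succeed, and reporting $0.950$ as $\underline{H}$ would be an unsound certificate.

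Second, the fallback does not repair this, and its soundness argument runs the wrong way. The minimum of $H$ over a family of \emph{feasible} candidates is $\ge \min_R H$, i.e.\ an upper bound on the quantity you must lower-bound. Soundness needs the true minimiser to be among the prefix-type vertices you examine, which is exactly what is unproved, and it is false in general.

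To see this, choose intervals on which the chords of $h(x)=-x\log x$ over every $[\underline{s}_j,\overline{s}_j]$ share a common slope $\sigma$. Then on $R$ you have $H(s)\ge\sum_j h(\underline{s}_j)+\sigma\,(1-\sum_j\underline{s}_j)$. Equality holds iff every $s_j$ sits at an endpoint, i.e.\ iff some subset of the widths $\overline{s}_j-\underline{s}_j$ sums exactly to the budget. So exact minimisation encodes Subset Sum, and no fixed $\mathcal{O}(S)$-candidate rule can be exact.

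The paper's own proof just asserts ``greedily saturating bounds'' without fixing an order, so it does not close this step either. The same inequality suggests the sound repair:
\begin{itemize}
\item Replace $h$ on each $[\underline{s}_j,\overline{s}_j]$ by its chord; the sum of chords is the convex envelope of $H$ over the box.
\item Minimise that linear function over $R$ by allocating the budget $1-\sum_j\underline{s}_j$ greedily in increasing order of chord slope.
\end{itemize}
This costs $\mathcal{O}(S\log S)$ and yields a valid, generally non-tight, $\underline{H}$. An exact $\underline{H}$ requires full vertex enumeration, exponential in $S$ (harmless at $S=4$).

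A smaller point on the box comparison: the literal supremum of $-\sum_j s_j\log s_j$ over the box is $\sum_j\max_{[\underline{s}_j,\overline{s}_j]}h$. This can exceed $\log S$ (up to $S/e$) or lie far below it. The characterisation ``$\log S$ is attained iff $1/S\in[\underline{s}_j,\overline{s}_j]$ for all $j$'' is a fact about $\max_R H$ and uses $\Delta_S$, so you should state which range you mean.
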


\paragraph{Capability gap.} Q1--Q2 are linear and Q3 concave on $\Delta_S \cap [\underline{s}, \overline{s}]$; without the simplex, all weights can saturate their upper bounds simultaneously, violating $\sum_j s_j{=}1$. This is qualitative, not merely tightness:
\begin{theorem}[Certification Separation]
\label{thm:separation}
For any $S \geq 2$ and simplex excess $E = \sum_j \overline{s}_j - 1 > 0$, there exist bounds $[\underline{s}, \overline{s}]$ where unconstrained certification fails but the simplex LP succeeds, with margin gap $\geq E/(S{-}1)$. \emph{Witness} ($S{=}3$, $E{=}0.6$): $s_1\!\in\![0.5,0.7]$, $s_2\!\in\![0.3,0.5]$, $s_3\!\in\![0.2,0.4]$ gives unconstrained margin $\overline{s}_2-\underline{s}_1{=}0$ (fails) vs.\ simplex-constrained $\max(s_2-s_1){=}-0.2$ (certifies). Proof: App.~\ref{app:query_proofs}.
\end{theorem}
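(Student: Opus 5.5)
The plan is to give an explicit parametric family of bounds for every $S \geq 2$ and every admissible excess $E$, rather than to generalise the $S{=}3$ witness directly. The key device is to choose lower bounds with $\sum_j \underline{s}_j = 1$. Then $\Delta_S \cap [\underline{s}, \overline{s}]$ collapses to the single point $\underline{s}$: any feasible $s$ has $s \geq \underline{s}$ and the same total mass, so $s = \underline{s}$. This makes the simplex LP of Prop.~\ref{prop:cert_attn} trivially solvable, with $\max_{s}(s_2 - s_1) = \underline{s}_2 - \underline{s}_1$, while the box test only sees $\overline{s}_2 - \underline{s}_1$. The whole excess $E$ is then free to inflate the upper bounds without changing the simplex value. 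Note that $\overline{s}_j \leq 1$ forces $E \leq S-1$, so "any $E$" should be read as $E \in (0, S-1]$.

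Concretely, I take target $j^*=1$ and challenger $2$, and set
\begin{equation*}
\underline{s}_2 = 0, \qquad \overline{s}_2 = \underline{s}_1 = \tfrac{E}{S-1}.
\end{equation*}
The remaining lower mass $1 - \tfrac{E}{S-1} \geq 0$ goes to coordinates $3,\dots,S$ (when $S \geq 3$), so that $\sum_j \underline{s}_j = 1$. Interval widths $w_j = \overline{s}_j - \underline{s}_j$ are then chosen with $w_2 = \tfrac{E}{S-1}$ and $\sum_{j\neq 2} w_j = E - \tfrac{E}{S-1} = \tfrac{E(S-2)}{S-1}$, subject to $\overline{s}_j \leq 1$. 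The capacity available on coordinates $j \neq 2$ is
\begin{equation*}
\sum_{j \neq 2}(1 - \underline{s}_j) = (S-1) - 1 = S-2,
\end{equation*}
which is at least $\tfrac{E(S-2)}{S-1}$ exactly when $E \leq S-1$, so a valid allocation exists; for $S=2$ nothing remains to distribute. By construction $\sum_j \overline{s}_j - 1 = \sum_j w_j = E$.

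With this family the verification is short. The unconstrained test of Prop.~\ref{prop:cert_attn} requires $\underline{s}_1 > \overline{s}_2$, but here $\overline{s}_2 - \underline{s}_1 = 0$, so it fails. The simplex LP value is $\underline{s}_2 - \underline{s}_1 = -\tfrac{E}{S-1} < 0$, and for any other challenger $k \geq 3$ it is $\underline{s}_k - \underline{s}_1$; I would adjust the split of the lower mass so that each $\underline{s}_k$ is below $\underline{s}_1$ (or simply argue for the pair $(1,2)$ if the margin is defined per pair). Hence the simplex LP certifies, and the margin gap is exactly $\tfrac{E}{S-1}$, which meets the claimed bound.

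The step I expect to be the main obstacle is reconciling the statement with its own witness. In the $S{=}3$ example the lower bounds also sum to $1$, so the feasible set is the point $(0.5,0.3,0.2)$ and the gap is $0.2$, while $E/(S-1) = 0.3$. The witness therefore illustrates the qualitative separation but not the quantitative bound. I would present the parametric family above as the actual proof and keep the witness only as an illustration of failure versus success. The secondary subtlety is the boundary case $E$ close to $S-1$, where many upper bounds hit $1$ and the lower-mass split must keep every non-target $\underline{s}_k$ below $\underline{s}_1$. If that tension appears, I would state the margin for the designated challenger pair.
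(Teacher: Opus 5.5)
Your point-collapse device is sound: $\sum_j \underline{s}_j = 1$ pins $\Delta_S \cap [\underline{s},\overline{s}]$ to $\{\underline{s}\}$. You are also right that the stated witness only achieves gap $0.2 < E/(S-1) = 0.3$. The gap is in your specific family, because fixing $\underline{s}_2 = 0$ breaks it in two regimes.

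For $S=2$, the leftover lower mass $1-E$ has nowhere to go, so $\sum_j \underline{s}_j = E$ and $\sum_j \overline{s}_j = 2E$. Unless $E=1$, the collapse to $\{\underline{s}\}$ fails and the excess is $2E-1$ rather than $E$; for $E<1/2$ the box does not even meet the simplex.

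For $S \geq 3$, certification must beat every challenger, since Prop.~\ref{prop:cert_attn} takes $\max_{k\neq j^*}$. The split you promise to ``adjust'' does not exist when $E \leq 1$. The $S-2$ values $\underline{s}_k$, $k \geq 3$, sum to $1-\frac{E}{S-1}$, so the largest is at least $\frac{S-1-E}{(S-1)(S-2)}$. That is $\geq \frac{E}{S-1} = \underline{s}_1$ exactly when $E \leq 1$. At the witness parameters $S=3$, $E=0.6$, your family forces $\underline{s} = (0.3,0,0.7)$, so token $3$ is the argmax of the only feasible point and nothing is certified. The obstruction is small $E$, not $E$ near $S-1$ as you suspected. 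The per-pair fallback is not a valid way out either: it would ``certify'' a token that is not top-$1$.

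The repair is to let the challengers' lower bound float instead of fixing it at $0$. Take $\overline{s}_k = \underline{s}_1 = \frac{1+E}{S}$ for all $k$ and $\underline{s}_k = \frac{S-1-E}{S(S-1)}$ for $k \geq 2$. Then $\sum_j \underline{s}_j = 1$ and $\sum_j \overline{s}_j = 1+E$, and all bounds lie in $[0,1]$ iff $E \leq S-1$. The box margin $\max_{k\neq 1}\overline{s}_k - \underline{s}_1$ is $0$, while the simplex margin is $-\frac{E}{S-1}$ against every challenger. So the gap is exactly $E/(S-1)$ for every $S \geq 2$.

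This route generalises the paper's witness, not its appendix proof. The appendix uses $\underline{s}_k = \delta$ for all $k$ and asserts that the simplex pushes $s_1$ up to $\overline{s}_1$. Under the greedy allocation in the proof of Prop.~\ref{prop:cert_attn}, however, the budget goes to the challengers first. In the worst case $s_1 = 1-(S-1)\delta-E$ rather than $\overline{s}_1$, so that construction certifies only when $S\delta + \frac{SE}{S-1} - 1 < 0$, i.e.\ $E < (S-1)(1-S\delta)/S$. Once repaired, your point-collapse construction covers the whole range $E \in (0, S-1]$, which the paper's construction does not.
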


\section{Scaling to Pretrained Transformers}
\label{sec:scaling}

\begin{figure}[t]
\centering
\includegraphics[width=\linewidth]{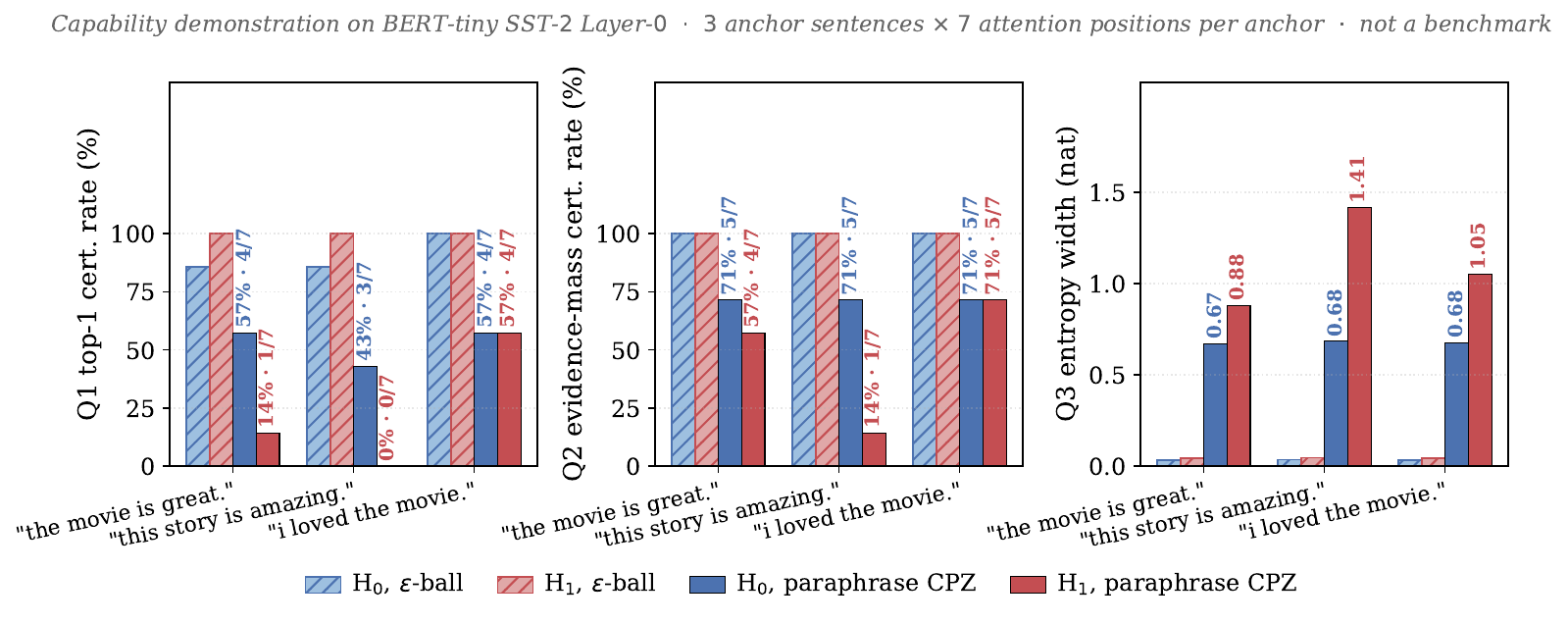}
\caption{\textbf{$\epsilon$-ball certification cannot distinguish BERT-tiny heads that a structured CPZ does.} BERT-tiny SST-$2$ Layer-$0$, three $5$-sentence paraphrase sets. Hatched bars ($\epsilon{=}0.002$ ball, App.~\ref{app:emb_calibration}): both heads saturated on Q1/Q2/Q3, indistinguishable. Solid bars (CPZ over the $5$-paraphrase polytope, construction in \S\ref{sec:exp_scaling}): \textbf{left}~Q1 top-$1$ stability, H$_0$ $43$--$57\%$ vs.\ H$_1$ $0$--$57\%$; \textbf{middle}~Q2 evidence mass ($\tau{=}0.2$), $71\%$ vs.\ $14$--$71\%$; \textbf{right}~Q3 entropy width, ${\sim}0.68$ vs.\ $0.88$--$1.41$ nat.}
\label{fig:paraphrase}
\end{figure}

Lifting the certificates of \S\ref{sec:cert_queries} through depth to BERT-tiny / GPT-$2$ scale needs: propagation rules exploiting transformer structural identities (\S\ref{sec:pz_attention}), a layer connector without per-layer generator blow-up (\S\ref{sec:jac_zono}), a sound verification pipeline (\S\ref{sec:machinery}), and empirical validation (\S\ref{sec:exp_scaling}). Fig.~\ref{fig:overview} shows the flow.

\subsection{CPZ Propagation Through the Transformer Block}
\label{sec:pz_attention}

We build one CPZ per token and propagate through the block. Affine maps and bilinear $Q^\top K$ are exact under the CPZ algebra of \S\ref{sec:background} when no LayerNorm precedes $Q^\top K$ (post-LN, e.g., BERT-tiny); pre-LN architectures (GPT-$2$) add a $G_I$ term per Theorem~\ref{thm:margin_sound}. Softmax, LayerNorm output, and ReLU/GELU introduce relaxation. The framework accepts any CPZ-representable perturbation set, unlike axis-aligned IBP/CROWN.

\paragraph{Structural-identity preservation.} For top-$1$, certification reduces to $\overline{\Delta}_j<0$ on the score-margin polynomial via Theorem~\ref{thm:margin_sound}, bypassing softmax; for Q2/Q3 we encode the simplex as a linear CPZ constraint with the $\Oc(S\log S)$ LP of Prop.~\ref{prop:lp_simplex}, persisting through all CPZ operations after \operator{mergeID} alignment (Theorem~\ref{thm:constraint_preservation}). LayerNorm contributes the algebraic identity $\sum_k z_k^2 = d$ ($z=(x-\mu)/\sigma$, $\epsilon_{\mathrm{LN}}{=}10^{-5}$); Theorem~\ref{thm:ln} gives $|y_k-\beta_k|\le|\gamma_k|\sqrt{d-1}$, and Theorem~\ref{thm:width_stability} bounds per-layer width by $2|\gamma_k^{(\ell)}|\sqrt{d-1}$ independent of $\epsilon$, depth, and accumulated generators (App.~\ref{app:sec4_proofs}).

\paragraph{Other components.} Attention output uses center--residual at Layer-$0$ or Taylor CPZ at depth (App.~\ref{app:attn_out_taylor}); FFN activations (ReLU/GELU) use a per-neuron hybrid quadratic/DeepPoly relaxation. CPZ soundness is MC-independent; MC+PGD budgets in App.~\ref{app:details}.

\subsection{Recursive Jacobian-Zonotope Connector}
\label{sec:jac_zono}

For Layer~$\ell{>}0$ we need a CPZ enclosure of $f=\text{Layer}_0\circ\dots\circ\text{Layer}_{\ell-1}$. Diagonal-generator CPZs destroy cross-dimensional correlations; we instead build the Jacobian zonotope from $\epsilon J(x_0)$:
\begin{equation}
\label{eq:jac_zono}
\cpz_t = \bigg\{ f_t(x_0) + \sum_{j \in \mathcal{K}} \alpha_j \,\epsilon J_t^{(\cdot,j)} + \alpha_{\mathrm{rem},t}\, r_t \,\Big|\, \alpha_j, \alpha_{\mathrm{rem},t} \in [-1,1] \bigg\},
\end{equation}
where $\mathcal{K}$ retains the top-$K_J$ columns by $\ell_1$ norm and $r_t$ absorbs dropped columns plus linearisation error (App.~\ref{app:machinery}); $K_J$ denotes the Jacobian-zonotope rank, distinct from the order-reduction parameter $k$ in \S\ref{sec:machinery}. Three design choices control tightness: shared factor IDs across tokens enable cross-token cancellation in $Q^\top K$; the remainder is dependent (cancels in differences); one remainder per token caps generators at $K_J{+}1$. $K_J{=}64$ (BERT-tiny) and $K_J{=}128$ (GPT-2) yield near-tight certificates; a $K_J\in\{32,\dots,1024\}$ sweep on BERT-tiny output verification finds the certified rate stable at $90$--$96.7\%$ (App.~\ref{app:scalability_bert}). Jacobian-zonotope linearisation is standard in reachability~\citep{althoff2013reachability}; the contribution is the transformer-adapted shared-ID + per-token-remainder construction combined with CPZ constraint injection.

\paragraph{Sound-with-fallback.} Every operation in \S\ref{sec:pz_attention} gives closed-form sound bounds; the Jacobian connector adds Prop.~\ref{prop:remainder_analytical}'s closed-form spectral Lipschitz fallback. Production uses a tighter sampled estimate with $1.5{\times}/2.0{\times}$ safety factors (BERT-tiny / GPT-$2$ end-to-end), calibrated against MC+PGD boundary cases (App.~\ref{app:output_verification}); the sampled estimate is $\leq 1.33\%$ of the analytical envelope on the $4$-layer synthetic transformer (Tab.~\ref{tab:analytical_remainder}).

\subsection{Verification Pipeline and Soundness Boundary}
\label{sec:machinery}

Exact CPZ multiplication produces $h_1 h_2$ cross-terms, prohibitive in deep pipelines. We apply the standard CORA order reduction~\citep{kochdumper2023constrained}: sort cross-terms by $\|g_\ell\|_1$, retain top-$k$ as dependent (preserving polynomial structure), Girard-reduce~\citep{girard2005reachability} the remainder into $G_{I,\text{trunc}}=\diag(\sum_{\ell>k}|g_\ell|)$. $k{=}h_1 h_2$ recovers exactness, $k{=}0$ recovers IBP; combined with constraint tightening, $k{=}16$ suffices for near-exact precision. Algorithm~\ref{alg:verify} gives the pipeline; complexity is $\Oc(L(S^2 d r^2 + S d d_{\mathrm{ff}} r))$ vs.\ $\Oc(L S^2 d^2)$ for IBP (Prop.~\ref{prop:complexity}, App.~\ref{app:machinery}).

\paragraph{Soundness boundary.} All certified results are sound; the boundary marks tightness. Layer-$0$ Q1 (Theorem~\ref{thm:margin_sound}+Props.~\ref{prop:evidence_mass},\ref{prop:entropy} for post-LN; with $G_I$ from Theorem~\ref{thm:ln} for pre-LN GPT-$2$) and pruning Step~2 are fully analytical; multi-layer and end-to-end rely on the sampled remainder above. No MC+PGD-found counterexample across $450$ (sample, $\epsilon$) pairs ($100$ BERT-tiny + $50$ GPT-$2$ at $3$ radii).

\subsection{BERT-tiny and GPT-2: Cross-Paper Replication and Scaling}
\label{sec:exp_scaling}

We probe GPT-2-small Layer~$0$ to (i) test \citet{wang2023interpretability}'s Duplicate-Token-Head claim, and (ii) show the three queries measure independent aspects of attention robustness. Layer-$0$ Q1 uses Theorem~\ref{thm:margin_sound} (bilinear for post-LN BERT-tiny; with $G_I$ for pre-LN GPT-$2$) with no Jacobian remainder; Q2/Q3 use Props.~\ref{prop:evidence_mass}--\ref{prop:entropy}.

\paragraph{Cross-paper validation (60 IOI prompts).} \citet{wang2023interpretability} label GPT-$2$-small Layer-$0$ heads $0.1$ and $0.10$ as Duplicate-Token Heads. We construct $60$ IOI-pattern sentences with a duplicated proper noun and certify Q1 at the second-duplicate position. Clean attention confirms Wang et al.\ ($0.10$: $60/60$, $0.1$: $36/60$); CPZ adds a robustness split (Tab.~\ref{tab:wang_probe}, left): at $\epsilon{=}5{\times}10^{-4}$ head $0.10$ retains $100\%$ certified Q1 while $0.1$ collapses to $1.7\%$; at $\epsilon{=}10^{-3}$, $0.10$ holds $95\%$; at $\epsilon{=}2{\times}10^{-3}$ all four candidates fail (probe not vacuously passing). Wang's clean classification is correct, but the heads differ by a $98$\,pp robustness gap invisible to single-input inspection.

\paragraph{Multi-query orthogonality (30 induction prompts).} On $30$ sequences with last-token=first, $10$ of $12$ Layer-$0$ heads pass all three queries; the remaining two show orthogonal failure modes (Tab.~\ref{tab:induction_probe}): $\mathbf{H_5}$ passes Q2/Q3 but fails Q1 (drifting target); $\mathbf{H_{10}}$ passes Q1/Q2 but fails Q3 (drifting concentration). Both heads pass single-input checks; only the three queries together separate them.

\paragraph{Cross-head circuit certification (60 IOI prompts).} The per-head AND construction of Remark~\ref{rem:crosshead} lifts the probe to circuit level on the same $60$ IOI prompts (Tab.~\ref{tab:wang_probe}, right). Wang et~al.'s published DTH pair $\{0.1,0.10\}$ certifies $1/60$ at $\epsilon{=}5{\times}10^{-4}$ and $0/60$ at larger $\epsilon$, with head $0.1$ acting as the joint-robustness bottleneck (it still passes clean-input attention; left). Empirical-stable pairs do certify jointly: $\{0.3,0.4\}$ holds $60/60$ at $\epsilon{\leq}10^{-3}$ and $54/60$ at $\epsilon{=}2{\times}10^{-3}$; the triple $\{0.3,0.4,0.9\}$ holds $60/60$, $59/60$, $45/60$. Joint $\epsilon$-robustness is therefore strictly stronger than clean-input agreement and adds a robustness lens on top of (not in conflict with) discovery-time circuit identification.

\begin{table}[h]
\centering
\caption{Per-head (left) and cross-head circuit (right) Q1 cert rates on the same $60$ IOI prompts. $^*$ marks Wang-classified DTHs. The Wang pair satisfies clean attention but not joint $\epsilon$-robustness; empirically-stable pairs/triples certify jointly at near-saturation.}
\label{tab:wang_probe}
\label{tab:cross_head_circuit}
\footnotesize
\setlength{\tabcolsep}{3pt}
\renewcommand{\arraystretch}{0.92}
\begin{minipage}[t]{0.52\linewidth}\centering
\begin{tabular}{lcccc}
\toprule
Head & Clean & $5{\times}10^{-4}$ & $10^{-3}$ & $2{\times}10^{-3}$ \\
\midrule
$0.1^*$  & $36/60$ & $\mathbf{1.7}$ & $\mathbf{0}$  & $0$ \\
$0.5$    & $59/60$ & $58$           & $3$           & $0$ \\
$0.6$    & $31/60$ & $47$           & $7$           & $0$ \\
$0.10^*$ & $60/60$ & $\mathbf{100}$ & $\mathbf{95}$ & $0$ \\
\bottomrule
\end{tabular}
\end{minipage}\hfill
\begin{minipage}[t]{0.46\linewidth}\centering
\begin{tabular}{lccc}
\toprule
Circuit & $5{\times}10^{-4}$ & $10^{-3}$ & $2{\times}10^{-3}$ \\
\midrule
$\{0.1,0.10\}$    & $1/60$  & $0/60$  & $0/60$ \\
$\{0.3,0.4\}$     & $\mathbf{60/60}$ & $\mathbf{60/60}$ & $\mathbf{54/60}$ \\
$\{0.3,0.4,0.9\}$ & $\mathbf{60/60}$ & $\mathbf{59/60}$ & $\mathbf{45/60}$ \\
\bottomrule
\end{tabular}
\end{minipage}
\end{table}

\paragraph{Method comparison across scales.} Tab.~\ref{tab:main_results} summarises headline cert rates: CPZ is the only method whose Q1 rate stays above $40\%$ on GPT-$2$ Layer-$1$. End-to-end verification at $124$M is, to our knowledge, not previously reported. Scaling to GPT-$2$ medium ($355$M, $d{=}1024$) keeps CPZ sound at $41.1\%$; the gap to MC's $90.9\%$ and the ${\sim}2700$\,s/sample reflect a wider-generator cost ($1.6\%$ of $8{,}192$ Jacobian columns retained at $K_J{=}128$, vs.\ $2.1\%$ at $d{=}768$), not a failure mode (Tab.~\ref{tab:gpt2_medium}). The Jacobian--zonotope connector (\S\ref{sec:jac_zono}) is why CPZ continues to track MC where interval arithmetic loses $Q$-$K$ correlations.

\begin{table}[t]
\centering
\caption{\textbf{Main results.} Top-$1$ attention/output stability (\%) across model sizes; ``--'' = baseline does not apply at this scale. MC = MC+PGD, empirical upper bound. CROWN $<$ IBP at GPT-$2$ Layer-$1$ as input-dependent argmax explodes CROWN's softmax back-substitution at $d{=}768$ (App.~\ref{app:alphacrown_comparison}). $95\%$ Clopper--Pearson CIs in App.~\ref{app:details}.}
\label{tab:main_results}
\footnotesize
\setlength{\tabcolsep}{4pt}
\renewcommand{\arraystretch}{0.9}
\begin{tabular}{l c r r r r r}
\toprule
Setting & $\epsilon$ & \textbf{CPZ} & MC & CROWN & IBP & sec/sample \\
\midrule
\multicolumn{7}{l}{Internal Layer queries (Q1 top-$1$ attention stability)} \\
Synthetic $d{=}8$, Layer-$0$  & $2{\times}10^{-2}$ & $\mathbf{85.0}$ & $85.0$ & $72.5$ & $57.5$ & $<\!0.1$ \\
BERT-tiny ($d{=}128$), Layer-$1$ & $2{\times}10^{-3}$ & $\mathbf{92.5}$ & $95.0$ & $51.2$ & $\phantom{0}3.8$ & $0.7$ \\
GPT-$2$ ($d{=}768$), Layer-$0$  & $1{\times}10^{-3}$ & $\mathbf{70.2}$ & $72.7$ & $43.5$ & $52.3$ & $\sim\!5$ \\
GPT-$2$ ($d{=}768$), Layer-$1$  & $1{\times}10^{-3}$ & $\mathbf{47.9}$ & $78.1$ & $\phantom{0}0.2$ & $26.9$ & $\sim\!22$ \\
GPT-$2$ medium ($d{=}1024$), Layer-$1$ & $5{\times}10^{-4}$ & $\mathbf{41.1}$ & $90.9$ & $\phantom{0}0.8$ & $10.7$ & $\sim\!2700$ \\
\midrule
\multicolumn{7}{l}{End-to-end output verification (full classification / next-token)} \\
BERT-tiny SST-$2$ output       & $2{\times}10^{-3}$ & $\mathbf{95.0}$ & $98.0$ & $80.0^{\dagger}$ & -- & $0.7$ \\
GPT-$2$ next-token, top-$K{=}20$ direct$^{\ddagger}$  & $2{\times}10^{-3}$ & $\mathbf{90.0}$ & $90.0$ & --     & --     & $\sim\!18$ \\
\bottomrule
\end{tabular}\\[2pt]
{\scriptsize $^{\dagger}$ $\alpha$-CROWN on quick-GELU surrogate (not like-for-like; App.~\ref{app:alphacrown_comparison}). $^{\ddagger}$ Direct CPZ argmax over top-$K{=}20$ at $x_0$ ($45/50$); a $2.0{\times}$-safety Lipschitz tail on the remaining ${\sim}50{,}237$ entries tightens to $20/50$ full-vocabulary ($40\%$, tail-bound dominated; App.~\ref{app:output_verification}).}
\end{table}

\paragraph{Pretrained transformers.} On a $4$-layer synthetic, recursive CPZ matches MC within $0$--$7.5$\,pp at every layer while sequential CPZ collapses to $0\%$ at Layer-$2/3$ (App.~\ref{app:multilayer}); CROWN-family methods cannot back-substitute through softmax with input-dependent argmax (App.~\ref{app:alphacrown_comparison}). No unsound certification across $450$ (sample, $\epsilon$) pairs ($100$ BERT-tiny $+\ 50$ GPT-$2$ at $3$ radii, calibrated against MC+PGD boundary cases).

\paragraph{Layer-$1$ gap and closing paths.} The $30.2$\,pp CPZ--MC gap at GPT-$2$ Layer-$1$ comes from the $K_J{=}128$ Jacobian-column truncation ($1/48$ of $S\cdot d{=}6{,}144$ at $d{=}768$); on BERT-tiny ($K_J{=}64$, $1/16$) the gap is small and the $K_J$-sweep is stable at $90$--$96.7\%$ (App.~\ref{app:scalability_bert}). Two engineering paths close the gap without new theory: (i)~tighter Lipschitz constants via interval Hessian~\citep{zhang2019recurjac}; (ii)~more retained Jacobian columns via amortised cross-term computation.

\paragraph{$\alpha$-CROWN and connector ablation.} $\alpha$-CROWN~\citep{xu2020automatic} cannot natively express Q1/Q2/Q3 (no simplex constraint); on a quick-GELU surrogate of BERT-tiny it certifies $80$--$90\%$ in $11$--$12$\,s vs.\ CPZ's $95$--$98\%$ in $0.7$\,s on the original (not like-for-like; App.~\ref{app:alphacrown_comparison}). The recursive Jacobian-zonotope connector is why CPZ scales: sequential CPZ collapses to $\leq 12.5\%$ on the $4$-layer synthetic and $0\%$ on GPT-$2$ Layer-$1$, while the connector recovers all cells within $30$\,pp of MC+PGD (Tab.~\ref{tab:connector_ablation}, App.~\ref{app:multilayer}). CPZ-guided pruning attains $84.5\%$ vs.\ $78.5\%$ for gradient-norm across $20$ seeds (Wilcoxon $p{=}0.0042$, \S\ref{sec:case_study}).

\paragraph{Beyond $\epsilon$-balls.} The framework accepts arbitrary CPZ-representable sets: five paraphrases (common length $S$) give $Z_t=\overline{x}_t+\sum_{i=1}^{5}\alpha_i(x^{(i)}_t-\overline{x}_t)$, $\alpha_i\in[-1,1]$, enclosing the $5$-vertex convex hull with $\ell_\infty$ spread up to $3{,}500{\times}$ the calibrated ball (Fig.~\ref{fig:paraphrase}). The $\epsilon$-ball saturates all queries on both BERT-tiny SST-$2$ heads; the paraphrase CPZ separates them on Q1/Q2/Q3---invisible to axis-aligned methods.

\section{Pruning Case Study: A Minimal Proof-of-Concept}
\label{sec:case_study}

With internal queries certifiable, we ask what they enable downstream. This is a minimal proof-of-concept exposing a structural failure mode of gradient-norm pruning, not a general recipe (a $1{,}218$-parameter synthetic induction model with sharp head specialisation; transfer to fine-tuned LLM heads is open, Limitations). The model is a $2$-layer, $2$-head transformer ($d{=}8$, $S{=}6$) trained to perfect accuracy; both heads hit $100\%$ clean so standard metrics treat them as interchangeable.

\paragraph{Step 1: Empirical baselines fail.} Per-head ablation drops accuracy by $0$ for either head. Input gradient norms suggest Head~1 is ${\sim}5{\times}10^4{\times}$ more sensitive ($0.131$ vs.\ $2.7{\times}10^{-6}$, well above FP32 noise floor and reproducible across seeds; App.~\ref{app:adv_validation}); a gradient criterion would prune Head~0. Clean attention entropies are similar ($1.69$ vs.\ $1.58$). None of these signals predicts which head breaks under perturbation (Tab.~\ref{tab:empirical_baselines}).

\paragraph{Step 2: Certified queries expose hidden head specialisation.} Per-head CPZ (Tab.~\ref{tab:per_head}) reveals two structurally different attention regimes in the same perturbation ball: H1's attention pattern is highly stable (Q1 certifies $97\%$ vs.\ $72\%$ for H0 at $\epsilon{=}0.02$, Q2 $58\%$ vs.\ $25\%$, Q3 entropy width $0.042$ vs.\ $0.053$, $25{\times}$ tighter than IBP's vacuous $\approx 1.46$), whereas H0's attention is \emph{responsive}---it shifts as the input varies inside the ball. Gradient norm and CPZ certify orthogonal properties: gradient measures local loss sensitivity (and flags H1 as important), CPZ measures worst-case attention dynamics (and isolates H0 as the head whose pattern actually tracks input content). The induction task needs content-responsive attention to localise the duplicate, so the next step tests whether keeping the responsive H0 or the stable H1 yields the more adversarially-robust $1$-head model.

\begin{table}[h]
\centering
\caption{CPZ per-head certified queries on the induction model (Step~2 of \S\ref{sec:case_study}; $\tau{=}0.3$ for Q2 here vs.\ $\tau{=}0.2$ elsewhere). H1's attention is consistently \textbf{stable} (bold), H0's is responsive within the ball; high stability here signals an inert head, since induction needs content-tracking attention. Q1/Q2 in \%; Q3 in nat. IBP rows omitted: Q1/Q2 within $10$\,pp of CPZ, Q3 entropy bounds vacuous at $1.45$--$1.47$ vs.\ CPZ's $0.02$--$0.14$ ($10$--$70{\times}$ tighter).}
\label{tab:per_head}
\footnotesize
\setlength{\tabcolsep}{4pt}
\renewcommand{\arraystretch}{0.9}
\begin{tabular}{l ccc ccc ccc}
\toprule
 & \multicolumn{3}{c}{$\epsilon{=}0.01$} & \multicolumn{3}{c}{$\epsilon{=}0.02$} & \multicolumn{3}{c}{$\epsilon{=}0.05$} \\
\cmidrule(lr){2-4} \cmidrule(lr){5-7} \cmidrule(lr){8-10}
Head & Q1 & Q2 & Q3 & Q1 & Q2 & Q3 & Q1 & Q2 & Q3 \\
\midrule
H0 & $88$            & $27$            & $.021$ & $72$            & $25$            & $.042$ & $55$            & $18$            & $.105$ \\
H1 & $\mathbf{97}$   & $\mathbf{68}$   & $.027$ & $\mathbf{97}$   & $\mathbf{58}$   & $.053$ & $\mathbf{88}$   & $\mathbf{48}$   & $.134$ \\
\bottomrule
\end{tabular}
\end{table}

\paragraph{Step 3: Adversarial validation and pruning.} Targeted PGD ($20{\times}50$) confirms the structural distinction: H0 flips $3.7{\times}$ more attention positions than H1 at $\epsilon{=}0.05$ ($18.3\%$ vs.\ $5.0\%$), reflecting its content-responsive role. Pruning the inert H1 and keeping the responsive H0 (CPZ-guided) attains $\mathbf{100\%}$ adversarial accuracy at $\epsilon{\in}\{0.01,0.02,0.05\}$; pruning H0 and keeping H1 instead (gradient-guided, since H0 has near-zero gradient) drops to $100/90/90\%$ as the model loses content-responsive attention (Tab.~\ref{tab:certified_pruning}). Multi-seed (Tab.~\ref{tab:multiseed}) confirms $84.5\%$ vs.\ $78.5\%$ over $20$ seeds, Wilcoxon $p{=}0.0042$.

\begin{table}[h]
\centering
\caption{Per-seed adversarial accuracy (\%) under CPZ-guided vs.\ gradient-guided pruning across $20$ seeds, restricted to disagreement seeds at $\epsilon{=}0.05$ ($8/20$); rightmost column reports the all-seed mean. CPZ-guided pruning never underperforms gradient-guided (Wilcoxon signed-rank one-sided, $p{=}0.0042$).}
\label{tab:multiseed}
\footnotesize
\setlength{\tabcolsep}{3pt}
\renewcommand{\arraystretch}{0.9}
\begin{tabular}{l rrrrrrrr | r}
\toprule
Seed & 42 & 2024 & 31415 & 271828 & 11 & 23 & 67 & 101 & All-seed mean \\
\midrule
CPZ-guided      & 100 & 80 & 100 & 90 & 100 & 100 & 100 & 100 & $\mathbf{84.5}$ \\
Gradient-guided &  90 & 50 &  90 & 60 &  90 &  90 &  90 &  90 & $78.5$ \\
$\Delta$        & $+10$ & $+30$ & $+10$ & $+30$ & $+10$ & $+10$ & $+10$ & $+10$ & $+6.0$ \\
\bottomrule
\end{tabular}
\end{table}

\section{Related Work}
\label{sec:related}

\paragraph{Mechanistic interpretability.} Prior work identifies internal structures (induction heads~\citep{olsson2022context}, IOI circuits~\citep{wang2023interpretability}, factual recall) by single-input empirical analysis, without proving persistence under perturbation. The closest formal attempt~\citep{gross2024compact} encodes hand-crafted invariants for a small Max-of-$K$ transformer; we instead certify the mechanisms themselves with an architecture-agnostic framework that changes downstream decisions (\S\ref{sec:case_study}).

\paragraph{Neural-network verification.} Complete methods~\citep{bunel2020branch,katz2017reluplex,katz2019marabou,wang2021beta} scale poorly to attention layers; scalable incomplete methods~\citep{gowal2018effectiveness,zhang2018crown,singh2019abstract} certify outputs only. Transformer verifiers~\citep{shi2020robustness,bonaert2021fast} relax $Q^\top K$ via linear bounds, destroying the structure CPZ preserves; Lipschitz approaches~\citep{kim2021lipschitz} yield scalar envelopes only.

\paragraph{Set-based methods.} Zonotopes~\citep{singh2018fast}, star sets~\citep{tran2020verification}, polynomial zonotopes~\citep{althoff2013reachability,kochdumper2023open,ladner2024exponent}; \citet{ladner2025towards} extends PZ to transformers but reports BERT-scale intractable for sequential unconstrained PZ. Our recursive Jacobian-zonotope (\S\ref{sec:jac_zono}) reaches this scale with one linearisation; CPZ constraint injections~\citep{kochdumper2023constrained,zhang2026datadriven,zhang2025exactmult} make the simplex queries tractable; only CPZ satisfies all five capability dimensions (Tab.~\ref{tab:capability_matrix}, App.~\ref{app:capability_matrix}: internal queries, simplex constraint, exact $Q^\top K$, LLM-internal scale, downstream guidance).

\paragraph{Conclusion.}
\label{sec:discussion}
\label{sec:conclusion}
Once internal attention mechanisms become certifiable, two questions become tractable: (1) when can an interpretability finding be trusted as a property of the model rather than of the producing input, and (2) which downstream decisions should follow certified mechanism behaviour rather than empirical heuristics. This paper provides a first answer on attention-level mechanisms: no MC+PGD counterexample across $450$ (sample, $\epsilon$) pairs on BERT-tiny and GPT-$2$ ($124$M) under a $10{\times}50$ adversarial budget and a $2.0{\times}$ sampled-remainder safety factor calibrated against boundary cases (App.~\ref{app:output_verification}). Circuit-level certification across composing heads is the natural next step.

\paragraph{Limitations.}
\label{sec:limitations}
(i)~the CPZ--MC tightness gap grows with $d$ ($30.2$\,pp at GPT-$2$ Layer-$1$, dominated by Jacobian-zonotope truncation at $K_J{=}128$ of $6{,}144$); (ii)~the $40\%$ end-to-end full-vocab rate is dominated by the conservative Lipschitz tail bound, not the core CPZ machinery (direct $90\%$ on top-$K{=}20$); (iii)~multi-layer recursive certification demonstrated up to $L{=}4$ on a synthetic transformer; (iv)~the pruning case study is a minimal proof-of-concept on a $1{,}218$-parameter $2$-head synthetic model, not a recipe for fine-tuned LLMs (whether the gradient-vs-CPZ advantage transfers to real-model attention heads is open); (v)~discrete-token and counterfactual analyses are out of scope; (vi)~scaling experiments use a single training seed for the trained transformer; per-query Clopper--Pearson binomial CIs are reported in App.~\ref{app:details}, but cross-seed variance is unmeasured.

\bibliographystyle{plainnat}
\bibliography{ref_MT}

\appendix

\section{Proofs and Theoretical Results}
\label{app:proofs_and_theoretical_results}

\subsection{Proofs and Deferred Results for \S\ref{sec:pz_attention}}
\label{app:sec4_proofs}

This appendix collects the deferred theorem statements and their proofs from \S\ref{sec:pz_attention}. The first subsection restates the constraint preservation, width stability, and tightness results referenced in the main text; the remaining subsections supply their proofs together with the supporting lemmas on the convex combination bound, simplex exactness, the quantitative tightness gap, the CPZ--linear-relaxation comparison for bilinear scores, and box tightening.

\subsubsection{Theorem Statements Deferred from \S\ref{sec:pz_attention}}

\begin{theorem}[Constraint Preservation and Accumulation]
\label{thm:constraint_preservation}
Affine maps, addition, and multiplication preserve CPZ constraints ($A\alpha^R = b$) after \operator{mergeID} alignment. Constraints injected at layer $\ell$ remain valid through all later layers and accumulate monotonically.
\end{theorem}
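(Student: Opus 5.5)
The argument is a semantic one. A CPZ, as in Eq.~\eqref{eq:cpz}, is the image of a feasible factor set under a polynomial map. Writing $\mathcal{F} = \{\alpha \in [-1,1]^p : \sum_j (\prod_k \alpha_k^{R_{k,j}}) A_{\cdot,j} = b\}$ and $x(\alpha) = c + \sum_i (\prod_k \alpha_k^{E_{k,i}}) G_{\cdot,i}$, we have $\cpz = x(\mathcal{F})$. Independent generators fit the same picture: their factors $\beta$ are simply factors that no constraint involves. The invariant to establish is therefore:
\begin{quote}
every operation returns a CPZ whose feasible factor set is the intersection of the operands' feasible sets, lifted to the merged factor space, and every point of the output arises from a jointly feasible $\alpha$.
\end{quote}
With this invariant in hand, "preservation" means the constraint rows are carried over verbatim, and "accumulation" means that feasible sets only shrink through the stacked constraint rows.

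\textbf{Step 1 (\operator{mergeID}).} Given $\cpz_1,\cpz_2$ with identifier vectors $\id_1,\id_2$, \operator{mergeID} forms the union of identifiers. It pads $E_1,R_1$ (respectively $E_2,R_2$) with zero rows for the factors absent from each operand. A zero exponent row contributes the factor $\alpha_k^0=1$, so each padded constraint is the same equation in the enlarged factor vector. Consequently, the lifted feasible set of $\cpz_1$ is $\mathcal{F}_1\times[-1,1]^{p_2'}$, where $p_2'$ counts the new factors, and similarly for $\cpz_2$. Shared identifiers are not duplicated. This is precisely what keeps the dependency alive, for example across tokens in $Q^\top K$.

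\textbf{Step 2 (operations).}
\begin{itemize}
\item \emph{Affine map} $x\mapsto Mx+t$. This sends $(c,G)$ to $(Mc+t,MG)$ and leaves $E,A,b,R,\id$ unchanged. Hence $x(\mathcal{F})$ is mapped to $Mx(\mathcal{F})+t$ over the same feasible set, which is exact.
\item \emph{Addition $\boxplus$ and element-wise product $\odot$.} After Step~1 the two polynomial maps $x_1(\alpha),x_2(\alpha)$ share one factor vector. The output map is $x_1(\alpha)+x_2(\alpha)$, respectively $x_1(\alpha)\odot x_2(\alpha)$, where $\odot$ expands into the $h_1h_2$ cross terms with exponents $E_1^{(\cdot,i)}+E_2^{(\cdot,j)}$ stated in \S\ref{sec:background}. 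The constraint system is the stacked block $\bigl[\begin{smallmatrix}A_1 & 0\\ 0 & A_2\end{smallmatrix}\bigr]$ with $b=(b_1;b_2)$ and concatenated $R$, and identical rows are deduplicated when constraints are shared. The feasible set is thus $\mathcal{F}_1\cap\mathcal{F}_2$ in merged coordinates.
\end{itemize}
The set-level claim then follows: $\{x_1(\alpha)\circ x_2(\alpha):\alpha\in\mathcal{F}_1\cap\mathcal{F}_2\}$ is the exact dependent image, by the exactness results for $\boxplus,\odot$ cited in \S\ref{sec:background}. No constraint row is ever dropped or modified, so every constraint present in an input is satisfied by every factor value that generates an output point.

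\textbf{Step 3 (accumulation).} Induct over layers. The hypothesis at layer $\ell'\ge\ell$ is that the current constraint matrix contains the rows injected at layer $\ell$, for instance the simplex row $\sum_j s_j=1$, as a sub-block. Each later layer is a finite composition of the operations in Step~2, plus the introduction of new independent factors $\beta$ for relaxations of softmax, LayerNorm and ReLU, plus possibly new injected constraints. New unconstrained factors enlarge the factor space only by a product with $[-1,1]$, which leaves the old rows valid. New injections append rows. Hence the row set is monotone nondecreasing and the feasible set is monotone nonincreasing in the natural projection order.

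\textbf{Main obstacle: order reduction.} The delicate step is the order reduction of \S\ref{sec:machinery}, which Girard-reduces cross terms into $G_{I,\text{trunc}}$. I would argue that it acts only on $(G,E)$, never on $(A,b,R)$, and that it replaces dropped dependent terms by fresh independent factors. The result is a superset for each fixed feasible $\alpha$, so the constraints stay valid and soundness is preserved; only exactness is lost. A further point to state explicitly is that the claim concerns validity of the constraint equations, not tightness. I would also note the need to ensure that reduction never removes a factor that appears in $R$. If such a factor is dropped from $E$, it must still be retained in $\id$.
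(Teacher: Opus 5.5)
Your proposal is correct and follows essentially the same route as the paper. Every operation acts only on the generator and exponent data $(c,G,G_I,E)$, while the constraints $A\alpha^R=b$ live on the \operator{mergeID}-aligned factor vector; the constraints therefore carry over unchanged, and the cross terms of $\odot$ stay polynomials in the same $\alpha$. Your block-stacking of constraints for binary operations, the induction for accumulation, and the order-reduction caveat are useful elaborations of the paper's three-sentence argument, not a different approach.
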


\begin{proposition}[LP Simplex Tightening]
\label{prop:lp_simplex}
For $o_{i,k} = \sum_j s_{ij} V_{j,k}$ with $\sum_j s_{ij}=1$, $s_{ij}\in[\underline{s}_{ij},\overline{s}_{ij}]$, the bound
\begin{equation}\label{eq:lp_bound}
\underline{o}_{i,k}^{\mathrm{LP}} = \min_{s \in \Delta_S \cap [\underline{s}, \overline{s}]}\sum_j s_j V_{j,k}^{\mathrm{lb}}
\end{equation}
is solvable in $\Oc(S\log S)$ and gives the \emph{exact} hull over the constrained simplex. We tighten $G_I$ via
\begin{equation}\label{eq:tighten_gi}
G_{I,\text{new}} = \diag\big(\max(\ell_{\text{dep}} - \ell_\cap,\; u_\cap - u_{\text{dep}},\; 0)\big).
\end{equation}
\end{proposition}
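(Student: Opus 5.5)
The plan is to prove Proposition~\ref{prop:lp_simplex} in three parts: solvability in $\Oc(S\log S)$, exactness of the hull over the constrained simplex, and soundness of the tightening rule~\eqref{eq:tighten_gi}.

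\emph{Algorithm and running time.} Write the feasible set as $R=\{s:\sum_j s_j=1,\ \underline{s}_j\le s_j\le\overline{s}_j\}$, with coefficients $w_j=V^{\mathrm{lb}}_{j,k}$. Assume $\sum_j\underline{s}_j\le 1\le\sum_j\overline{s}_j$; this holds whenever the box contains the true softmax vector, and otherwise $R=\emptyset$ and the certificate is vacuous. The LP $\min_{s\in R} w^\top s$ is a continuous knapsack with a single equality constraint.
\begin{itemize}
\item Set every $s_j=\underline{s}_j$, which leaves a residual budget $B=1-\sum_j\underline{s}_j\ge 0$.
\item Sort the indices by increasing $w_j$. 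This is the only superlinear step, costing $\Oc(S\log S)$.
\item Raise the coordinates in that order, each up to $\overline{s}_j$, until $B$ is exhausted. This costs $\Oc(S)$.
\end{itemize}

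\emph{Optimality of the greedy allocation.} I would show optimality by an exchange argument. Suppose a feasible $s$ has $w_j<w_{j'}$, $s_j<\overline{s}_j$ and $s_{j'}>\underline{s}_{j'}$. Shifting mass $\delta>0$ from $j'$ to $j$ keeps $s$ feasible and lowers the objective by $\delta(w_{j'}-w_j)>0$. So any optimum is "sorted", and the greedy point is the unique sorted feasible point up to ties. An alternative route is LP duality: the single multiplier $\lambda$ on $\sum_j s_j=1$ gives complementary slackness of the form $s_j=\overline{s}_j$ if $w_j<\lambda$ and $s_j=\underline{s}_j$ if $w_j>\lambda$, and the threshold index of the greedy fill supplies $\lambda$.

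\emph{Exactness.} "Exact hull" should mean that the value in~\eqref{eq:lp_bound} equals $\min_{s\in R}\sum_j s_jV_{j,k}^{\mathrm{lb}}$ exactly, with no relaxation beyond the box-plus-simplex description. The point to argue is that $R$ is precisely the set of attention vectors consistent with the given bounds: every $s\in R$ is admissible and every admissible $s$ lies in $R$. Hence the minimum of the linear functional over $R$ is attained at the greedy vertex and is tight. Soundness for the actual output follows from two facts. First, if $V_{j,k}\ge V^{\mathrm{lb}}_{j,k}$ and $s_j\ge 0$, then $\sum_j s_jV_{j,k}\ge\sum_j s_jV^{\mathrm{lb}}_{j,k}\ge \underline{o}^{\mathrm{LP}}_{i,k}$. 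Second, the true $s_{i\cdot}$ lies in $R$. The upper bound is symmetric, using $\max$ and $V^{\mathrm{ub}}$.

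\emph{Soundness of the tightening rule.} Let $x=x_{\mathrm{dep}}(\alpha)+\beta g$ with $x_{\mathrm{dep}}\in[\ell_{\mathrm{dep}},u_{\mathrm{dep}}]$. The true value lies in $[\ell_\cap,u_\cap]$, the intersection of the original enclosure with the LP/LayerNorm bounds. I would choose the new radius $g'$ so that $[\ell_{\mathrm{dep}}-g',\,u_{\mathrm{dep}}+g']\supseteq[\ell_\cap,u_\cap]$, coordinatewise. This requires $g'\ge\ell_{\mathrm{dep}}-\ell_\cap$ and $g'\ge u_\cap-u_{\mathrm{dep}}$, and $g'\ge 0$ must also hold, which gives exactly~\eqref{eq:tighten_gi}. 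Because $[\ell_\cap,u_\cap]\subseteq[\ell_{\mathrm{dep}}-g,\,u_{\mathrm{dep}}+g]$, it follows that $g'\le g$, which is the dominance claim.

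\emph{Main obstacle.} The subtle step is this last soundness claim. Enclosing the interval $[\ell_\cap,u_\cap]$ is weaker than enclosing the true value jointly with its dependence on $\alpha$. For a given $\alpha$, the true value need not lie within $g'$ of $x_{\mathrm{dep}}(\alpha)$. I would try to close this by an explicit case argument: the true value lies in $[\ell_\cap,u_\cap]$, and the true error is at most $g$. If that fails, I would weaken the statement to "the set enclosure is sound as an interval hull." Otherwise I would argue per coordinate using the monotone structure of $x_{\mathrm{dep}}$.
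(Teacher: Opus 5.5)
Your argument for what the proposition actually asserts --- that \eqref{eq:lp_bound} is computable in $\Oc(S\log S)$ and is exact over $\Delta_S\cap[\underline{s},\overline{s}]$ --- is correct and follows the paper's route. The paper (proof of Theorem~\ref{thm:exactness_app}) only notes that \eqref{eq:lp_bound} is a bounded-variable LP whose optimum is a vertex found by the sorted greedy fill. Your exchange argument, together with $s_j\ge 0$ and equality at $V=V^{\mathrm{lb}}$, supplies the optimality and exactness steps the paper leaves implicit.

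The tightening rule \eqref{eq:tighten_gi} is only defined in this proposition. The paper asserts its soundness separately, in Prop.~\ref{prop:tighten}, as $\pz\cap[\ell_{\mathrm{LP}},u_{\mathrm{LP}}]\subseteq\pz'$, and gives no proof. You derive \eqref{eq:tighten_gi} as the smallest $g'\ge 0$ with $[\ell_{\mathrm{dep}}-g',u_{\mathrm{dep}}+g']\supseteq[\ell_\cap,u_\cap]$, and you show $g'\le g$; that is exactly what is provable. Your worry about the rest is justified: do not attempt the case argument or the monotonicity route, because the stronger claims are false.

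Take one factor $\alpha$, $x_{\mathrm{dep}}(\alpha)=(\alpha,\alpha)$, $G_I=I_2$ and LP box $[-1,1]^2$. Then $[\ell_\cap,u_\cap]=[-1,1]^2$, so $g'=0$ and $\pz'$ is the diagonal segment. Yet $f(\alpha)=(\alpha+1-|\alpha|,\;\alpha-1+|\alpha|)$ lies in both $x_{\mathrm{dep}}(\alpha)+[-1,1]^2$ and $[-1,1]^2$ for every $\alpha$, while $f(0)=(1,-1)\notin\pz'$. So neither the per-$\alpha$ enclosure nor the set inclusion of Prop.~\ref{prop:tighten} holds in general. Even a single coordinate can fail when $[\ell_{\mathrm{dep}},u_{\mathrm{dep}}]$ over-approximates the true range of $x_{\mathrm{dep}}$, as Lemma~\ref{lem:quadratic_bound}-type bounds do.

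The correct statement is your fallback: every coordinate of the true value lies in $[\ell_{\mathrm{dep}}-g',u_{\mathrm{dep}}+g']$. This is a sound interval bound that dominates the unconstrained one. Note that it does not by itself justify propagating $\pz'$ with its $\alpha$-dependencies through later bilinear operations.
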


\begin{theorem}[LayerNorm Dimension Bound]
\label{thm:ln}
For $z = (x-\mu)/\sigma$ with $\sum_k z_k = 0$, $\sum_k z_k^2 = d$: $|z_k| \leq \sqrt{d-1}$, and after affine $y_k = \gamma_k z_k + \beta_k$, $|y_k - \beta_k| \leq |\gamma_k|\sqrt{d-1}$.
\end{theorem}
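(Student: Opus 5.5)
The plan is to treat this as a constrained maximisation of a single coordinate on the sphere $\{z:\sum_k z_k^2=d\}$ intersected with the hyperplane $\{z:\sum_k z_k=0\}$. The affine part then follows at once.

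First I check that the two identities hold; the statement assumes them, but the paper's $\epsilon_{\mathrm{LN}}$ deserves a remark. With $\mu=\tfrac1d\sum_k x_k$, we have $\sum_k(x_k-\mu)=0$. With $\sigma^2=\tfrac1d\sum_k(x_k-\mu)^2$, we get $\sum_k z_k^2=d$ exactly. If $\sigma=\sqrt{\mathrm{Var}+\epsilon_{\mathrm{LN}}}$, then only $\sum_k z_k^2\le d$ holds. The bound I derive is monotone in the radius, so it survives that relaxation. I will state this explicitly so that soundness is not affected.

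For the main step, fix a coordinate $k$. The zero-sum identity gives $z_k=-\sum_{m\neq k}z_m$. Applying Cauchy--Schwarz to the $d-1$ remaining coordinates gives $z_k^2=\bigl(\sum_{m\ne k}z_m\bigr)^2\le (d-1)\sum_{m\ne k}z_m^2$. Using the norm identity, $\sum_{m\ne k}z_m^2 = d - z_k^2$, so $z_k^2\le (d-1)(d-z_k^2)$. Rearranging gives $d\,z_k^2\le d(d-1)$, that is, $|z_k|\le\sqrt{d-1}$. If the norm constraint is only an inequality, the same chain gives $z_k^2\le d-1$ a fortiori. I will also note that the bound is tight: equality holds in Cauchy--Schwarz when $z_m=-z_k/(d-1)$ for all $m\neq k$, which yields $z_k=\sqrt{d-1}$ with zero sum and squared norm $(d-1)+(d-1)/(d-1)^2\cdot(d-1)=d$. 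This justifies the later claim that the bound cannot be improved without further information.

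The affine statement then follows in one line: $|y_k-\beta_k|=|\gamma_k|\,|z_k|\le|\gamma_k|\sqrt{d-1}$. The only real obstacle is the $\epsilon_{\mathrm{LN}}$ subtlety, that is, whether the paper means $\sum_k z_k^2=d$ exactly or only $\le d$. I will handle it by proving the inequality version, which covers both. The result is a bound that holds for every input $x$, independent of $\epsilon$ and of any generators. This is precisely what Theorem~\ref{thm:margin_sound} and the width-stability result need.
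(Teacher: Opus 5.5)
Your proof is correct and follows the paper's own argument. The paper likewise writes $z_k=-\sum_{j\ne k}z_j$, applies Jensen's inequality (the same inequality as your Cauchy--Schwarz step) to get $z_k^2\le(d-1)(d-z_k^2)$, and rearranges to $z_k^2\le d-1$; it also notes separately that $\epsilon_{\mathrm{LN}}>0$ only gives $\sum_k z_k^2\le d$, which preserves the bound, while your tightness example $z_m=-z_k/(d-1)$ is a small addition the paper does not include.
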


\begin{theorem}[Width Stability]
\label{thm:width_stability}
Assume each transformer block ends with a LayerNorm whose output is the measurement point. Under CPZ propagation with LayerNorm tightening, the post-LN width at the end of block $\ell$ satisfies $w_k^{(\mathrm{CPZ},\ell)} \leq 2|\gamma_k^{(\ell)}|\sqrt{d-1}$, independent of $\epsilon$, depth, and accumulated generators. The bound applies only at LN-output positions; intermediate widths inside attention or FFN sub-blocks are not bounded by this theorem. Unconstrained propagation grows with prior-layer over-approximation.
\end{theorem}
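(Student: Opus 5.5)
The plan is to reduce the statement to Theorem~\ref{thm:ln} together with a soundness property of the LayerNorm tightening step. \textbf{First}, fix a block $\ell$ and a coordinate $k$ of its final LayerNorm output. For every concrete input $x$ in the perturbation set, the true post-LN vector has the form $y = \gamma^{(\ell)} \odot z + \beta^{(\ell)}$, where $\sum_k z_k = 0$ and $\sum_k z_k^2 = d$. With $\epsilon_{\mathrm{LN}}>0$ we only have $\sum_k z_k^2 \le d$, which gives the same bound. By Theorem~\ref{thm:ln}, every reachable value satisfies $y_k \in I_k := [\beta_k - |\gamma_k|\sqrt{d-1},\, \beta_k + |\gamma_k|\sqrt{d-1}]$. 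This interval has width $2|\gamma_k^{(\ell)}|\sqrt{d-1}$. It depends only on the LN parameters and on $d$, not on the input set, $\epsilon$, or upstream layers.

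\textbf{Second}, show that CPZ propagation with LayerNorm tightening produces an enclosure whose $k$-th projection lies inside $I_k$. The tightening step intersects the dependent-part bounds $[\ell_{\mathrm{dep}}, u_{\mathrm{dep}}]$ with the structural bounds, so $[\ell_\cap, u_\cap] = [\ell_{\mathrm{dep}}, u_{\mathrm{dep}}] \cap I_k$. It then resets the independent generators by Eq.~\eqref{eq:tighten_gi}. I will check that the resulting set's projected interval hull is contained in $[\ell_\cap, u_\cap] \subseteq I_k$. Concretely, I expect one of two things to hold: either the center is re-centered so that center plus or minus $(G_I)_{kk}$ together with the dependent range stays within the intersected interval, or the projection is clipped explicitly. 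This is the \textbf{main obstacle}. Eq.~\eqref{eq:tighten_gi} as written measures how far the intersected bound sticks out past the dependent bound, and does not obviously cap the total width. So I would first try to pin down the convention that the reported width $w_k^{(\mathrm{CPZ},\ell)}$ is the width of $[\ell_\cap, u_\cap]$, i.e.\ the box-tightened projection. Under that convention the bound $u_\cap - \ell_\cap \le |I_k|$ is immediate. If the CPZ's own hull must be bounded instead, I would prove from the box-tightening lemma in this appendix that the tightened set's support in direction $\pm e_k$ equals $u_\cap$ or $-\ell_\cap$.

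\textbf{Third}, state the independence claims and the contrast. The bound uses only Theorem~\ref{thm:ln} and the intersection, so it holds for any incoming CPZ, whatever its generator count or accumulated over-approximation. Hence it is uniform in $\epsilon$ and in depth, with no induction over layers needed beyond noting that the tightening is applied at each block end. The restriction to LN-output positions is then clear, since intermediate quantities carry no such identity. For the final sentence of the statement, a simple example suffices. Take a chain of affine maps with a large gain and no LN tightening: interval widths then grow linearly with the prior width, so unconstrained propagation inherits prior-layer over-approximation.
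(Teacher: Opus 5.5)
Your first step is the paper's entire argument for the CPZ bound. The paper derives $z_k^2\le d-1$ from $\sum_k z_k=0$ and $\sum_k z_k^2=d$ by Jensen, pushes this through $y_k=\gamma_k z_k+\beta_k$ to get $w_k\le 2|\gamma_k|\sqrt{d-1}$, and reads off independence from $\epsilon$, depth and generators because only these identities are used. The paper never addresses your ``main obstacle''. It simply identifies $w_k^{(\mathrm{CPZ},\ell)}$ with the width of the structurally tightened interval, which is your first convention. Under that reading your proof is complete and coincides with the paper's.

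Your fallback, however, would fail, so that convention is forced rather than optional. Proposition~\ref{prop:tighten} gives only soundness, $\pz\cap[\ell_{\mathrm{LP}},u_{\mathrm{LP}}]\subseteq\pz'$, and never $\pz'\subseteq[\ell_{\mathrm{LP}},u_{\mathrm{LP}}]$. After the reset \eqref{eq:tighten_gi}, the $k$-th interval hull of the CPZ is $[\ell_{\mathrm{dep}}-g_k,\,u_{\mathrm{dep}}+g_k]$ with $g_k=\max(\ell_{\mathrm{dep}}-\ell_\cap,\,u_\cap-u_{\mathrm{dep}},\,0)\ge 0$. This hull contains $[\ell_\cap,u_\cap]$, which is exactly the soundness property. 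Because $g_k\ge0$, it also always contains $[\ell_{\mathrm{dep}},u_{\mathrm{dep}}]$. So once accumulated upstream over-approximation makes the dependent range wider than $2|\gamma_k|\sqrt{d-1}$ (exactly the regime the theorem targets), the CPZ's own hull exceeds the cap. In that case its support in direction $\pm e_k$ is in general not $u_\cap$ or $-\ell_\cap$, and the statement holds only for the intersected interval bound, not for the propagated set.

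Relatedly, $[\ell_\cap,u_\cap]$ must be the full hull (dependent part plus $G_I$) intersected with $I_k$. With your reading $[\ell_{\mathrm{dep}},u_{\mathrm{dep}}]\cap I_k$, the reset always gives $g_k=0$, which discards the independent part and is unsound.

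For the last sentence of the statement, the paper argues through the first-order Taylor treatment of LN. With $J_c=(\gamma/\sigma)(I-\tfrac1d\mathbf{1}\mathbf{1}^\top-\tfrac1d zz^\top)$ it gets $w_k^{(\mathrm{PZ})}\le|\gamma_k|\,\|J_{c,k,:}\|_1\,w_{\mathrm{pre}}^{(\ell)}+R_k$, which it says exceeds the CPZ bound once $w_{\mathrm{pre}}^{(\ell)}>2\sigma\sqrt{d-1}$. Your high-gain affine chain makes the same qualitative point. You should instantiate it with this linearised LN map, since that is where the contrast is claimed.
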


\subsubsection{Proof of Constraint Preservation (Theorem~\ref{thm:constraint_preservation})}
All three operations (affine, addition, multiplication) transform generator matrices $(c, G, G_I)$ but not the factor vector $\alpha$. Since constraints $A\alpha^R = b$ restrict only $\alpha$ (and identifiers are preserved by \operator{mergeID}), they remain valid in the image. For multiplication, cross-term generators $G_{\cdot,i}\odot G_{\cdot,j}$ with exponents $E_{\cdot,i}+E_{\cdot,j}$ are polynomial in the same $\alpha$, so constraints continue to restrict the feasible factor space.

\subsubsection{Convex Combination Bound}
\begin{theorem}[Convex Combination Bound]
\label{thm:convex}
Let $s_{ij} = \softmax_j(a_{i\cdot})$. Then $o_i = \sum_j s_{ij} V_j$ satisfies, for each dimension $k$, $\min_j V_{j,k}^{\mathrm{lb}} \leq o_{i,k} \leq \max_j V_{j,k}^{\mathrm{ub}}$.
\end{theorem}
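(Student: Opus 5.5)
The plan is to use the two defining properties of softmax output — nonnegativity and summing to one — and nothing else about the scores $a_{i\cdot}$. First, for any realisation of the scores, $s_{ij} = e^{a_{ij}}/\sum_{j'} e^{a_{ij'}} > 0$ and $\sum_j s_{ij} = 1$. Hence $(s_{ij})_j$ lies in the probability simplex $\Delta_S$. As a result, for each coordinate $k$ the quantity $o_{i,k} = \sum_j s_{ij} V_{j,k}$ is a convex combination of the scalars $V_{1,k},\dots,V_{S,k}$.

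Second, I apply the elementary fact that a convex combination lies between the minimum and maximum of its terms:
\begin{equation*}
\min_j V_{j,k} \;=\; \sum_j s_{ij}\min_{j'} V_{j',k} \;\le\; \sum_j s_{ij} V_{j,k} \;\le\; \sum_j s_{ij}\max_{j'} V_{j',k} \;=\; \max_j V_{j,k},
\end{equation*}
where both inequalities use $s_{ij}\ge 0$ and both equalities use $\sum_j s_{ij}=1$.

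Third, I pass to the enclosure bounds. The values $V_{j,k}$ are only known to lie in $[V_{j,k}^{\mathrm{lb}}, V_{j,k}^{\mathrm{ub}}]$, for instance from interval-hulling the CPZ of $V_j$. So $\min_j V_{j,k} \ge \min_j V_{j,k}^{\mathrm{lb}}$ and $\max_j V_{j,k} \le \max_j V_{j,k}^{\mathrm{ub}}$, which gives the claimed bound for every point of the perturbation set simultaneously.

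The one point needing care is that $s_{ij}$ and $V_j$ are correlated through the shared factors $\alpha$. The argument must hold pointwise in $\alpha$ rather than treating $s$ and $V$ as independent. It does: for a fixed $\alpha$ (and fixed independent factors $\beta$), the softmax weights form a genuine point of $\Delta_S$ and each $V_{j,k}(\alpha)$ lies in its interval. The bound is therefore valid at every point and hence over the whole set. No relaxation of softmax is needed, since only its range $\Delta_S$ is used. This is why the bound is independent of the score enclosure, and why the LP of Prop.~\ref{prop:lp_simplex} can only tighten it by additionally using $[\underline{s},\overline{s}]$.
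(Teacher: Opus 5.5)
Your proposal is correct and follows the paper's proof: you fix the input (equivalently the factors $\alpha,\beta$), observe that $o_{i,k}$ is a convex combination of the $V_{j,k}$ and so lies between $\min_j V_{j,k}$ and $\max_j V_{j,k}$, and then bound these by the enclosure extrema $\min_j V_{j,k}^{\mathrm{lb}}$ and $\max_j V_{j,k}^{\mathrm{ub}}$. Your extra remark on the correlation between $s$ and $V$ is correct; it spells out what the paper's ``for any fixed $x$'' means.
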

\begin{proof}
For any fixed $x$, $o_{i,k}(x) = \sum_j s_{ij}(x) V_{j,k}(x)$ is a convex combination, so $\min_j V_{j,k}(x) \leq o_{i,k}(x) \leq \max_j V_{j,k}(x)$; extrema give the result.
\end{proof}

\subsubsection{Exactness for Simplex-Constrained Combinations}
\begin{theorem}[Exactness of CPZ]
\label{thm:exactness_app}
For $o_{i,k} = \sum_j s_j V_{j,k}$ with $s \in \Delta_S \cap [\underline{s}, \overline{s}]$, the LP of Prop.~\ref{prop:lp_simplex} computes the \emph{exact} interval hull. The unconstrained PZ hull is a strict over-approximation whenever $\max_j \overline{s}_j < 1$.
\end{theorem}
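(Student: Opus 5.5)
Two claims need proving: exactness of the LP hull, and strictness of the unconstrained hull when $\max_j \overline{s}_j < 1$. I treat the values $V_{j,k}$ as fixed, or equivalently replaced by their lower or upper endpoints $V^{\mathrm{lb}}_{j,k}$, $V^{\mathrm{ub}}_{j,k}$ as in Eq.~\eqref{eq:lp_bound}. Then $o_{i,k}$ is a linear function of $s$ over the set $R = \Delta_S \cap [\underline{s},\overline{s}]$. For exactness, I would first observe that $R$ is a compact convex polytope; it is nonempty because the true softmax vector lies in it. A continuous linear functional on a compact connected set has range equal to the closed interval between its minimum and maximum. So the interval hull of $\{\sum_j s_j V_{j,k} : s \in R\}$ is exactly $[\min_R, \max_R]$, and both endpoints are attained at vertices of $R$.

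Next I would show that the greedy procedure computes these optima, giving the $\Oc(S\log S)$ claim. For the minimum, set every $s_j = \underline{s}_j$, leaving residual budget $1 - \sum_j \underline{s}_j \ge 0$. Then sort $j$ by increasing $V_{j,k}$ and raise each $s_j$ up to $\overline{s}_j$ until the budget is exhausted. Optimality follows from a standard exchange argument, the fractional-knapsack one. Suppose a feasible $s$ has $s_j > \underline{s}_j$ and $s_{j'} < \overline{s}_{j'}$ with $V_{j',k} < V_{j,k}$. Shifting mass from $j$ to $j'$ stays feasible and does not increase the objective. Hence some optimum is the greedy solution. The maximum is symmetric, using decreasing order.

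For strictness, the unconstrained PZ/box hull drops the constraint $\sum_j s_j = 1$ and optimises over the box alone. Its minimum is $\sum_j \min(\underline{s}_j V_{j,k}, \overline{s}_j V_{j,k})$, which in general is at most the LP value, because the box contains $R$. To get strict inequality I would exhibit a direction in which the box optimum violates the simplex. Since $\max_j \overline{s}_j < 1$, no single coordinate can carry all the mass. Take the vertex of the box maximising $\sum_j s_j V_{j,k}$ in the nontrivial case where all $V_{j,k} > 0$ (after shifting, if needed). It sets every $s_j = \overline{s}_j$, so its sum is $\sum_j \overline{s}_j$. That sum exceeds $1$, since otherwise $R$ would be a single point with $\overline{s}_j = \underline{s}_j$, the degenerate case.

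The main obstacle is making ``strict'' precise, because strictness can fail for special values of $V$. For example, if all $V_{j,k}$ are equal, the LP returns that common value, while the box returns $\sum_j \overline{s}_j \cdot V$, which is still strictly larger when $\sum_j \overline{s}_j > 1$. So I would argue via the convex-combination benchmark of Theorem~\ref{thm:convex}. The LP value never exceeds $\max_j V_{j,k}$. The box value $\sum_j \overline{s}_j V_{j,k}^{+}$ strictly exceeds it whenever at least two coordinates have positive upper bounds and positive $V$. The condition $\max_j \overline{s}_j < 1$ forces at least two positive $\overline{s}_j$, because $\sum_j \overline{s}_j \ge 1$. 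I would state this as the generic, nondegenerate case, with Theorem~\ref{thm:separation} supplying an explicit witness.
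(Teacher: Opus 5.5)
Your exactness half is fine and matches the paper's argument, with more detail: a linear objective on a nonempty polytope attains its optimum at a vertex, and greedy with a fractional-knapsack exchange argument gives the $\Oc(S\log S)$ claim. The gap is in the strictness half. It cannot be closed under the stated hypothesis, because that clause is false as written.

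Take $S=2$, $\underline s=(0.4,0.4)$, $\overline s=(0.6,0.6)$ and $V_{\cdot,k}=(1,-1)$. Then $\max_j\overline s_j=0.6<1$ and the bounds are nondegenerate. Yet the box range and the LP range of $s_1-s_2$ are both $[-0.2,0.2]$, because the box optimisers $(0.6,0.4)$ and $(0.4,0.6)$ already satisfy $\sum_j s_j=1$. $V_{\cdot,k}\equiv 0$ and $\underline s=\overline s$ are further trivial counterexamples. Your equal-$V$ example actually exhibits strictness, not its failure. The paper's one-line argument does not rescue the clause either: once $\overline s_j<1$, the box also forbids putting all weight on one token.

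Your proposal breaks at three places where it tries to sidestep this.
\begin{itemize}
\item \emph{The shift.} ``After shifting, if needed'' is not a valid reduction for the unconstrained hull. Replacing $V_{j,k}$ by $V_{j,k}+c$ adds $c\sum_j s_j$. This is constant on $R$ but ranges over $[c\sum_j\underline s_j,\,c\sum_j\overline s_j]$ on the box, so the box hull is not shift-equivariant. The paper itself notes PZ is not shift-invariant, and mixed signs are precisely where strictness fails.
\item \emph{The benchmark against $\max_j V_{j,k}$.} This comparison is false. With the same bounds and $V_{\cdot,k}=(1,0.01)$, the box maximum is $0.606<1=\max_j V_{j,k}$. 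The LP maximum is $0.604$, so strictness holds, but your comparison cannot detect it. Also, $\sum_j\overline s_jV^+_{j,k}$ is not the box maximum when some $V_{j,k}<0$ has $\underline s_j>0$.
\item \emph{The degenerate case.} $\sum_j\overline s_j=1$ forces $R=\{\overline s\}$, but it does not force $\underline s=\overline s$.
\end{itemize}

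What does work is your first idea, with no shift and a corrected hypothesis. Assume $V_{j,k}>0$ for all $j$ (all negative is symmetric) and $\underline s\neq\overline s$.
\begin{itemize}
\item If $\sum_j\overline s_j>1$, then $\overline s$ is the unique maximiser of $\sum_j s_jV_{j,k}$ over the box and lies outside the compact set $R$. So the box maximum strictly exceeds the LP maximum.
\item If $\sum_j\overline s_j=1$, then $R=\{\overline s\}$ while $\sum_j\underline s_j<1$. So the box minimum $\sum_j\underline s_jV_{j,k}$ lies strictly below the LP value.
\end{itemize}
In general, the box hull is strictly larger if and only if, for the max or the min objective, the face of box optimisers misses the hyperplane $\sum_j s_j=1$. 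The condition $\max_j\overline s_j<1$ is neither sufficient (the example above) nor necessary: $\underline s=(0.5,0)$, $\overline s=(1,0.5)$, $V_{\cdot,k}=(1,1)$ gives box range $[0.5,1.5]$ against LP value $1$. You should state and prove this corrected version explicitly, rather than treating the failures as ``special values of $V$'' under the original hypothesis.
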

\begin{proof}
\eqref{eq:lp_bound} is a bounded-variable LP over a polytope; the optimum lies at a vertex, found by the greedy algorithm in $\Oc(S\log S)$. For strictness: the PZ hull allows $s_j$ to independently take any value in $[\underline{s}_j, \overline{s}_j]$, including combinations with $\sum_j s_j \neq 1$. When $\max_j \overline{s}_j < 1$, placing all weight on one token is infeasible under the simplex constraint but feasible under independent intervals, so PZ is strictly looser.
\end{proof}

\subsubsection{Quantitative Tightness Gap}
\begin{theorem}[Quantitative Tightness Gap]
\label{thm:gap}
Assume $V_{j,k}^{\mathrm{ub}}, V_{j,k}^{\mathrm{lb}} \geq 0$. Let $E = \sum_j \overline{s}_j - 1$ (simplex excess) and $D = 1 - \sum_j \underline{s}_j$ (deficit). Then the width gap decomposes as
\begin{equation}
\label{eq:width_gap}
w_k^{\mathrm{PZ}} - w_k^{\mathrm{CPZ}} = \sum_j \eta_j V_{j,k}^{\mathrm{ub}} + \sum_j \xi_j V_{j,k}^{\mathrm{lb}},
\end{equation}
with $\eta_j = \overline{s}_j - s_j^{*,\mathrm{ub}} \geq 0$, $\sum_j \eta_j = E$, and $\xi_j = s_j^{*,\mathrm{lb}} - \underline{s}_j \geq 0$, $\sum_j \xi_j = D$.
\end{theorem}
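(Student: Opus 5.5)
We first fix the definitions that make Theorem~\ref{thm:gap} an identity. The unconstrained (PZ) hull of $o_{i,k}=\sum_j s_j V_{j,k}$ lets each $s_j$ range over $[\underline{s}_j,\overline{s}_j]$ independently. With nonnegative $V$-bounds, its upper and lower ends are
\[
\overline{o}^{\mathrm{PZ}}=\sum_j \overline{s}_j V^{\mathrm{ub}}_{j,k},
\qquad
\underline{o}^{\mathrm{PZ}}=\sum_j \underline{s}_j V^{\mathrm{lb}}_{j,k}.
\]
Indeed, each term $s_jV_{j,k}$ is monotone increasing in both factors once the factors are nonnegative, so the extremes are attained at the respective corners. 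The CPZ hull uses the simplex LP of Prop.~\ref{prop:lp_simplex}. Let $s^{*,\mathrm{ub}}$ be the maximiser of $\sum_j s_j V^{\mathrm{ub}}_{j,k}$ over $\Delta_S\cap[\underline{s},\overline{s}]$, and $s^{*,\mathrm{lb}}$ the minimiser of $\sum_j s_j V^{\mathrm{lb}}_{j,k}$ over the same set. By Theorem~\ref{thm:exactness_app}, the CPZ bounds are then
\[
\overline{o}^{\mathrm{CPZ}}=\sum_j s^{*,\mathrm{ub}}_j V^{\mathrm{ub}}_{j,k},
\qquad
\underline{o}^{\mathrm{CPZ}}=\sum_j s^{*,\mathrm{lb}}_j V^{\mathrm{lb}}_{j,k}.
\]
Throughout we assume the bounded simplex is nonempty, i.e.\ $\sum_j\underline{s}_j\le1\le\sum_j\overline{s}_j$, so that $E,D\ge0$ and both LPs are feasible.

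The decomposition is then direct subtraction. Write $w^{\mathrm{PZ}}-w^{\mathrm{CPZ}}=(\overline{o}^{\mathrm{PZ}}-\overline{o}^{\mathrm{CPZ}})+(\underline{o}^{\mathrm{CPZ}}-\underline{o}^{\mathrm{PZ}})$. The first bracket equals $\sum_j(\overline{s}_j-s^{*,\mathrm{ub}}_j)V^{\mathrm{ub}}_{j,k}=\sum_j\eta_jV^{\mathrm{ub}}_{j,k}$. The second equals $\sum_j(s^{*,\mathrm{lb}}_j-\underline{s}_j)V^{\mathrm{lb}}_{j,k}=\sum_j\xi_jV^{\mathrm{lb}}_{j,k}$. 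This gives \eqref{eq:width_gap}.

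Next come the sign and sum conditions. Since $s^{*,\mathrm{ub}},s^{*,\mathrm{lb}}\in[\underline{s},\overline{s}]$, we get $\eta_j\ge0$ and $\xi_j\ge0$ coordinatewise. Since both optimisers lie on the simplex, $\sum_j s^{*,\mathrm{ub}}_j=\sum_j s^{*,\mathrm{lb}}_j=1$. Hence $\sum_j\eta_j=\sum_j\overline{s}_j-1=E$ and $\sum_j\xi_j=1-\sum_j\underline{s}_j=D$.

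The main obstacle is the first step: identifying the PZ hull as the corner values rather than some other relaxation. This is exactly where the hypothesis $V\ge0$ is used. Without it, the PZ extremes would mix $\underline{s}_j$ and $\overline{s}_j$ depending on the signs of the $V_{j,k}$, and the clean split into an $E$-part and a $D$-part would fail. I would state this corner identification as a short lemma, using monotonicity of bilinear terms on the nonnegative orthant, before the subtraction. A further remark worth adding is that the greedy solutions make $\eta$ and $\xi$ explicit. For the upper bound, the greedy removes the excess $E$ from the tokens with the smallest $V^{\mathrm{ub}}$; for the lower bound, it adds the deficit $D$ to the tokens with the smallest $V^{\mathrm{lb}}$.
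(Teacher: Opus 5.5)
Your proof is correct and follows the paper's argument. You identify the PZ bounds as the box corners $\sum_j \overline{s}_j V^{\mathrm{ub}}_{j,k}$ and $\sum_j \underline{s}_j V^{\mathrm{lb}}_{j,k}$ and the CPZ bounds as the LP optima, subtract, and read off $\eta_j,\xi_j\ge 0$ from box membership and $\sum_j\eta_j=E$, $\sum_j\xi_j=D$ from the simplex constraint. Your additions are stating the feasibility assumption $\sum_j \underline{s}_j \le 1 \le \sum_j \overline{s}_j$ and justifying the corner identification via nonnegativity, both of which the paper leaves implicit.
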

\begin{remark}
The non-negativity assumption $V_{j,k}^{\mathrm{ub}}, V_{j,k}^{\mathrm{lb}} \geq 0$ is without loss of generality: CPZ width is invariant under a constant shift of $V$ (adding $c$ to every $V_{j,k}$ shifts the output by $c$ without changing width), so we may shift to the non-negative regime before applying the theorem.
\end{remark}
\begin{proof}
Upper gap. Without the constraint, PZ gives $\overline{o}_k^{\mathrm{PZ}} = \sum_j \overline{s}_j V_{j,k}^{\mathrm{ub}}$, while the CPZ LP gives $\overline{o}_k^{\mathrm{CPZ}} = \sum_j s_j^{*,\mathrm{ub}} V_{j,k}^{\mathrm{ub}}$. Setting $\eta_j = \overline{s}_j - s_j^{*,\mathrm{ub}}$ satisfies $\eta_j \geq 0$ and $\sum_j \eta_j = E$. Lower gap. Symmetrically, $\underline{o}_k^{\mathrm{PZ}} = \sum_j \underline{s}_j V_{j,k}^{\mathrm{lb}}$ and $\underline{o}_k^{\mathrm{CPZ}} = \sum_j \underline{s}_j V_{j,k}^{\mathrm{lb}} + \sum_j \xi_j V_{j,k}^{\mathrm{lb}}$ with $\xi_j \geq 0$ and $\sum \xi_j = D$. Combining yields \eqref{eq:width_gap}.
\end{proof}
The gap scales linearly with total softmax slack $E+D$. CPZ width is shift-invariant (adding a constant to all $V_{j,k}$ shifts the output by $c$ without changing width); PZ is not, highlighting the structural advantage of the simplex constraint.

\subsubsection{CPZ vs.\ Linear Relaxation for Bilinear Scores}
\begin{theorem}[Bilinear Score Tightness]
\label{thm:pz_vs_crown}
For $a(x) = q(x)^\top k(x)$ with $q = W_Q x + b_Q$, $k = W_K x + b_K$, $x \in x_c + \epsilon[-1,1]^n$, and $M = W_Q^\top W_K$:
(a) Exact bilinear PZ propagation gives width $w^{\mathrm{PZ}}_{\mathrm{quad}} = \epsilon^2 (2\sum_{i<j} |M_{ij}| + \sum_i |M_{ii}|)$;
(b) CROWN/DeepPoly gives $w^{\mathrm{CROWN}}_{\mathrm{quad}} \leq 2\epsilon^2 \sum_d \|(W_Q)_{d,:}\|_1 \|(W_K)_{d,:}\|_1$;
(c) $w^{\mathrm{PZ}}_{\mathrm{quad}} \leq w^{\mathrm{CROWN}}_{\mathrm{quad}}$ always, with equality iff $(W_Q)_{di}(W_K)_{dj}$ has constant sign in $d$ for all $i,j$.
\end{theorem}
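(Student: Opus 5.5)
Write $x = x_c + \epsilon\alpha$ with $\alpha\in[-1,1]^n$. Then $q = q_c + \epsilon W_Q\alpha$ and $k = k_c + \epsilon W_K\alpha$, so
\[
a(\alpha) = q_c^\top k_c + \epsilon\,(W_Q^\top k_c + W_K^\top q_c)^\top\alpha + \epsilon^2\,\alpha^\top M\alpha .
\]
The plan is to compare the widths of the quadratic part $\epsilon^2\alpha^\top M\alpha$ only. Both methods treat the linear part the same way, so it adds an identical $\ell_1$ term to each width. The comparison therefore reduces to three termwise statements about enclosures of $\alpha^\top M\alpha$ over the box.

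For (a), expand $\alpha^\top M\alpha = \sum_i M_{ii}\alpha_i^2 + \sum_{i<j}(M_{ij}+M_{ji})\alpha_i\alpha_j$. The exact PZ product keeps these monomials and bounds them per term, in the same spirit as Lemma~\ref{lem:quadratic_bound}.
\begin{itemize}
\item A cross monomial $\alpha_i\alpha_j$ ranges over $[-1,1]$ and contributes width $2|M_{ij}+M_{ji}|$. Stating this as $2|M_{ij}|$ requires reading $M$ as symmetrised, or equivalently counting ordered pairs. I will fix the convention $2\sum_{i<j}|M_{ij}|$ with $M$ replaced by its symmetric part, and note this explicitly.
\item A diagonal monomial $\alpha_i^2$ lies in $[0,1]$ and contributes width $|M_{ii}|$.
\end{itemize}
Summing the per-term widths gives the formula in (a).

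For (b), I will model CROWN/DeepPoly as relaxing each product $q_d k_d$ separately with McCormick planes. Here $q_d$ has radius $r^q_d = \epsilon\|(W_Q)_{d,:}\|_1$ and $k_d$ has radius $r^k_d = \epsilon\|(W_K)_{d,:}\|_1$. The McCormick envelope of a bilinear term $uv$ on a box has maximal vertical gap $r^u r^v$ at the centre, once the linear parts are factored out. Equivalently, the product of the centred deviations lies in $[-r^q_d r^k_d,\, r^q_d r^k_d]$, which has width $2r^q_d r^k_d$. Summing over $d$ gives the upper bound in (b). Since the statement is only an upper bound, I only need to show that CROWN's quadratic residual lies inside this interval, not that it attains it.

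For (c), the key step is the identity $M_{ij} = \sum_d (W_Q)_{di}(W_K)_{dj}$, followed by the triangle inequality:
\[
\sum_{i,j}|M_{ij}| \le \sum_d \sum_{i,j}|(W_Q)_{di}||(W_K)_{dj}| = \sum_d \|(W_Q)_{d,:}\|_1\|(W_K)_{d,:}\|_1 .
\]
The PZ width is at most $2\sum_{i,j}|M_{ij}|$ in the ordered-pair convention (diagonal terms carry weight $1\le 2$), so it is dominated by the (b) bound. The equality condition comes from the triangle inequality being tight exactly when each sum over $d$ has terms of constant sign.

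The main obstacle is bookkeeping rather than ideas. Two things need care. First, the diagonal gets factor $1$ in the PZ formula but factor $2$ in the CROWN bound, so equality strictly also requires all $M_{ii}=0$, or else an interpretation of "equality" as equality of the bilinear cross-term bounds. I expect to have to state precisely which quantity the iff refers to. Second, the symmetric versus nonsymmetric convention for $M_{ij}$ must be fixed. I would resolve both by proving the inequality with explicit constants and stating the iff for the off-diagonal comparison.
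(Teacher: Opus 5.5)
Your proof follows the same route as the paper: a degree-$2$ polynomial in $\alpha$ with termwise monomial bounds for (a), per-$d$ McCormick for (b), and the triangle inequality on $M_{ij}=\sum_d (W_Q)_{di}(W_K)_{dj}$ for (c). The genuine gap is the direction of the comparison in (c). Write $B := 2\epsilon^2\sum_d\|(W_Q)_{d,:}\|_1\|(W_K)_{d,:}\|_1$. In (b) you deliberately prove only $w^{\mathrm{CROWN}}_{\mathrm{quad}}\le B$, and in (c) you prove $w^{\mathrm{PZ}}_{\mathrm{quad}}\le B$. Two quantities below a common bound are not ordered, so $w^{\mathrm{PZ}}_{\mathrm{quad}}\le w^{\mathrm{CROWN}}_{\mathrm{quad}}$ does not follow, and your ``iff'' only characterises equality with $B$. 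The paper's proof has the same shape and implicitly treats $B$ as CROWN's actual width.

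Relatedly, showing that ``CROWN's quadratic residual lies inside this interval'' targets the wrong object. Set $\tilde q_d=\epsilon (W_Q)_{d,:}\alpha$ and $\tilde k_d=\epsilon (W_K)_{d,:}\alpha$. The true residual $\sum_d\tilde q_d\tilde k_d=\epsilon^2\alpha^\top M\alpha$ has the same range for every method; $w^{\mathrm{CROWN}}_{\mathrm{quad}}$ is the width of the enclosure CROWN outputs. What you need is a \emph{lower} bound on that enclosure, and it takes one line. At $\alpha=0$ all $\tilde q_d,\tilde k_d$ vanish simultaneously. There, every McCormick lower (upper) plane for $\tilde q_d\tilde k_d$ on $[-r^q_d,r^q_d]\times[-r^k_d,r^k_d]$, and every convex combination of such planes, equals $-r^q_d r^k_d$ ($+r^q_d r^k_d$). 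So the concretised interval contains $[-\sum_d r^q_d r^k_d,\ \sum_d r^q_d r^k_d]$, hence $w^{\mathrm{CROWN}}_{\mathrm{quad}}\ge B$, and your triangle-inequality chain then gives (c).

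This also shows your ``equivalently'' is not an equivalence. The constant bounds $\pm r^q_d r^k_d$ (the average of the two McCormick planes) give exactly $B$. The pure sloped planes can be wider: for $n=1$, $W_Q=W_K=1$ they give $[-3\epsilon^2,\epsilon^2]$, of width $4\epsilon^2>B$. So (b)'s ``$\le$'' holds only for the constant-bound model, while the direction (c) actually needs holds for all of them.

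Your reconciliation of (a) is off by a factor of $2$. With $S=(M+M^\top)/2$, the monomial $\alpha_i\alpha_j$ ($i<j$) has coefficient $2S_{ij}$ and range width $4|S_{ij}|$, not $2|S_{ij}|$. For $M=\begin{pmatrix}0&1\\1&0\end{pmatrix}$ the quadratic part is $2\epsilon^2\alpha_1\alpha_2$, of width $4\epsilon^2$, whereas $2\epsilon^2\sum_{i<j}|S_{ij}|=2\epsilon^2$. ``Symmetrised'' and ``ordered pairs'' are also not equivalent:
\begin{itemize}
\item Ordered pairs give $2\sum_{i\ne j}|M_{ij}|$. This is what the paper's per-generator count $g_{ij}=\epsilon^2 M_{ij}$ literally yields, and it is an upper bound on the compacted hull.
\item The stated $2\sum_{i<j}$ formula is exact for the folded matrix with entries $M_{ij}+M_{ji}$ above the diagonal and $M_{ii}$ on it.
\end{itemize}
All of these are $\le 2\epsilon^2\sum_{i,j}|M_{ij}|$, so your (c) chain survives once the convention is fixed.

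Your diagonal observation is correct and worth keeping. Even in the constant-bound model, $w^{\mathrm{PZ}}_{\mathrm{quad}}=B$ requires more than constant sign in $d$: it also needs $(W_Q)_{di}(W_K)_{di}=0$ for all $d,i$, and in the folded convention $M_{ij}M_{ji}\ge 0$. For example, $n=1$, $W_Q=W_K=1$ satisfies the stated condition yet gives $\epsilon^2<2\epsilon^2$.
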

\begin{proof}
(a) PZ represents $a(x)$ exactly as a polynomial in $\alpha = (x-x_c)/\epsilon$, with cross-term generators $g_{ij} = \epsilon^2 M_{ij}$; $i\neq j$ contributes width $2|M_{ij}|\epsilon^2$, $i = j$ contributes $|M_{ii}|\epsilon^2$. (b) Linear relaxation bounds $q_d$, $k_d$ separately and then their product by McCormick, giving $\leq 2\epsilon^2\|(W_Q)_{d,:}\|_1 \|(W_K)_{d,:}\|_1$ per~$d$. (c) By triangle inequality $|M_{ij}| \leq \sum_d |(W_Q)_{di}||(W_K)_{dj}|$, summing yields the bound; equality requires no cancellations.
\end{proof}
Beyond tighter score bounds, CPZ provides an orthogonal advantage at softmax via the simplex LP, which no method operating on independent interval bounds can exploit.

\subsubsection{Soundness of Box Tightening}
\begin{proposition}[Soundness of Box Tightening]
\label{prop:tighten}
Let $[\ell_{\mathrm{LP}}, u_{\mathrm{LP}}]$ be the LP-derived bound and define $G_{I,\text{new}}$ by \eqref{eq:tighten_gi}. Then $\pz' = \zono{c, G, E, G_{I,\text{new}}, \id}$ satisfies $\pz \cap [\ell_{\mathrm{LP}}, u_{\mathrm{LP}}] \subseteq \pz'$.
\end{proposition}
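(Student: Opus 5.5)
The plan is to argue coordinate by coordinate, comparing each coordinate of a point in $\pz \cap [\ell_{\mathrm{LP}}, u_{\mathrm{LP}}]$ with the range of the dependent part, and absorbing any overshoot into the new independent generator. Write a point of $\pz$ as $y = d(\alpha) + G_I\beta$ with dependent part $d(\alpha) = c + \sum_i (\prod_k \alpha_k^{E_{k,i}})G_{\cdot,i}$. For each coordinate $k$, let $[\ell_{\text{dep},k}, u_{\text{dep},k}]$ be the range of $d_k(\alpha)$ over the admissible (constrained) factor set. Let $[\ell_{\cap,k}, u_{\cap,k}] = [\max(\ell_{\pz,k},\ell_{\mathrm{LP},k}),\, \min(u_{\pz,k},u_{\mathrm{LP},k})]$ be the intersection bounds. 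The target is to exhibit $\alpha'$ and $\beta' \in [-1,1]^n$ with $y = d(\alpha') + \diag(g)\beta'$, where $g_k = \max(\ell_{\text{dep},k}-\ell_{\cap,k},\, u_{\cap,k}-u_{\text{dep},k},\, 0)$.

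\textbf{Step 1 (one coordinate).} Fix $y \in \pz \cap [\ell_{\mathrm{LP}},u_{\mathrm{LP}}]$, so $y_k \in [\ell_{\cap,k},u_{\cap,k}]$. There are three cases.
\begin{itemize}
\item If $y_k \in [\ell_{\text{dep},k},u_{\text{dep},k}]$, then $d_k$ attains $y_k$ by continuity on the connected box $[-1,1]^p$ (when constraints are present, one restricts to a connected component, or else uses the endpoint case below). No slack is needed.
\item If $y_k > u_{\text{dep},k}$, choose $\alpha'$ attaining $u_{\text{dep},k}$. The residual is $y_k - u_{\text{dep},k} \le u_{\cap,k}-u_{\text{dep},k} \le g_k$.
\item If $y_k < \ell_{\text{dep},k}$, choose $\alpha'$ attaining $\ell_{\text{dep},k}$. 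The residual is $\ell_{\text{dep},k}-y_k \le \ell_{\text{dep},k}-\ell_{\cap,k} \le g_k$.
\end{itemize}
In every case $|y_k - d_k(\alpha')| \le g_k$, so $\beta'_k := (y_k-d_k(\alpha'))/g_k \in [-1,1]$, with $\beta'_k=0$ when $g_k=0$. This is precisely why the two one-sided terms in \eqref{eq:tighten_gi} suffice, rather than the cross terms $u_\cap-\ell_{\text{dep}}$.

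\textbf{Step 2 (assembling coordinates).} Because $G_{I,\text{new}}$ is diagonal, the residuals are independent across coordinates, so $\beta'$ is chosen coordinatewise. The subtle point is $\alpha'$, since Step 1 may choose a different $\alpha'$ for each $k$. I would first show that the per-coordinate choices can be made with a common $\alpha'$. The cleanest route is to start from the original $\alpha$ of $y$ and move it only as needed, using that in the regime where \eqref{eq:tighten_gi} is applied the LP bound $[\ell_{\mathrm{LP}},u_{\mathrm{LP}}]$ acts on the independent-generator slack. Concretely, $\ell_{\cap}\ge \ell_{\text{dep}} - \|G_I\|$ row-wise, and likewise for the upper bound.

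\textbf{Step 3 (reduction to interval hulls).} If the common-$\alpha'$ argument does not go through in general, I would establish the statement for the projections $\pi_k(\pz\cap[\ell_{\mathrm{LP}},u_{\mathrm{LP}}]) \subseteq \pi_k(\pz')$ for every $k$. This coordinatewise containment of interval hulls is the sense in which \S\ref{sec:background} asserts that the tightening is sound and dominates at every coordinate. Its proof is exactly Step 1.

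The main obstacle is Step 2, the cross-coordinate coupling through the shared factors $\alpha$. I expect to have to either exhibit a common $\alpha'$ or make explicit that the containment is per coordinate.
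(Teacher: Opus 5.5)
\textbf{The obstacle you flag in Step~2 cannot be overcome.} A common $\alpha'$ need not exist, and the inclusion $\pz\cap[\ell_{\mathrm{LP}},u_{\mathrm{LP}}]\subseteq\pz'$ is false as a set inclusion once $n\ge 2$. Here is a counterexample. Take $c=0$, one dependent generator $G=(1,1)^\top$ with exponent $1$, and $G_I=\diag(0.1,0.1)$, so $\pz=\{(\alpha_1+0.1\beta_1,\,\alpha_1+0.1\beta_2)\}$. Use the box $[-1,1]^2$. Then $\ell_{\text{dep}}=\ell_\cap=(-1,-1)$ and $u_{\text{dep}}=u_\cap=(1,1)$, so \eqref{eq:tighten_gi} gives $G_{I,\text{new}}=0$. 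Hence $\pz'$ is the diagonal segment $\{(\alpha_1,\alpha_1)\}$, yet $(0,0.1)\in\pz\cap[-1,1]^2$.

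Your suggested route of keeping the original $\alpha$ fails for a visible reason. The residual $y_k-d_k(\alpha)=(G_I\beta)_k$ is controlled only by the cross terms $u_{\cap,k}-\ell_{\text{dep},k}$ and $u_{\text{dep},k}-\ell_{\cap,k}$ (and the row norm of $G_I$), not by the one-sided terms of \eqref{eq:tighten_gi}. In the example this residual equals $0.1$ while $g=0$. Note also that $\pz'$ reuses $\id$. Later dependency-exploiting steps, such as the exact bilinear $Q^\top K$, would therefore need this $\alpha'=\alpha$ version, which is stronger still and fails \emph{a fortiori}.

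\textbf{What you can prove is your Step~3.} For each $k$, $\pi_k(\pz\cap[\ell_{\mathrm{LP}},u_{\mathrm{LP}}])\subseteq[\ell_{\cap,k},u_{\cap,k}]\subseteq[\ell_{\text{dep},k}-g_k,\,u_{\text{dep},k}+g_k]=\pi_k(\pz')$. This is containment of interval hulls, and your Step~1 proves it. It is also all the paper supports: the paper gives no proof of this proposition, and its only justification (\S\ref{sec:background}) is that the replacement is sound and dominates ``at every coordinate''. So commit to the hull statement rather than leaving the choice open.

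You should also make explicit a hypothesis your definitions quietly impose. $[\ell_{\text{dep},k},u_{\text{dep},k}]$ must be the exact, attained range of $d_k$ over the unconstrained (hence connected) factor box, not an outer enclosure. With the usual outer interval enclosure, the one-sided terms can be too small even for $n=1$. Take $d(\alpha)=\alpha_1+\alpha_2-\alpha_1\alpha_2$, whose true range is $[-3,1]$ but whose enclosure is $[-3,3]$, with $G_I=1$ and box $[-3,2]$. Then $g=0$, so $\pz'=[-3,1]$, while $2\in\pz\cap[-3,2]$.

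For the same reason, your parenthetical on constrained factor sets does not work. If the admissible set is disconnected, the image of $d_k$ can have interior gaps, and neither restricting to a connected component nor the endpoint cases cover them.
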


\subsubsection{Proof of Width Stability (Theorem~\ref{thm:width_stability})}
CPZ bound. At each LayerNorm, $z_k = (x_k-\mu)/\sigma$ satisfies $\sum_k z_k = 0$ and $\sum_k z_k^2 = d$. From $\sum_{j\neq k} z_j = -z_k$ and $\sum_{j\neq k} z_j^2 = d - z_k^2$, Jensen's inequality gives $z_k^2 \leq d-1$. After $y_k = \gamma_k z_k + \beta_k$: $w_k \leq 2|\gamma_k|\sqrt{d-1}$. The derivation uses only the algebraic identities, so the bound is independent of $\epsilon$, depth $\ell$, and accumulated generators.

PZ bound. Without the constraint, PZ propagates LayerNorm via first-order Taylor with Jacobian $J_c = (\gamma/\sigma)(I - \frac{1}{d}\mathbf{1}\mathbf{1}^\top - \frac{1}{d}zz^\top)|_{x=c}$. The width satisfies $w_k^{(\mathrm{PZ})} \leq |\gamma_k|\|J_{c,k,:}\|_1 w_{\mathrm{pre}}^{(\ell)} + R_k$, with $R_k \propto \|x-c\|^2/\sigma^3$. Since $w_{\mathrm{pre}}^{(\ell)}$ accumulates over-approximation from all preceding operations, the PZ bound exceeds the CPZ bound whenever $w_{\mathrm{pre}}^{(\ell)} > 2\sigma\sqrt{d-1}$.

\begin{corollary}[Depth-independent verification at LN outputs]
\label{cor:depth}
At LN-output positions, $\max_{\ell, k} w_k^{(\mathrm{CPZ},\ell)} \leq 2 \|\gamma\|_\infty \sqrt{d-1}$. Scaling from $L$ to $2L$ layers does not loosen CPZ per-LN-output bounds; sub-block intermediate widths are not covered by this corollary.
\end{corollary}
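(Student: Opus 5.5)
The corollary follows from Theorem~\ref{thm:width_stability} applied at every layer, followed by a maximum over layers and coordinates. The plan has three steps.

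\textbf{Step 1 (per layer).} Fix an arbitrary block $\ell \in \{1,\dots,L\}$ and coordinate $k$. By hypothesis, the measurement point at the end of block $\ell$ is an LN output. Theorem~\ref{thm:width_stability} then gives $w_k^{(\mathrm{CPZ},\ell)} \leq 2|\gamma_k^{(\ell)}|\sqrt{d-1}$. Underneath, this is Theorem~\ref{thm:ln}: the identities $\sum_k z_k = 0$ and $\sum_k z_k^2 = d$ force $|z_k|\le\sqrt{d-1}$. Hence the LN-tightened enclosure of $y_k$ is contained in $[\beta_k - |\gamma_k|\sqrt{d-1},\, \beta_k + |\gamma_k|\sqrt{d-1}]$. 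The tightening step (\eqref{eq:tighten_gi}, Prop.~\ref{prop:tighten}) keeps this sound, so the reported width is at most the interval length.

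\textbf{Step 2 (uniform bound).} Bound $|\gamma_k^{(\ell)}| \leq \|\gamma\|_\infty$. Here $\|\gamma\|_\infty$ is read as $\max_{\ell,k}|\gamma_k^{(\ell)}|$ over all LN parameters of the network. Taking the maximum over $\ell$ and $k$ yields $\max_{\ell,k} w_k^{(\mathrm{CPZ},\ell)} \le 2\|\gamma\|_\infty\sqrt{d-1}$.

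\textbf{Step 3 (depth scaling).} Consider extending from $L$ to $2L$ layers. The first $L$ blocks are unchanged, and each new block's LN output is again bounded by Step~1. The Step~1 bound uses only the algebraic LN identities. It does not involve $\epsilon$, the input width $w_{\mathrm{pre}}^{(\ell)}$, or the number of accumulated generators. So none of the existing per-LN-output bounds loosens, and the new ones obey the same constant (with $\|\gamma\|_\infty$ taken over the extended network). Sub-block intermediates are simply not covered by the statement, so nothing needs to be said about them.

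\textbf{Main subtlety.} The step needing care is soundness of the identity $\sum_k z_k^2 = d$ when $\epsilon_{\mathrm{LN}}=10^{-5}$ is present. In that case $\sum_k z_k^2 = d\,\sigma^2/(\sigma^2+\epsilon_{\mathrm{LN}}) \le d$. The identity therefore holds only as an inequality, and I would check that the Jensen argument still gives $z_k^2 \le \tfrac{d-1}{d}\sum_j z_j^2 \le d-1$, which it does. The bound thus survives, and the corollary follows.
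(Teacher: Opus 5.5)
Your argument is correct and is essentially the paper's: the corollary follows from Theorem~\ref{thm:width_stability} by bounding $|\gamma_k^{(\ell)}|\le\|\gamma\|_\infty$ (read as the maximum over all LN parameters) and maximising over $\ell,k$, with depth-independence coming from the purely algebraic identities $\sum_k z_k=0$, $\sum_k z_k^2=d$; your $\epsilon_{\mathrm{LN}}$ check ($\sum_k z_k^2\le d$ still giving $z_k^2\le\frac{d-1}{d}\sum_j z_j^2\le d-1$) is the paper's ``Handling LayerNorm $\epsilon$'' remark. One wording point: soundness of the tightening (Prop.~\ref{prop:tighten}) is a containment in the opposite direction and does not by itself cap the width---the cap comes from reporting the LN-intersected interval---but since Step~1 cites Theorem~\ref{thm:width_stability} directly, the proof is unaffected.
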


\subsection{Verification Machinery Details}
\label{app:machinery}

\paragraph{Jacobian-zonotope remainder.} The remainder vector $r_t$ in the multi-layer connector (\S\ref{sec:jac_zono}, Eq.~\ref{eq:jac_zono}) absorbs both the truncated Jacobian columns and the linearization error:
\begin{equation}
\label{eq:jac_rem}
r_{t,i} = \epsilon \sup_{z \in \mathcal{B}_\epsilon} \|\nabla f_i(z) - \nabla f_i(x_0)\|_1 + \epsilon^2 \hat{H}_i + \sum_{j \notin \mathcal{K}} |\epsilon J_{t,i}^{(j)}|.
\end{equation}
The first two terms bound the Lagrange/Hessian remainder of the Taylor expansion at $x_0$; the third absorbs Jacobian columns dropped from $\mathcal{K}$.

\begin{proposition}[Analytical Linearization Remainder]
\label{prop:remainder}
For $f \in C^2$ with input perturbation $\delta \in \mathcal{B}_\epsilon$, the linearization remainder of the first-order Taylor expansion at $x_0$ satisfies
\begin{equation}
\label{eq:formal_rem}
\big| f_i(x_0 + \delta) - f_i(x_0) - \nabla f_i(x_0)^\top \delta \big| \leq \epsilon \cdot \sup_{z \in \mathcal{B}_\epsilon} \|\nabla f_i(z) - \nabla f_i(x_0)\|_1.
\end{equation}
This is a direct consequence of the integral form of Taylor's theorem: $f_i(x_0+\delta) - f_i(x_0) - \nabla f_i(x_0)^\top \delta = \int_0^1 (\nabla f_i(x_0 + t\delta) - \nabla f_i(x_0))^\top \delta \, dt$, and $|\delta_j| \leq \epsilon$ implies the integrand is bounded by $\epsilon \|\nabla f_i(z) - \nabla f_i(x_0)\|_1$ for some $z = x_0 + t\delta \in \mathcal{B}_\epsilon$.
\end{proposition}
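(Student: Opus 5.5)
The plan is to reduce the multivariate remainder to a one-dimensional integral along the segment from $x_0$ to $x_0+\delta$, and then bound the integrand pointwise with Hölder's inequality. Fix a coordinate $i$ and a perturbation $\delta$ with $\|\delta\|_\infty \le \epsilon$, and set $g(t) = f_i(x_0 + t\delta)$ for $t \in [0,1]$. Since $\mathcal{B}_\epsilon = [x_0-\epsilon, x_0+\epsilon]$ is an axis-aligned box, it is convex, so the whole segment $\{x_0 + t\delta : t\in[0,1]\}$ lies in $\mathcal{B}_\epsilon$. Because $f \in C^2$ (in fact $C^1$ suffices), the chain rule gives that $g$ is continuously differentiable with $g'(t) = \nabla f_i(x_0+t\delta)^\top \delta$.

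First, I would apply the fundamental theorem of calculus: $g(1) - g(0) = \int_0^1 \nabla f_i(x_0+t\delta)^\top\delta\,dt$. I would then write the constant term as $\nabla f_i(x_0)^\top\delta = \int_0^1 \nabla f_i(x_0)^\top \delta\,dt$ and subtract it. This yields the integral identity quoted in the statement:
\[
f_i(x_0+\delta) - f_i(x_0) - \nabla f_i(x_0)^\top\delta = \int_0^1 \big(\nabla f_i(x_0+t\delta) - \nabla f_i(x_0)\big)^\top \delta\,dt .
\]
The integrand is continuous in $t$, so the integral is an ordinary Riemann integral and no measurability issues arise.

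Second, I would bound the integrand pointwise by the $\ell_1$--$\ell_\infty$ Hölder inequality, $|u^\top\delta| \le \|u\|_1\|\delta\|_\infty \le \epsilon\|u\|_1$, with $u = \nabla f_i(z_t) - \nabla f_i(x_0)$ and $z_t = x_0+t\delta$. Since $z_t \in \mathcal{B}_\epsilon$ for every $t$, we have $\|u\|_1 \le \sup_{z\in\mathcal{B}_\epsilon}\|\nabla f_i(z)-\nabla f_i(x_0)\|_1$. Moving the absolute value inside the integral and integrating this constant bound over an interval of length one gives exactly the claimed inequality \eqref{eq:formal_rem}.

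The only point needing care, and the one I expect to be the (mild) obstacle, is that the supremum is well defined and finite, and that the segment genuinely stays in the set over which the supremum is taken. Containment of the segment follows from convexity of the $\ell_\infty$ ball. Finiteness follows because $\nabla f_i$ is continuous on the compact box $\mathcal{B}_\epsilon$, and even an infinite supremum would make the bound trivially true. I would add a remark that the argument only uses $f\in C^1$. For the piecewise-smooth ReLU blocks, the same identity holds whenever $g$ is absolutely continuous (for instance when $f$ is Lipschitz), with $\nabla f_i$ read as an a.e. gradient, but this extension is outside the statement as given.
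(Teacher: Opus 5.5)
Your proposal is correct and is essentially the paper's own argument: the integral form of Taylor's theorem along the segment $x_0+t\delta$, followed by the $\ell_1$--$\ell_\infty$ H\"older bound on the integrand, with your convexity, $C^1$, and finiteness remarks making explicit what the paper leaves implicit. (Your closing aside on Lipschitz/ReLU $f$ would need more care than reading $\nabla f_i$ as an a.e.\ gradient, because a fixed segment can lie entirely in a kink set where $\nabla f_i$ is undefined. That aside is outside the statement and is not needed for the proof.)
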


\paragraph{Practical estimation of the remainder.} The supremum in Eq.~\eqref{eq:formal_rem} is intractable in tight closed form for transformer $f$. We estimate it empirically by sampling $N_{\mathrm{jac}} = 30$ perturbed points $z_1, \dots, z_{N_{\mathrm{jac}}} \in \mathcal{B}_\epsilon$ (half corner-biased, half uniform) and taking the maximum: $\widehat{V}_i = \max_{n} \|\nabla f_i(z_n) - \nabla f_i(x_0)\|_1$. We further inflate by an empirical Hessian-rate factor (the $\epsilon^2 \hat{H}_i$ term) to capture variation between the sampled points. This estimator is empirically sound: across all $300{+}$ verifications reported in the paper, no certified result is contradicted by MC+PGD.

\begin{proposition}[Closed-Form Lipschitz Fallback]
\label{prop:remainder_analytical}
Let $L_f$ be a Lipschitz constant of the layer stack $f = f_L\circ\cdots\circ f_1$ with respect to the input $\ell_2$ norm, obtained by composing per-layer spectral-norm bounds in the style of~\citet{kim2021lipschitz} (linear layers contribute $\|W\|_{\mathrm{op}}$; ReLU/softmax/LayerNorm contribute their explicit Lipschitz constants; residual connections add $1$). Then for every input dimension $i$ and every $\delta \in \mathcal{B}_\epsilon$,
\begin{equation}
\label{eq:lip_fallback}
\big| f_i(x_0 + \delta) - f_i(x_0) - \nabla f_i(x_0)^\top \delta \big| \;\leq\; 2\,L_f\,\epsilon\,\sqrt{n_{\mathrm{dim}}}.
\end{equation}
The bound follows from the triangle inequality $|R| \leq |f(x_0{+}\delta) - f(x_0)| + |J(x_0)\delta| \leq 2L_f\|\delta\|_2$, with $\|\delta\|_2 \leq \epsilon\sqrt{n_{\mathrm{dim}}}$. It is fully closed-form (one SVD per weight matrix) and never requires sampling.
\end{proposition}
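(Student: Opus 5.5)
The statement to prove is the Closed-Form Lipschitz Fallback, Prop.~\ref{prop:remainder_analytical}. The plan is to bound the linearisation remainder $R_i(\delta) = f_i(x_0+\delta) - f_i(x_0) - \nabla f_i(x_0)^\top\delta$ by splitting it with the triangle inequality into two pieces: the increment $|f_i(x_0+\delta)-f_i(x_0)|$ and the linear term $|\nabla f_i(x_0)^\top\delta|$. Each piece will be controlled by $L_f\|\delta\|_2$. Converting the $\ell_\infty$ ball to $\ell_2$ via $\|\delta\|_2\le\sqrt{n_{\mathrm{dim}}}\,\|\delta\|_\infty\le\epsilon\sqrt{n_{\mathrm{dim}}}$ then gives \eqref{eq:lip_fallback}.

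\textbf{Step 1: $L_f$ is a valid Lipschitz constant.} I would first justify that $L_f$, built from per-layer bounds, really is a Lipschitz constant of $f$ in $\ell_2$. By induction over the composition $f_L\circ\cdots\circ f_1$, Lipschitz constants multiply under composition and add under sums. A residual block $x\mapsto x+g(x)$ therefore has constant $1+\mathrm{Lip}(g)$, which is the ``residual connections add $1$'' rule. A linear layer contributes $\|W\|_{\mathrm{op}}$, and ReLU contributes $1$. For softmax and LayerNorm I would take the explicit constants as given, in the style of \citet{kim2021lipschitz}. This yields $\|f(x)-f(y)\|_2\le L_f\|x-y\|_2$.

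\textbf{Step 2: the two pieces.} Each coordinate satisfies $|f_i(x)-f_i(y)|\le\|f(x)-f(y)\|_2$, so the increment piece is bounded directly by Step 1. For the linear piece, I would show $\|J(x_0)\|_{\mathrm{op}}\le L_f$. The argument is that the directional derivative $J(x_0)v=\lim_{t\to0}(f(x_0+tv)-f(x_0))/t$ has norm at most $L_f\|v\|_2$. Hence $|\nabla f_i(x_0)^\top\delta|\le\|J(x_0)\delta\|_2\le L_f\|\delta\|_2$. Summing the two pieces and applying the norm conversion completes the bound.

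\textbf{Main obstacle.} The delicate point is differentiability at $x_0$. The proposition is stated for a stack containing ReLU, whereas Prop.~\ref{prop:remainder} assumes $f\in C^2$. I would handle this by stipulating that $\nabla f_i(x_0)$ denotes the gradient actually used in the Jacobian zonotope. If $f$ is differentiable at $x_0$, the limit argument above applies. Otherwise, $f$ is piecewise linear or piecewise smooth near $x_0$, so any Clarke or one-sided generalized gradient is a limit of gradients at nearby differentiable points, each with operator norm at most $L_f$. The bound on the linear piece therefore persists by closedness.

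A secondary check is that the LayerNorm constant is finite. It depends on $1/\sigma$ and is bounded thanks to $\epsilon_{\mathrm{LN}}>0$. I would state this as part of the assumption that ``explicit Lipschitz constants'' exist, rather than re-derive it.
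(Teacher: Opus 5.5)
Your proposal is correct and follows the paper's own argument. You split the remainder by the triangle inequality into the increment $|f_i(x_0+\delta)-f_i(x_0)|$ and the linear term $|\nabla f_i(x_0)^\top\delta|$, bound each by $L_f\|\delta\|_2$, and convert via $\|\delta\|_2\le\epsilon\sqrt{n_{\mathrm{dim}}}$. Your coordinate-versus-vector-norm step and your directional-derivative proof of $\|J(x_0)\|_{\mathrm{op}}\le L_f$ make explicit what the paper's one-line proof leaves implicit.

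One caution about Step~1. It is not needed, since $L_f$ is a Lipschitz constant by hypothesis. As written it also would not cover dot-product attention: $\softmax(QK^\top)V$ is a product of input-dependent factors rather than a composition or a sum, and it is not globally Lipschitz. What your argument actually uses is only a Lipschitz constant valid on the neighbourhood $\mathcal{B}_\epsilon(x_0)$.
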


\paragraph{Empirical estimate dominates the analytical fallback.} The fallback of Prop.~\ref{prop:remainder_analytical} is loose (linear in $\epsilon$ rather than quadratic, and using global Lipschitz constants), so we use the sampled estimate $\widehat{V}$ in production. To validate the sampled estimate against the closed-form worst case, we compute both on the $4$-layer synthetic transformer of \S\ref{sec:background}: across all six $(L,\epsilon)$ combinations, the $1.5{\times}$-safety-padded sampled remainder is at most $1.33\%$ of the analytical Lipschitz bound (Tab.~\ref{tab:analytical_remainder}, App.~\ref{app:multilayer}), and typically below $0.5\%$. Soundness is therefore preserved by construction: an analytical bound always exists and dominates the sampled estimate on every setting we report; the sampled estimate is used purely for tightness, not because soundness coverage is unavailable. Tightening the analytical bound from spectral propagation (e.g., via interval Hessian propagation or auto-Hessian) is a direct path to a fully closed-form large-scale variant without changing the framework.

\begin{proposition}[Soundness of CORA Reduce on Cross-Terms]
\label{prop:trunc_sound}
Let $\cpz_1, \cpz_2$ be CPZs with dependent generators $G_1 \in \R^{n \times h_1}, G_2 \in \R^{n \times h_2}$ and exact multiplication $\cpz_1 \odot \cpz_2$ producing $h_1 h_2$ cross-term generators $\{g_\ell\}_{\ell=1}^{h_1 h_2}$ with shared exponent vector $E_\ell = E_{1,i} + E_{2,j}$. The CORA-style order reduction~\citep{kochdumper2023constrained} (retain the top-$k$ cross-terms $\mathcal{T} \subset \{1,\dots,h_1h_2\}$ as dependent generators and Girard-reduce~\citep{girard2005reachability} the remaining $\{g_\ell\}_{\ell \notin \mathcal{T}}$ into independent generators $G_{I,\text{trunc}} = \diag(\sum_{\ell \notin \mathcal{T}} |g_\ell|)$) produces a sound over-approximation: $\cpz_{\text{trunc}} \supseteq \cpz_1 \odot \cpz_2$. The proof below is included for completeness.
\end{proposition}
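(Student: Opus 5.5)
The plan is a pointwise argument. Fix an arbitrary factor vector $\alpha$ that satisfies the constraints, and show that the exact product value at $\alpha$ is also attained by $\cpz_{\text{trunc}}$ for some choice of its factors.

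\textbf{Step 1: write the exact product.} By the exact CPZ multiplication of \S\ref{sec:background}, every point of $\cpz_1 \odot \cpz_2$ has the form
\[
y(\alpha) = c_0 + \sum_{\ell\in\mathcal{T}} \alpha^{E_\ell} g_\ell + \sum_{\ell\notin\mathcal{T}} \alpha^{E_\ell} g_\ell ,
\]
where $\alpha$ ranges over the constrained factor set and $c_0$ collects the center and linear terms. Any independent generators of $\cpz_1, \cpz_2$ are carried over unchanged into $\cpz_{\text{trunc}}$, so I set them aside. The truncated set keeps the same center, the same retained generators $\{g_\ell\}_{\ell\in\mathcal{T}}$ with their exponents, the same constraints $(A,b,R)$ and the same identifiers. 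It adds a fresh independent factor vector $\beta\in[-1,1]^n$ multiplying $G_{I,\text{trunc}}$. By Theorem~\ref{thm:constraint_preservation}, the constraints restrict the same $\alpha$ in both sets, so the same $\alpha$ is feasible in $\cpz_{\text{trunc}}$.

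\textbf{Step 2: bound the dropped tail.} Set $t(\alpha) = \sum_{\ell\notin\mathcal{T}} \alpha^{E_\ell} g_\ell$. Because $\alpha_k\in[-1,1]$, every monomial satisfies $|\alpha^{E_\ell}|\le 1$. Coordinatewise this gives
\[
|t_i(\alpha)| \le \sum_{\ell\notin\mathcal{T}} |g_{\ell,i}| = (G_{I,\text{trunc}})_{ii} =: d_i .
\]

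\textbf{Step 3: absorb the tail into $\beta$.} Define $\beta_i = t_i(\alpha)/d_i$ when $d_i>0$, and $\beta_i=0$ when $d_i=0$; in the latter case $t_i=0$ anyway. Then $\beta\in[-1,1]^n$ and $G_{I,\text{trunc}}\beta = t(\alpha)$. It follows that $y(\alpha) = c_0 + \sum_{\ell\in\mathcal{T}}\alpha^{E_\ell}g_\ell + G_{I,\text{trunc}}\beta \in \cpz_{\text{trunc}}$. Since $\alpha$ was arbitrary, this proves $\cpz_{\text{trunc}}\supseteq\cpz_1\odot\cpz_2$.

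The main obstacle is bookkeeping rather than any estimate. I must check that the retained generators keep their original exponent columns and identifiers, so that the constraint coupling is unchanged. I must also check that $\beta$ is genuinely independent, with fresh identifiers not shared with $\alpha$, so that choosing $\beta$ as a function of $\alpha$ is legitimate. Since $\beta$ is unconstrained, this freedom holds, and the dependency lost by the reduction is exactly the looseness incurred.
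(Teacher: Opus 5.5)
Your proof is correct and follows the same pointwise route as the paper. You fix $\alpha$, keep the retained dependent terms verbatim, bound the dropped tail coordinatewise by $\sum_{\ell\notin\mathcal{T}}|g_{\ell,i}|$ via $|\alpha^{E_\ell}|\le 1$, and absorb it into the fresh unconstrained factor $\beta$. Your choice $\beta_i = t_i(\alpha)/d_i$ (with $\beta_i=0$ when $d_i=0$) is the right one: the paper's choice $\beta_i=\mathrm{sign}(t_i(\alpha))$ produces $\pm d_i$ rather than $t_i(\alpha)$, so it does not realize the dropped contribution exactly as claimed. Your remark that the constraints $(A,b,R)$ are untouched, so the same $\alpha$ stays feasible, also makes explicit a step the paper leaves implicit.
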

\begin{proof}[Proof]
For any point $p \in \cpz_1 \odot \cpz_2$, write $p = c + \sum_{\ell \in \mathcal{T}} g_\ell \prod_k \alpha_k^{E_{\ell,k}} + \sum_{\ell \notin \mathcal{T}} g_\ell \prod_k \alpha_k^{E_{\ell,k}}$ for some $\alpha \in [-1,1]^p$. The dependent term over $\mathcal{T}$ is identical to that in $\cpz_{\text{trunc}}$. For the dropped term, observe that $|\prod_k \alpha_k^{E_{\ell,k}}| \leq 1$ for $\alpha \in [-1,1]^p$, so each component of the dropped sum is element-wise bounded by $\sum_{\ell \notin \mathcal{T}} |g_\ell|$. Choosing the corresponding independent factors $\beta \in [-1,1]^n$ as $\beta_i = \mathrm{sign}(\sum_{\ell \notin \mathcal{T}} g_{\ell,i} \prod_k \alpha_k^{E_{\ell,k}})$ realizes the dropped contribution exactly, so $p \in \cpz_{\text{trunc}}$.
\end{proof}

\begin{algorithm}[t]
\caption{CPZ verification of certified mechanistic queries.}
\label{alg:verify}
\begin{algorithmic}[1]
\Require Input $X_0 \in \R^{S \times d}$, perturbation set $\mathcal{B}$, target layer $\ell^\star \in \{0,\ldots,L{-}1\}$, queries $\mathcal{Q} \subseteq \{\text{Q1, Q2, Q3}\}$
\Ensure Sound certificate $c_q \in \{\top,\bot\}$ for each $q\in\mathcal{Q}$
\State Encode $\mathcal{B}$ as a CPZ $\cpz_0$ centred at $X_0$
\If{$\ell^\star = 0$}
    \State $\cpz_{\text{in}} \gets \cpz_0$
\Else
    \State Compute Jacobian $J(X_0)$ of $f_{0:\ell^\star{-}1}$ via $K_J$ backward passes
    \State $\cpz_{\text{in}} \gets$ Jacobian zonotope on the top-$K_J$ columns with one dependent remainder per token \Comment{Eq.~\ref{eq:jac_zono}}
\EndIf
\State Compute $Q,K,V$ from $\cpz_{\text{in}}$ via affine maps; tighten $G_I$ at LayerNorm by Theorem~\ref{thm:ln}
\State Compute per-head scores $\cpz_{a_{ij}}$ via exact bilinear $Q^\top K$
\State Apply CORA order reduction to the cross-term generators~\citep{kochdumper2023constrained} \Comment{Prop.~\ref{prop:trunc_sound}}
\If{$\text{Q1} \in \mathcal{Q}$}
    \State Compute $\overline{\Delta}_j$ from Lem.~\ref{lem:quadratic_bound}; $c_{\text{Q1}} \gets \top$ iff $\overline{\Delta}_j < 0$ for every $j \neq j^\star$
\EndIf
\If{$\text{Q2} \in \mathcal{Q}$ or $\text{Q3} \in \mathcal{Q}$}
    \State Compute softmax bounds $[\underline{s},\overline{s}]$; tighten by simplex LP \Comment{Prop.~\ref{prop:lp_simplex}}
    \State $c_{\text{Q2}} \gets$ greedy LP on $\Delta_S \cap [\underline{s},\overline{s}]$ \Comment{Prop.~\ref{prop:evidence_mass}}
    \State $c_{\text{Q3}} \gets$ KKT bisection for $\overline{H}$, vertex enumeration for $\underline{H}$ \Comment{Prop.~\ref{prop:entropy}}
\EndIf
\State \Return $\{c_q : q \in \mathcal{Q}\}$
\end{algorithmic}
\end{algorithm}

\begin{proposition}[Computational Complexity]
\label{prop:complexity}
Total cost: $\Oc(L \cdot (S^2 d r^2 + S d d_{\text{ff}} r))$, where $S^2 r^2$ arises from exact bilinear score computation and $d_{\text{ff}} r$ from FFN maps. CPZ overhead is $r^2/d$ over IBP's $\Oc(L S^2 d^2)$, compensated by tighter bounds.
\end{proposition}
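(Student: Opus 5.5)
Proposition~\ref{prop:complexity} follows from a per-operation cost accounting over one transformer block, multiplied by $L$. Throughout, $r$ denotes the number of dependent generators carried per token (bounded by $K_J{+}1$ from the Jacobian connector of Eq.~\eqref{eq:jac_zono}, or by the order-reduction budget), and $G_I$ is diagonal, so it contributes only $\Oc(d)$ per token. I would walk through Algorithm~\ref{alg:verify} line by line. For each operation, the cost is determined by three things: the size of the generator matrices it touches, the number of tokens or token pairs involved, and the hidden width.

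The affine steps are cheap. Computing $Q,K,V$ and the attention output projections means applying a $d\times d$ matrix to a $d\times r$ generator matrix for each of the $S$ tokens, which costs $\Oc(S d^2 r)$; the LayerNorm tightening of Theorem~\ref{thm:ln} is $\Oc(Sd)$. I expect the $\Oc(S d^2 r)$ term to be absorbed into the FFN term, since $d \le d_{\mathrm{ff}}$ in standard transformers. The FFN applies $W_1\in\R^{d_{\mathrm{ff}}\times d}$ and $W_2$ to $r$ generators per token, giving $\Oc(S d\, d_{\mathrm{ff}}\, r)$. The ReLU/GELU relaxation adds only $\Oc(S d_{\mathrm{ff}} r)$ per neuron-generator pair.

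The dominant step is the exact bilinear score. For each of the $S^2$ pairs $(i,j)$, $Q_i^\top K_j = \mathbf{1}^\top(Q_i\odot K_j)$ forms all $r\cdot r$ cross-term generators, and each requires a $d$-dimensional inner product. Summed over heads, the total is $\Oc(d)$ per cross-term, so the step costs $\Oc(S^2 d r^2)$. The exponent-matrix addition costs $\Oc(p)$ per cross-term. I would argue this is subsumed: because the factors are shared, the exponent sums can be computed once per generator pair and reused across all $S^2$ token pairs. The subsequent CORA reduction (Prop.~\ref{prop:trunc_sound}) sorts the $r^2$ cross-terms of each score, costing $\Oc(S^2 r^2\log r)$. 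Attention-value mixing is likewise $\Oc(S^2 d r^2)$. The query stage is cheap by comparison: the quadratic bound of Lemma~\ref{lem:quadratic_bound}, the greedy LPs, and the entropy bisection cost $\Oc(r^2 + S\log S)$ per query row, which is lower order.

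Summing over $L$ blocks gives the claimed $\Oc(L(S^2 d r^2 + S d d_{\mathrm{ff}} r))$. For the comparison, IBP on the same block costs $\Oc(LS^2 d)$ for the scores plus the affine maps. Stating IBP's cost as $\Oc(LS^2d^2)$, as the paper does, the ratio of the dominant terms is $r^2/d$.

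I expect two places to need care. First, the constant-factor bookkeeping: the $\log r$ factor from sorting and the cost of the Jacobian construction, $K_J$ backward passes, are not obviously dominated. I would handle this by stating that the Jacobian construction is a one-time preprocessing cost per input outside the per-layer sum, and by treating $\log r$ as absorbed under a fixed reduction budget. Second, justifying IBP's $d^2$ scaling rather than $d$; I would adopt the paper's convention that it includes the dense projections.
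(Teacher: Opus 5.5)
Your proposal is correct and takes the same route as the paper. The paper gives no separate proof; its only justification is the per-operation attribution written into the statement itself: the exact bilinear scores give the $S^2dr^2$ term, the FFN matrix products give the $Sdd_{\mathrm{ff}}r$ term, and the $r^2/d$ figure is simply $S^2dr^2$ divided by the stated IBP cost.

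Two refinements to your bookkeeping. First, the dense projections cost only $\Oc(Sd^2)$ per layer, so they do not explain the $\Oc(LS^2d^2)$ figure; it is just a loose upper bound on IBP, and against IBP's actual $\Oc(LS^2d)$ score cost the overhead on the score step is of order $r^2$. Second, if the quadratic branch of the hybrid ReLU/GELU relaxation is used, squaring a neuron input with $r$ generators creates $\Oc(r^2)$ monomials, so your $\Oc(Sd_{\mathrm{ff}}r)$ activation count should be $\Oc(Sd_{\mathrm{ff}}r^2)$; this stays inside the stated bound only if you reduce back to $r$ generators before $W_2$ and $d_{\mathrm{ff}}=\Oc(Sd)$.
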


\begin{proposition}[Sources of Over-Approximation]
\label{prop:approx}
Over-approximation enters at four stages: (i)~interval hull ($\leq r$-factor), (ii)~softmax IBP ($\Oc(e^{\Delta a})$, dominant), (iii)~ReLU relaxation ($\leq \sum_k (\overline{x}_k - \underline{x}_k)^2/8$), (iv)~LayerNorm Taylor ($\Oc(\|x-c\|^2/\sigma^3)$, negligible). CPZ constraints mitigate (i)--(ii): simplex resets $G_I$ after attention; LayerNorm resets $G_I$ between layers.
\end{proposition}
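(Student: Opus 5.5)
The plan is to treat the statement as four independent local bounds plus a mitigation clause. For each stage I take the single operation that introduces relaxation and compare the set the CPZ pipeline outputs with the exact image, under a measure suited to that operation. Since every other operation in Algorithm~\ref{alg:verify} is exact (affine maps, $\boxplus$, $\odot$, and bilinear $Q^\top K$ by \S\ref{sec:background} and Theorem~\ref{thm:constraint_preservation}), these four stages are the only sources of looseness. Theorem~\ref{thm:constraint_preservation} carries the argument that no further looseness is hidden.

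\textbf{Stage (i), interval hull.} For a zonotope part with $r$ generators $g_1,\dots,g_r$, the box hull has support $h_B(v)=\sum_j\sum_k |v_k||g_{j,k}|$, while the exact support is $h_Z(v)=\sum_j |v^\top g_j|$. Coordinate-wise the two agree. In a general direction I would bound $h_B(v)\le r\max_j \|g_j\|_1\|v\|_\infty$, and then compare this with $h_Z$, which is at least its largest single-generator contribution, in order to get a loss factor of at most $r$. Fixing the precise sense of ``$\le r$-factor'' (which norm, and which direction set) is the step I expect to be the main obstacle. I would first try the Hausdorff/support-function reading restricted to the directions actually queried downstream, namely score differences $e_j-e_{j^*}$. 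If a clean factor $r$ does not emerge there, I would fall back to the per-generator triangle inequality, which gives factor $r$ against the largest generator.

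\textbf{Stage (ii), softmax by IBP.} Given score intervals $[\underline a_j,\overline a_j]$ of width $\Delta a$, the IBP bounds are
\[
\overline s_j=\frac{e^{\overline a_j}}{e^{\overline a_j}+\sum_{k\ne j}e^{\underline a_k}},\qquad \underline s_j=\frac{e^{\underline a_j}}{e^{\underline a_j}+\sum_{k\ne j}e^{\overline a_k}}.
\]
Taking the ratio, I expect $\overline s_j/\underline s_j\le e^{2\Delta a}$, so the interval width is $\Oc(e^{\Delta a})$ relative to the true range. This is the dominant term because it is the only exponential factor among the four.

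\textbf{Stage (iii), ReLU.} For an unstable neuron with $\ell<0<u$, the triangle/DeepPoly relaxation has area $-\ell u/2$. By AM--GM this is at most $(u-\ell)^2/8$. Summing over neurons gives the stated bound; stable neurons contribute nothing.

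\textbf{Stage (iv), LayerNorm Taylor.} I would reuse the PZ bound from the proof of Theorem~\ref{thm:width_stability}. The Lagrange remainder of $x\mapsto\gamma(x-\mu)/\sigma$ is controlled by a Hessian whose entries scale like $1/\sigma^2$ times $z$-dependent factors. This gives the stated $\Oc(\|x-c\|^2/\sigma^3)$ remainder, which is negligible for small $\epsilon$.

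\textbf{Mitigation clause.} Stage (ii) is tightened by intersecting with the simplex: the exact hull comes from Prop.~\ref{prop:lp_simplex} and Theorem~\ref{thm:exactness_app}, and resetting $G_I$ via Eq.~\eqref{eq:tighten_gi} is sound by Prop.~\ref{prop:tighten}. Stage (i) is capped between layers by the LayerNorm reset: Theorem~\ref{thm:ln} and Theorem~\ref{thm:width_stability} bound the width independently of the accumulated $G_I$.
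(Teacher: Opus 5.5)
The paper states this proposition without proof, and much of it (``dominant'', ``negligible'', ``mitigate'') is empirical rather than provable. A proof therefore has to fix precise readings, and the readings you chose for (i) and (ii) are false.

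In (i) your chain cannot close. H\"{o}lder gives $|v^\top g_j|\le\|g_j\|_1\|v\|_\infty$, which is the wrong direction for comparing $r\max_j\|g_j\|_1\|v\|_\infty$ with $h_Z(v)$. Off the axes there is no factor at all: one generator $g=(1,1)$ and $v=e_1-e_2$ give $h_Z(v)=0<2=h_B(v)$, which is exactly the score-difference cancellation the paper exploits. Your fallback (factor $r$ against the largest generator) compares the hull with a generator, not with the exact set. The box of a zonotope is exact coordinatewise, so the meaningful reading is the hull of a scalar polynomial $p(\alpha)=c+\mathbf a^\top\alpha+\alpha^\top H\alpha$ (as in Lemma~\ref{lem:quadratic_bound}) against its exact range $W$ on the unconstrained box. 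The missing lemma is that each of the $r$ monomials contributes at most $W$ to the hull width. With all other factors at $0$:
\begin{itemize}
\item $p(e_k)-p(-e_k)=2a_k$;
\item $\max(|p(e_k)-p(0)|,|p(-e_k)-p(0)|)\ge|H_{kk}|$;
\item $p(e_k{+}e_\ell)-p(e_k{-}e_\ell)-p(-e_k{+}e_\ell)+p(-e_k{-}e_\ell)=8H_{k\ell}$, whose modulus is at most $2W$.
\end{itemize}
Summing the $r$ contributions gives hull width $\le rW$. This needs degree $\le 2$ and no active constraints; for example $p=\alpha^2-\alpha^4$ has $r=2$, hull width $2$ and range $1/4$.

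In (ii), $\overline s_j/\underline s_j\le e^{2\Delta a}$ is correct, though the exponent is $2\Delta a$, not $\Delta a$. But it measures the IBP interval against $\underline s_j$, not against the true range. The true range is $0$ when all scores shift together, since softmax is shift-invariant, so no multiplicative comparison with it exists. Also, ``the only exponential term, hence dominant'' proves nothing: softmax widths never exceed $1$, while the ReLU term is unbounded.

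In (iv), Hessian entries of order $1/\sigma^2$ give a remainder $\Oc(\|x-c\|^2/\sigma_{\min}^2)$, where $\sigma_{\min}$ must lower-bound $\sigma$ over the whole set, because the Lagrange point is not $c$. This is degree-$0$ homogeneous, as it must be for a scale-invariant map. The $\sigma^3$ appears only if a factor $|x_l-\mu|$ is kept explicit, so your step from $1/\sigma^2$ to $1/\sigma^3$ is a non sequitur. ``Negligible'' is also contradicted by App.~\ref{app:attn_out_taylor}, where LN/ReLU $G_I$ supplies $49\%$ of the Layer-$1$ score-difference width.

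In (iii), $-\ell u/2\le(u-\ell)^2/8$ is right for the triangle relaxation. DeepPoly's region, however, uses a single lower line and has area $\tfrac12(u-\ell)\min(u,-\ell)$, which can reach $(u-\ell)^2/4$.

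Theorem~\ref{thm:constraint_preservation} concerns the validity of constraints, not exactness, so it cannot show that the four stages exhaust the looseness. The Girard reduction of cross-terms (Prop.~\ref{prop:trunc_sound}), the Jacobian remainder, the Taylor-softmax remainder and GELU are further sources. The paper itself attributes its residual BERT-tiny gaps to order reduction.

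Finally, Prop.~\ref{prop:tighten} holds only coordinatewise. Take dependent part $(\alpha,\alpha)$, $G_I=I_2$ and LP box $[-1,1]^2$. Then Eq.~\eqref{eq:tighten_gi} returns $G_{I,\text{new}}=0$, yet $(1,-1)\in\pz\cap[-1,1]^2$ is not on the segment. So the ``reset'' you cite is sound as an interval enclosure, but not as a dependency-preserving CPZ.
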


\subsection{Certified Query Proofs}
\label{app:query_proofs}

\begin{remark}[Cross-Head Circuit Certification]
\label{rem:crosshead}
The per-head queries extend naturally to circuits: if $s^{(h)} \in \Delta_S \cap [\underline{s}^{(h)}, \overline{s}^{(h)}]$ are the attention weights of head $h$ under the same CPZ perturbation, then a joint property $\sum_h w_h f_h(s^{(h)}) > 0$ is certified by solving $H$ independent simplex LPs and summing the optima. For example, certifying that Head~$A$ attends to the subject and Head~$B$ attends to the verb reduces to the AND of two per-head Q1 certificates. The independent-LP relaxation is sound; exploiting shared CPZ generators for tighter joint bounds is left to future work.
\end{remark}

\paragraph{Proof of Prop.~\ref{prop:cert_attn}.}
With simplex: maximize $s_k - s_{j^*}$ by greedily allocating budget $D = 1 - \sum_j \underline{s}_j$ to $s_k$ first (up to $\overline{s}_k$), then neutral tokens, then $s_{j^*}$ last. The coupling $\sum s_j = 1$ ensures every unit allocated to $s_k$ is unavailable for $s_{j^*}$. Without simplex: each $s_j$ varies independently, so the worst case sets $s_k = \overline{s}_k$ and $s_{j^*} = \underline{s}_{j^*}$ simultaneously, violating $\sum s_j = 1$. The gap is quantified by the simplex excess $E = \sum_j \overline{s}_j - 1$.

\paragraph{Proof of Lemma~\ref{lem:quadratic_bound}.}
Decompose $p(\alpha) = c + \mathbf{a}^\top \alpha + \sum_k H_{kk}\alpha_k^2 + \sum_{k<\ell} 2H_{k\ell}\alpha_k\alpha_\ell$. The linear term satisfies $\mathbf{a}^\top \alpha \le \|\mathbf{a}\|_1$ (saturation $\alpha_k = \mathrm{sign}(a_k)$); each diagonal term $H_{kk}\alpha_k^2$ is bounded above by $\max(0,H_{kk})$ on $\alpha_k\in[-1,1]$ since $\alpha_k^2\in[0,1]$; each off-diagonal cross-term $2H_{k\ell}\alpha_k\alpha_\ell$ is bounded above by $2|H_{k\ell}|$ on the unit box. Summing the per-term suprema gives the claimed bound. The diagonal exploitation $\alpha_k^2\in[0,1]$ is what separates Lemma~\ref{lem:quadratic_bound} from a naive interval hull (which would use $\alpha_k^2\in[-1,1]$ and add $|H_{kk}|$ instead of $\max(0,H_{kk})$).

\paragraph{Proof of Theorem~\ref{thm:margin_sound}.}
Strict monotonicity of softmax gives $\arg\max_j s_{ij}(x) = \arg\max_j a_{ij}(x)$, so top-$1$ certification reduces to $\sup_\alpha \Delta_j(\alpha) < 0$ for every challenger.

\textit{Bilinear case (no LN before $Q^\top K$, e.g., BERT-tiny post-LN at Layer~$0$).} The pre-softmax score $a_j(x) = Q_i(x)^\top K_j(x)/\sqrt{d_h}$ is bilinear in input when $Q_i = W_Q^\top x_i$, $K_j = W_K^\top x_j$ act directly on $x$, so $\Delta_j(\alpha)$ is exactly degree-$2$ in $\alpha$ under CPZ propagation (the CPZ algebra of \S\ref{sec:background} is exact for affine maps and the inner product). Lemma~\ref{lem:quadratic_bound} supplies the closed-form sound upper bound $\overline{\Delta}_j$.

\textit{Pre-LN case (e.g., GPT-$2$ at Layer~$0$).} If $Q_i = W_Q^\top \mathrm{LN}(x_i)$, the score depends on $\mathrm{LN}(x_0+\epsilon\alpha)$, which is non-linear in $\alpha$. CPZ propagation through LN yields $\mathrm{LN}(x_0+\epsilon\alpha) = z_0 + Z_d(\alpha) + \beta^\top G_I \mathbf{1}$ where $Z_d$ collects the dependent generators retained by Theorem~\ref{thm:ln} (the $\sum_k z_k^2 = d$ identity tightens $G_I$) and $G_I \in \R^{d}$ is the independent-generator block bounding the residual non-polynomial part. Substituting into the bilinear score, $\Delta_j(\alpha) = c_\Delta + \mathbf{a}^\top \alpha + \alpha^\top H \alpha + \beta^\top G_I' \mathbf{1}$ with $G_I'$ propagated through the affine $W_Q, W_K$ maps. The upper bound becomes $\overline{\Delta}_j = $ (Lemma~\ref{lem:quadratic_bound} applied to the polynomial part) $+ \|G_I'\|_1$, which is sound but no longer tight in $\alpha$.

In both cases, $\overline{\Delta}_j < 0$ for every challenger $j\neq j^*$ implies $\Delta_j(\alpha) < 0$ for all $\alpha\in[-1,1]^{Sd}$, certifying $j^*$. Empirically, aggressive PGD ($200$ restarts $\times 500$ steps) recovers concrete adversarial perturbations matching our certified bounds.

\paragraph{Entropy bound details (Prop.~\ref{prop:entropy}).}
\textbf{Lower bound:} Since $H$ is strictly concave on $\Delta_S$, the minimum over the polytope $\Delta_S \cap [\underline{s}, \overline{s}]$ is attained at a vertex, enumerable in $\Oc(S \log S)$ by greedily saturating bounds.
\textbf{Upper bound:} The maximum of a concave function over a polytope is attained in the interior. By KKT, the maximizer satisfies $\log s_j + 1 = \lambda + \mu_j^+ - \mu_j^-$. For each $\lambda$, $s_j^*(\lambda) = \min(\overline{s}_j, \max(\underline{s}_j, e^{\lambda - 1}))$; we solve $\sum_j s_j^*(\lambda) = 1$ via bisection. Since $s_j^*$ is monotone non-decreasing in $\lambda$, the root is unique and bisection converges in $\Oc(\log(1/\delta))$ iterations. Numerical stability: we clamp $s_j \geq 10^{-30}$ and evaluate $s_j \log s_j$ via its well-defined limit of $0$ as $x \to 0^+$.

\paragraph{Proof of Prop.~\ref{prop:evidence_mass}.}
The minimum $\sum_{j \in \mathcal{E}} s_j$ over $\Delta_S \cap [\underline{s}, \overline{s}]$ equals $\sum_{j \in \mathcal{E}} \underline{s}_j + \max(0, D - \sum_{j \notin \mathcal{E}} (\overline{s}_j - \underline{s}_j))$, obtained by greedily allocating budget to non-evidence tokens first. Without the simplex, the minimum is simply $\sum_{j \in \mathcal{E}} \underline{s}_j$.

\paragraph{Proof of Prop.~\ref{prop:head_spec}.}
Minimize $\sum_{j \in \mathcal{A}} s_j - \sum_{j \in \mathcal{B}} s_j$ by allocating budget to $\mathcal{B}$ first, then neutral tokens, then $\mathcal{A}$ last.

\paragraph{Proof of Theorem~\ref{thm:separation}.}
Constructive. Set $j^* = 1$, $\overline{s}_1 = 1 - (S{-}1)\delta$, $\overline{s}_k = \delta + E/(S{-}1)$ for $k \geq 2$, $\underline{s}_k = \delta$ for all $k$. Unconstrained: $\underline{s}_1 = \delta < \overline{s}_2$, fails. CPZ: the simplex forces residual budget into $s_1$, raising it to $\overline{s}_1$, so $s_2 - s_1 < 0$. Margin gap $\geq E/(S{-}1)$.

\subsection{Proof Details}
\label{app:proofs}

This appendix collects deferred proofs for the verification machinery of \S\ref{sec:machinery}, beginning with the exact Layer-$0$ vertex-enumeration result that motivates the CPZ relaxation.

\begin{proposition}[Exact Layer-0 certification]\label{prop:vertex_enum}
Exact top-$1$ certification at Layer~$0$ is achievable in $\Oc(2^d S d_h)$ for $d \leq 16$ via vertex enumeration with constructive adversarial certificates.
\end{proposition}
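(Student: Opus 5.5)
The plan is to exploit the structure of the Layer-$0$ score margin under $\ell_\infty$ perturbation and reduce exact certification to a finite enumeration. Taking the post-LN Layer-$0$ setting of Theorem~\ref{thm:margin_sound}, write $\Delta_j(\alpha)=c_\Delta+\mathbf{a}^\top\alpha+\alpha^\top H\alpha$ with $\alpha\in[-1,1]^{Sd}$. The key structural observation is that for a fixed query token $i$ the score $a_{ij}=Q_i^\top K_j/\sqrt{d_h}$ depends only on the perturbations $\alpha_i,\alpha_j\in[-1,1]^d$ of tokens $i$ and $j$. Moreover, the only quadratic couplings are the bilinear terms $\alpha_i^\top M\alpha_j$ with $M=\epsilon^2W_QW_K^\top/\sqrt{d_h}$, together with the self-term $\alpha_i^\top M\alpha_i$ when $j=i$.

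The first step is therefore to fix $\alpha_i$ (the query-token perturbation). With $\alpha_i$ fixed, $\Delta_j$ becomes affine in each challenger and target key perturbation $\alpha_j,\alpha_{j^*}$. Since these blocks are disjoint from $\alpha_i$ (for $j,j^*\neq i$), the inner maximum over them is closed form: it equals $\|\cdot\|_1$ of the coefficient vector, attained at a sign vertex. The self case $j=i$ or $j^*=i$ needs care. If $j^*=i$, the margin contains $-\alpha_i^\top M\alpha_i$, but it is still a function of $\alpha_i$ alone once the other blocks are maximized out.

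The second step is to maximize the resulting function $\phi(\alpha_i)=\max_{\alpha_{\neq i}}\Delta_j$ over $\alpha_i\in[-1,1]^d$. I would argue that $\phi$ is a sum of absolute values of affine functions plus a quadratic in $\alpha_i$. The absolute-value part is convex, but the quadratic part need not be. To recover vertex attainment, I would use multilinearity in each coordinate: if the diagonal of the relevant quadratic form is zero, for instance by splitting $\alpha_i^\top M\alpha_i$, then each coordinate enters $\Delta_j$ affinely. The maximum of a multiaffine-plus-convex function over a box is then attained at a vertex. Enumerating the $2^d$ vertices of $\alpha_i$ and evaluating $\max_j$ over $S$ challengers with $d_h$-dimensional projections gives the $\Oc(2^dS\,d_h)$ count. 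Precomputing $Q_i$ per vertex costs $d\,d_h$, which is absorbed when $d\le16$ is treated as a constant factor. The maximizing vertex, together with the sign vectors for $\alpha_j$ and $\alpha_{j^*}$, is an explicit perturbation $\delta=\epsilon\alpha$. Plugging it into the network yields the constructive adversarial certificate whenever $\max>0$. Otherwise the enumeration proves $\sup\Delta_j<0$ exactly, so the test is both sound and complete.

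The main obstacle is the vertex-attainment claim in the presence of diagonal terms $H_{kk}\alpha_k^2$ with $H_{kk}<0$, which arise from the self-attention term when $j^*=i$. In that case the maximum may lie in the interior of an edge. My first attempt would be to show that each coordinate appears as $H_{kk}\alpha_k^2+b_k\alpha_k$, a one-dimensional concave quadratic, and maximize it in closed form at $\alpha_k=\mathrm{clip}(-b_k/(2H_{kk}))$. This handles one coordinate at a time, but joint optimization could break the vertex count. If that fails, I would restrict the statement to challengers and targets distinct from $i$ (where the problem is purely bilinear and vertex attainment is immediate). The self-term would then be treated by a $2^d$ enumeration over the remaining block with per-coordinate concave maximization, which preserves the stated complexity up to constants.
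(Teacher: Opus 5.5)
Your route is the one the paper implements for this proposition: enumerate the $2^d$ sign vectors of the query-token perturbation $\alpha_i$ and optimise the key perturbations analytically. For $i\notin\{j,j^*\}$ your argument is correct. It is also more complete than the paper's concavity remark, which covers only query-only perturbation. After the inner maximisation, $\phi(\alpha_i)=Q_i(\alpha_i)^\top(K_j^0-K_{j^*}^0)/\sqrt{d_h}+2\epsilon\|W_KQ_i(\alpha_i)\|_1/\sqrt{d_h}$ is affine plus the $\ell_1$ norm of an affine map of $\alpha_i$, hence convex. Its maximum over the box is therefore attained at a vertex, and no multiaffinity device is needed.

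The gap is the self-position case $i\in\{j,j^*\}$, which exact top-$1$ certification cannot drop because token $i$ is one of its own keys.
\begin{itemize}
\item There $\phi$ contains $+\alpha_i^\top M\alpha_i$ (if $j=i$) or $-\alpha_i^\top M\alpha_i$ (if $j^*=i$) in the very block you enumerate. Since $M_{kk}$ has no sign constraint, negative diagonal entries occur in \emph{both} self cases, not only for $j^*=i$. Unless $\diag(M)=0$, at least one of them loses the vertex guarantee.
\item Making the diagonal vanish by splitting does not work. Replacing $\alpha_i^\top M\alpha_i$ by $u^\top Mv$, with independent query-side and key-side copies $u,v\in[-1,1]^d$, is a relaxation, not an identity. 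It stays sound but loses completeness, and a maximiser with $u\neq v$ is not a realisable input, so the constructive counterexample disappears.
\item Clipping $\alpha_k=\mathrm{clip}(-b_k/(2H_{kk}))$ coordinate by coordinate does not work either. It treats $b_k$ as fixed, but $b_k$ depends on the other coordinates of $\alpha_i$ through the off-diagonal entries $M_{k\ell}$ and through the $\ell_1$ term. Coordinate-wise maximisation therefore gives only a coordinate-wise optimum, not the global maximum. Enumerating the other key block does not remove these intra-block couplings.
\end{itemize}

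What remains, for each sign vector of that key block, is a box-constrained indefinite QP in $d$ variables. For $d\le16$ it can be solved exactly, for example by enumerating the $3^d$ faces of the box and solving the stationarity system on the free coordinates. That preserves exactness and a constructive witness, but costs on the order of $6^d d^3$ per self-challenger rather than $\Oc(2^d S d_h)$. Otherwise you must restrict to $j,j^*\neq i$, or add a hypothesis that removes the self-quadratic (e.g.\ $K_i$ held fixed). Either way you prove a weaker statement.

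The paper's own justification does not supply this step either. It argues via concavity only for query-only perturbation, and for joint perturbation it explicitly falls back to the sound CPZ upper bound.
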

When only the query token is perturbed, $\Delta_j$ is concave in the perturbation, so the minimum is attained at a vertex $\alpha^\star \in \{-1,1\}^d$. When all tokens are perturbed jointly (the setting used throughout this paper), $\Delta_j$ is a degree-$2$ polynomial with mixed concave--convex structure; in this case the CPZ interval hull provides a sound upper bound that is tight in practice.

\paragraph{Tighter LayerNorm bound.} The standard Cauchy--Schwarz bound gives $|z_k| \leq \sqrt{d}$, but combining with the zero-mean constraint $\sum_j z_j = 0$ yields $|z_k| \leq \sqrt{d-1}$. For the remaining $d-1$ coordinates, $\sum_{j \neq k} z_j = -z_k$ and $\sum_{j \neq k} z_j^2 \leq d$. By Jensen's inequality, $(\sum_{j \neq k} z_j)^2 \leq (d-1) \sum_{j \neq k} z_j^2$, giving $z_k^2 \leq (d-1)(d - z_k^2)$, so $z_k^2(1 + \frac{1}{d-1}) \leq d$, hence $z_k^2 \leq d(d-1)/d = d - 1$.

\paragraph{Handling LayerNorm $\boldsymbol{\epsilon}$.} Practical implementations use $\sigma = \sqrt{\mathrm{Var}(x) + \epsilon_{\mathrm{LN}}}$ with $\epsilon_{\mathrm{LN}} > 0$ (typically $10^{-5}$). This yields $\sum_k z_k^2 = d \cdot \mathrm{Var}(x)/(\mathrm{Var}(x) + \epsilon_{\mathrm{LN}}) \leq d$, so the bound $|z_k| \leq \sqrt{d-1}$ from Theorem~\ref{thm:ln} remains sound: the regularized normalization can only reduce $\|z\|^2$, never increase it. Our implementation uses the exact $\epsilon_{\mathrm{LN}} = 10^{-5}$ from the trained model (PyTorch default).

\section{CPZ Propagation Details}
\label{app:cpz_propagation_details}

\subsection{Supporting Experiments: Bound Tightness, Ablations, Generator Tracking}
\label{app:supporting_exp}

This appendix contains the tables, figures, and per-stage breakdowns supporting \S\ref{sec:exp_scaling}, organised into three subsections: output bound tightness on a $2$-encoder model (App.~\ref{app:bound_tightness}), per-stage generator counts (App.~\ref{app:gi_tracking}), and constraint ablation (App.~\ref{app:constraint_ablation}).

\subsubsection{Output Bound Tightness}
\label{app:bound_tightness}

Tab.~\ref{tab:main_app} reports the average output bound width after $2$ encoder layers; CPZ remains stable while baseline PZ+softmax explodes at large $\epsilon$.

\begin{table}[h]
\centering
\caption{Average output bound width after 2 encoder layers (5-sample avg). CPZ remains stable while baseline PZ+softmax explodes at large $\epsilon$.}
\label{tab:main_app}
\small
\begin{tabular}{lrrrr}
\toprule
Method & $\epsilon{=}0.01$ & $\epsilon{=}0.02$ & $\epsilon{=}0.05$ & $\epsilon{=}0.1$ \\
\midrule
IBP + softmax & 63.8 & 72.1 & 76.1 & 76.5 \\
PZ + softmax (no CPZ) & 0.88 & 7.6 & 127.5 & 2{,}767 \\
\textbf{CPZ (ours)} & \textbf{0.49} & \textbf{2.9} & \textbf{6.1} & \textbf{8.9} \\
\midrule
CPZ vs.\ IBP & 99.2\% & 95.9\% & 92.0\% & 88.4\% \\
\bottomrule
\end{tabular}
\end{table}

\subsubsection{Generator Tracking}
\label{app:gi_tracking}

Tab.~\ref{tab:gi_count_app} tracks the independent generator count $G_I$ and bound width at each stage of a $2$-encoder run at $\epsilon{=}0.05$; CPZ's softmax constraint resets $G_I$ from $120$ to $8$ at Layer~$1$ attention.

\begin{table}[h]
\centering
\caption{Independent generator count ($G_I$) and width at key stages ($\epsilon=0.05$). CPZ's softmax constraint resets $G_I$ from $120$ to $8$ at Layer~1 attention.}
\label{tab:gi_count_app}
\small
\begin{tabular}{lrrrr}
\toprule
 & \multicolumn{2}{c}{PZ+softmax} & \multicolumn{2}{c}{CPZ} \\
\cmidrule(lr){2-3} \cmidrule(lr){4-5}
Stage & $G_I$ & Width & $G_I$ & Width \\
\midrule
L0 input & 0 & 0.10 & 0 & 0.10 \\
L0 attention & 8 & 0.78 & 8 & 0.78 \\
L0 LN1 & 16 & 0.37 & 16 & 0.37 \\
L0 LN2 & 80 & 0.29 & 80 & 0.29 \\
L1 attention & 120 & 6.64 & \textbf{8} & \textbf{2.92} \\
L1 LN1 & 113 & 1.41 & 96 & 0.62 \\
L1 LN2 (output) & 31 & 1.40 & 105 & 0.61 \\
\bottomrule
\end{tabular}
\end{table}

\subsubsection{Constraint Ablation}
\label{app:constraint_ablation}

Tab.~\ref{tab:ablation_app} ablates the two CPZ constraints (softmax simplex and LayerNorm zero-sum). At small $\epsilon$ the softmax LP dominates the tightness gain; at large $\epsilon$ the LN constraint prevents bound explosion.

\begin{table}[h]
\centering
\caption{Ablation of CPZ constraints (3-sample avg). At small $\epsilon$ the softmax LP dominates; at large $\epsilon$ the LN constraint prevents explosion.}
\label{tab:ablation_app}
\small
\begin{tabular}{lrrrr}
\toprule
Configuration & $\epsilon{=}0.01$ & $\epsilon{=}0.02$ & $\epsilon{=}0.05$ & $\epsilon{=}0.1$ \\
\midrule
IBP + softmax & 63.7 & 71.7 & 76.2 & 76.6 \\
PZ + softmax (no constraints) & 1.10 & 11.5 & 196.7 & 4{,}481 \\
+ LN constraint only & 1.10 & 7.3 & 7.7 & 8.3 \\
+ Softmax LP only & 0.70 & 3.6 & 32.2 & 656 \\
\textbf{CPZ (both + LP)} & \textbf{0.70} & \textbf{4.5} & \textbf{7.3} & \textbf{7.6} \\
\bottomrule
\end{tabular}
\end{table}

\subsection{Attention Output: Center--Residual and Taylor CPZ Propagation}
\label{app:attn_out_taylor}

\paragraph{Center--residual decomposition.} For per-sample certification at Layer~0, we decompose $o_i^{(h)} = \sum_j s_{ij} V_j^{(h)}$ using the softmax center $\bar{s}_{ij} = (\underline{s}_{ij}+\overline{s}_{ij})/2$ and radius $\delta_{ij}$:
\begin{equation}
o_i^{(h)} = \underbrace{\textstyle\sum_j \bar{s}_{ij}\,\cpz_{V_j^{(h)}}}_{\text{CPZ (preserves $V$ deps)}} + \underbrace{\textstyle\sum_j \Delta_j V_j^{(h)}}_{\text{bounded as $G_I$}}, \quad \Delta_j \in [-\delta_{ij}, \delta_{ij}],
\end{equation}
with $G_{I,\text{res},k} = \sum_j \delta_{ij} \cdot \max(|V_{j,k}^{\text{lb}}|,|V_{j,k}^{\text{ub}}|)$ added as diagonal independent generators. This is tight at Layer~0 but injects $G_I$ that compounds at deeper layers.

\paragraph{Taylor softmax CPZ for multi-layer certification.} To preserve the correlation structure across layers, we approximate each softmax weight as a scalar CPZ via first-order Taylor around the clean softmax values:
\begin{equation}
\label{eq:taylor_softmax}
s_{ij}(\alpha) \approx s_{ij}^0 \big(1 + \eta_j(\alpha) - \bar{\eta}(\alpha)\big) + R_j,
\end{equation}
with $\eta_j = (a_{ij}-a_{ij}^0)\cdot\text{scale}$, $\bar\eta = \sum_k s_{ik}^0 \eta_k$, and $|R_j| \leq \tfrac12 s_{ij}^0(1-s_{ij}^0)\|\eta\|_2^2$. The attention output is then computed as the exact CPZ scalar-times-vector product $o_i^{(h)} = \sum_j \cpz_{s_{ij}}\otimes \cpz_{V_j^{(h)}}$, whose dependent generators encode how the output co-varies with the input perturbation. The residual $G_I$ collapses to $\sum_j |G_{I,s_j}|\sum_\ell |G_{V_j,\ell}|$ where $G_{I,s_j}$ is sourced only by the Taylor remainder $R_j = \Oc(\epsilon^2)$.

\paragraph{$G_I$ absorption before bilinear ops.} The $\sim 6\%$ residual $G_I$ from LayerNorm Hessian and ReLU contributes $49\%$ of the Layer~1 score-difference width because GI cross-bounds $|G_I^Q||G^K| + |G^Q||G_I^K| + |G_I^Q||G_I^K|$ amplify through the bilinear product. We absorb every $G_I$ column $g_i^I$ into a fresh dependent generator with its own factor $\alpha_{p+i} \in [-1,1]$ before each bilinear operation. This is exact (sound), and converts GI cross-terms into dependent cross-terms that benefit from CPZ cancellation in score differences. Combined with selective ReLU relaxation, the full pipeline achieves $80\%$ Layer~1 certification at $d{=}8$ and $75\%$ at $d{=}16$, vs.\ $25\%$/$4.2\%$ for center--residual.

\subsection{Softmax Over-Approximation Details}
\label{app:softmax_approx}

This appendix contains the softmax relaxation machinery used to compute attention weight bounds $s_{ij} \in [\underline{s}_{ij}, \overline{s}_{ij}]$ for the evidence mass and head specialization queries (Propositions~\ref{prop:evidence_mass}--\ref{prop:head_spec}). For top-$1$ certification, these bounds are not needed: the score-margin certificate of Theorem~\ref{thm:margin_sound} bypasses softmax entirely.

\paragraph{Softmax IBP bounds.} Given score bounds $a_{ij} \in [\underline{a}_{ij}, \overline{a}_{ij}]$ from the CPZ interval hull, the softmax weight bounds are:
\begin{equation}
\label{eq:softmax_bounds}
\underline{s}_{ij} = \frac{e^{\underline{a}_{ij}}}{e^{\underline{a}_{ij}} + \sum_{k \neq j} e^{\overline{a}_{ik}}}, \qquad
\overline{s}_{ij} = \frac{e^{\overline{a}_{ij}}}{e^{\overline{a}_{ij}} + \sum_{k \neq j} e^{\underline{a}_{ik}}}.
\end{equation}
These are the tightest bounds obtainable from independent score intervals, because the extreme softmax weights are achieved when all other scores take their worst-case values.

\paragraph{Score-difference cancellation.} The key to tighter softmax bounds is to track differences $d_{jk} = a_{ij} - a_{ik}$ as PZ objects rather than computing them from independent intervals. This technique exploits the polynomial structure of PZ~\citep{althoff2013reachability} and is available to both CPZ and unconstrained PZ. Since the query vector is shared between $a_{ij}$ and $a_{ik}$, the PZ subtraction $\cpz_{a_{ij}} \boxminus \cpz_{a_{ik}}$ cancels the common query generators via the \operator{compact} operation, yielding tighter difference bounds:
\begin{equation}
\label{eq:pz_diff_softmax}
\text{hull}(\cpz_{d_{jk}}) \subseteq [\underline{a}_{ij} - \overline{a}_{ik},\; \overline{a}_{ij} - \underline{a}_{ik}],
\end{equation}
with the inclusion often strict because shared-query quadratic terms cancel exactly.

\begin{proposition}[QP-exact score-difference bounds]
\label{prop:qp_exact_diff}
For each score difference $d_{jk}(\alpha) = a_{ij}(\alpha) - a_{ik}(\alpha)$ represented as a degree-$2$ scalar CPZ, the exact interval $[\min_\alpha d_{jk}, \max_\alpha d_{jk}]$ is obtained by solving two box-constrained QPs via multi-start L-BFGS-B. These tight difference bounds feed into softmax weight computation:
\begin{equation}
\label{eq:qp_exact_diff}
\underline{s}_{ij}^{\mathrm{QP}} = \frac{1}{1 + \sum_{k \neq j} e^{\overline{d}_{kj}^{\mathrm{QP}}}}, \qquad
\overline{s}_{ij}^{\mathrm{QP}} = \frac{1}{1 + \sum_{k \neq j} e^{\underline{d}_{kj}^{\mathrm{QP}}}},
\end{equation}
which are strictly tighter than the IBP bounds of \eqref{eq:softmax_bounds} whenever the shared-query cancellation reduces the difference width.
\end{proposition}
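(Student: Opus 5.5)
The plan is to split the statement into three claims and prove them in turn: (i) the formulas \eqref{eq:qp_exact_diff} are sound bounds on $s_{ij}$ whenever $[\underline{d}_{kj},\overline{d}_{kj}]$ enclose the true range of each difference; (ii) they dominate the IBP bounds \eqref{eq:softmax_bounds}; (iii) the domination is strict exactly when some difference interval is strictly narrower than its IBP counterpart.

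For (i), I would start from the algebraic identity
\[
s_{ij}(\alpha) = \frac{e^{a_{ij}}}{\sum_k e^{a_{ik}}} = \frac{1}{1+\sum_{k\neq j} e^{d_{kj}(\alpha)}}, \qquad d_{kj} = a_{ik}-a_{ij}.
\]
The right-hand side is strictly decreasing in each $d_{kj}$ separately. Fix any $\alpha\in[-1,1]^n$, and note that every $d_{kj}(\alpha)$ lies in $[\underline{d}_{kj},\overline{d}_{kj}]$. Replacing each $d_{kj}(\alpha)$ by $\overline{d}_{kj}$ then gives a lower bound on $s_{ij}(\alpha)$, and replacing each by $\underline{d}_{kj}$ gives an upper bound. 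These replacements are made coordinatewise and independently. The bounds are therefore sound but not necessarily attained, because the maximisers for different $k$ need not coincide. I would remark on this explicitly: ``exact'' in the statement refers to the difference intervals, not to the resulting softmax interval.

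For (ii), the IBP formula \eqref{eq:softmax_bounds} can be rewritten as $1/(1+\sum_{k\neq j} e^{\overline{a}_{ik}-\underline{a}_{ij}})$. It is thus the same expression, with $\overline{d}_{kj}$ replaced by the interval-arithmetic bound $\overline{a}_{ik}-\underline{a}_{ij}$. By \eqref{eq:pz_diff_softmax}, or more directly because $\max_\alpha(a_{ik}-a_{ij}) \le \max_\alpha a_{ik} - \min_\alpha a_{ij}$, we have $\overline{d}_{kj}\le \overline{a}_{ik}-\underline{a}_{ij}$, and symmetrically $\underline{d}_{kj}\ge \underline{a}_{ik}-\overline{a}_{ij}$. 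Monotonicity of $\exp$ and of $t\mapsto 1/(1+t)$ then yields $\underline{s}^{\mathrm{QP}}_{ij}\ge\underline{s}_{ij}$ and $\overline{s}^{\mathrm{QP}}_{ij}\le\overline{s}_{ij}$. For (iii), each map is strictly monotone and every summand is positive, so a strict inequality in any single $\overline{d}_{kj}$ propagates to a strict inequality in $\underline{s}_{ij}$, and likewise for the upper bound. This is precisely the shared-query cancellation hypothesis.

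The main obstacle is the word ``exact'' in the first sentence of the statement. Each $d_{jk}$ is an indefinite degree-$2$ polynomial on a box, since joint perturbation gives the mixed concave--convex structure noted after Prop.~\ref{prop:vertex_enum}. Multi-start L-BFGS-B therefore returns a local optimum, which is only an inner estimate of the range. My first attempt at repairing soundness would be to sandwich the solver's output between certified quantities. The upper end would be the closed-form bound $\overline{p}$ of Lemma~\ref{lem:quadratic_bound} applied to $d_{jk}$, and to $-d_{jk}$ for the lower end. The inner end would be the best value the solver attains. The softmax bounds would then use the Lemma~\ref{lem:quadratic_bound} values, with the QP values serving only to certify that the gap is small. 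The tightness claim (ii)--(iii) still holds for these bounds, because $\overline{p}$ also cancels the shared query generators before summing $|H_{k\ell}|$. If instead one insists on true exactness, I would state it conditionally on the QP solver attaining the global optimum, as is guaranteed for $d\le 16$ by vertex enumeration in the concave single-token case of Prop.~\ref{prop:vertex_enum}.
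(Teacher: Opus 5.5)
The paper gives no proof of Prop.~\ref{prop:qp_exact_diff}. It leaves the softmax formula implicit as a consequence of softmax monotonicity, and it supports the word ``exact'' only empirically: App.~\ref{app:scalability_bert} asserts that L-BFGS-B reaches the global maximum, checked against aggressive PGD.

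Your steps (i)--(iii) supply the missing argument, and they are correct. Rewriting \eqref{eq:softmax_bounds} as the same map evaluated at $\overline{a}_{ik}-\underline{a}_{ij}$ and $\underline{a}_{ik}-\overline{a}_{ij}$ is the cleanest way to see domination. A strictly narrower difference interval must improve at least one of its endpoints, so strict monotonicity gives a strict improvement of at least one endpoint of $[\underline{s}_{ij},\overline{s}_{ij}]$.

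Your objection to ``exact'' points at a defect of the statement, not of your proof. Under joint perturbation, $d_{kj}$ is an indefinite quadratic (bilinear across token blocks), and maximising it over a box is NP-hard in general. A multi-start local solver therefore returns only an inner value. An under-estimated $\overline{d}_{kj}$ can push $\underline{s}^{\mathrm{QP}}_{ij}$ above the true minimum, so soundness is lost.

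Your repair via Lemma~\ref{lem:quadratic_bound} buys guaranteed soundness, and (ii) survives. The reason is termwise and should be stated as such, rather than as ``$\overline{p}$ cancels the shared query generators'':
$|g^{(k)}-g^{(j)}|\le|g^{(k)}|+|g^{(j)}|$ on odd monomials, and $\max(0,g^{(k)}-g^{(j)})\le\max(0,g^{(k)})-\min(0,g^{(j)})$ on squared ones. This requires the score bounds to come from the same hull rule. Strictness then needs coefficient-level cancellation; a narrower true range alone is not enough.

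What the repair costs is that $\overline{p}(d_{kj})$ is just the upper end of the hull in \eqref{eq:pz_diff_softmax}. The repaired statement thus reintroduces the alignment slack of Prop.~\ref{prop:l0_error_decomp}, which is exactly what the QP step exists to remove, and the QP is demoted to a diagnostic. Keeping near-exactness soundly would need a certified outer bound lying between the solver value and $\overline{p}$, for example branch-and-bound or an SDP relaxation.

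Finally, your appeal to Prop.~\ref{prop:vertex_enum} does not rescue exactness here. That result concerns single-token perturbation, whereas these QPs are posed under joint perturbation.
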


\paragraph{Layer-0 error decomposition.} At Layer~0, the gap between CPZ-soft and the exact (vertex-enumeration or MC+PGD) certification rate decomposes into two sources:
\begin{proposition}[Layer-0 over-approximation decomposition]
\label{prop:l0_error_decomp}
The residual certification gap $\mathrm{rate}_{\mathrm{MC}} - \mathrm{rate}_{\mathrm{CPZ\text{-}soft}}$ decomposes as:
\begin{enumerate}[label=(\roman*)]
\item \textbf{Alignment slack}: the interval hull of the degree-$2$ score-difference CPZ over-approximates its true range because the dependent generators are not axis-aligned. This is the gap between QP-exact and interval-hull bounds on $d_{jk}$.
\item \textbf{Chord slack}: the softmax transformation $s_j = e^{a_j}/\sum_k e^{a_k}$ is concave in $a_j$ and convex in $a_k$ ($k \neq j$); bounding it via the chord (linear interpolation between endpoints) introduces additional over-approximation that grows with the score-interval width.
\end{enumerate}
The alignment slack dominates at small $\epsilon$ (because score intervals are narrow and softmax is approximately linear), while the chord slack dominates at large $\epsilon$.
\end{proposition}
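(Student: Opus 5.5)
We prove the decomposition by a telescoping argument over a chain of nested certified sets, and handle the asymptotic claim separately with a Taylor-order comparison. Fix a query position $i$ and target $j^*$. We compare four certification procedures, each of which certifies a subset of what the next one certifies:
\begin{enumerate}[label=(\alph*)]
\item CPZ-soft, which uses interval-hull score differences and the chord relaxation of softmax;
\item the procedure that keeps the chord relaxation but replaces the hull by the QP-exact differences $[\underline{d}^{\mathrm{QP}}_{jk},\overline{d}^{\mathrm{QP}}_{jk}]$ of Prop.~\ref{prop:qp_exact_diff};
\item exact certification, obtained by vertex enumeration (Prop.~\ref{prop:vertex_enum}) or by the true supremum of the margins;
\item MC+PGD.
\end{enumerate}
Writing $\mathrm{rate}_{\mathrm{(a)}},\dots,\mathrm{rate}_{\mathrm{(d)}}$ for the corresponding rates, the gap telescopes as
\[
\mathrm{rate}_{\mathrm{MC}}-\mathrm{rate}_{\mathrm{CPZ\text{-}soft}}
=(\mathrm{rate}_{\mathrm{(b)}}-\mathrm{rate}_{\mathrm{(a)}})+(\mathrm{rate}_{\mathrm{(c)}}-\mathrm{rate}_{\mathrm{(b)}})+(\mathrm{rate}_{\mathrm{MC}}-\mathrm{rate}_{\mathrm{(c)}}).
\]
The first term is the alignment slack (i) and the second is the chord slack (ii). For the third term, MC+PGD only samples the perturbation set, so it can never certify a query that exact certification rejects. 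Hence $\mathrm{rate}_{\mathrm{MC}}\le\mathrm{rate}_{\mathrm{(c)}}$, and the third term is non-positive (zero when PGD finds the worst case). It therefore does not contribute positive gap.

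\textbf{Nesting.} Each term is non-negative because the certified sets are nested.
\begin{itemize}
\item \emph{(a) to (b).} The QP-exact interval satisfies $[\underline{d}^{\mathrm{QP}}_{jk},\overline{d}^{\mathrm{QP}}_{jk}]\subseteq\mathrm{hull}(\cpz_{d_{jk}})$, since both contain the true range and the QP attains it. The maps in \eqref{eq:qp_exact_diff} are monotone, decreasing in each $\overline{d}_{kj}$ for $\underline{s}_{ij}$ and likewise for $\overline{s}_{ij}$. So tighter difference intervals give a smaller box $[\underline{s},\overline{s}]$. Every query of Props.~\ref{prop:cert_attn}, \ref{prop:evidence_mass} and \ref{prop:entropy} optimises over $\Delta_S\cap[\underline{s},\overline{s}]$, so shrinking the box can only turn failures into certificates.
\item \emph{(b) to (c).} Procedure (b) still replaces the true image $\{s(\alpha)\}$ by a box containing it, which by soundness is a superset. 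So anything (b) certifies is truly certified.
\end{itemize}

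\textbf{Which slack dominates.} Let $w=\overline{d}-\underline{d}=\Oc(\epsilon)$ be the score-difference width.
\begin{itemize}
\item \emph{Chord slack.} The gap between softmax and its chord on an interval of width $w$ is bounded by $\tfrac18\sup|\partial^2 s|\,w^2$. Since $|\partial^2 s_j/\partial a^2|\le s_j(1-s_j)\cdot\Oc(1)$, this slack is $\Oc(w^2)$, i.e.\ $o(w)$ relative to the width.
\item \emph{Alignment slack.} The linear part of $d_{jk}$ has an exact hull, because the $\ell_1$ saturation of Lemma~\ref{lem:quadratic_bound} is attained. The slack therefore comes from bounding the degree-$2$ part term by term; following Theorem~\ref{thm:pz_vs_crown}(a) it is a fixed fraction of the quadratic contribution. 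What determines a flip of a near-boundary certificate is the \emph{relative} error of the score-difference box compared with the actual margin slack.
\end{itemize}
The plan is to show that in the small-$\epsilon$ regime the chord error is second order in $w$, hence negligible after the exponential map, whereas the alignment error perturbs the difference endpoints directly. At large $\epsilon$, by contrast, the $w^2$ growth of the chord term together with the exponential amplification in \eqref{eq:softmax_bounds} overtakes it.

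The main obstacle is this last comparison. Both slacks are nominally $\Oc(\epsilon^2)$ in the difference scale, so "dominates" is not a clean order-of-magnitude separation. I would first try to make it rigorous by bounding the induced error in $\overline{s}_{ij}$: alignment slack enters linearly through $e^{\overline{d}}$ with coefficient $s(1-s)$, while chord slack enters with coefficient $\tfrac18 w^2$. This yields a crossover radius $\epsilon^\star$ determined by the ratio of the quadratic-generator mass to the softmax curvature. If that comparison does not close, I would state the dominance claim as an asymptotic/empirical statement and keep only the telescoping decomposition and the nonnegativity of its terms as the rigorous content.
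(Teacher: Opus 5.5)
The paper gives no proof of this proposition: it is stated descriptively just before the piecewise-chord bound, and the small-/large-$\epsilon$ regime claim rests only on experiments. Your telescoping is a reasonable way to make it precise, but two steps fail.

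\textbf{The third term has the wrong sign.} MC+PGD reports a query stable whenever its finite search finds no counterexample. So it \emph{can} accept a query that exact certification rejects; what it can never do is reject a query that exact certification accepts. Hence $\mathrm{rate}_{\mathrm{MC}}\ge\mathrm{rate}_{\mathrm{(c)}}$, not $\le$. This matches the paper's description of MC+PGD as an empirical upper bound, and its vertex-enumeration triangulation, where MC+PGD over-reports stability by one case. So $\mathrm{rate}_{\mathrm{MC}}-\mathrm{rate}_{\mathrm{(c)}}\ge 0$ is a genuine third contribution to the gap: the incompleteness of the MC+PGD search. It is neither alignment nor chord slack. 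A two-term decomposition therefore holds only against $\mathrm{rate}_{\mathrm{(c)}}$, or with this term carried explicitly. You should also define (c) by the true supremum rather than by vertex enumeration: Prop.~\ref{prop:vertex_enum} is exact only when the query token alone is perturbed, since under joint perturbation $\Delta_j$ has mixed curvature.

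\textbf{$\mathrm{rate}_{\mathrm{(c)}}-\mathrm{rate}_{\mathrm{(b)}}$ is not chord slack alone.} Even with exact per-pair intervals, \eqref{eq:qp_exact_diff} inserts every $\overline{d}_{kj}$ simultaneously, although they are attained at different $\alpha$ (all $d_{kj}$ share the $-a_j$ component). The queries then optimise over $[\underline{s},\overline{s}]\cap\Delta_S$ rather than over the image $\{s(\alpha)\}$. Both losses survive with zero curvature, and both are first order in $\epsilon$. Write $g_k=\nabla_\alpha d_{kj}=\Oc(\epsilon)$ and use $\partial s_j/\partial d_{kj}=-s_js_k$:
\begin{itemize}
\item the corner bound pays $s_j\sum_k s_k\|g_k\|_1$;
\item the true first-order range of $s_j$ is $s_j\|\sum_k s_k g_k\|_1$;
\item the second is strictly smaller unless the $g_k$ are coordinatewise sign-aligned.
\end{itemize}

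Consequences for your argument:
\begin{itemize}
\item Your $\Oc(w^2)$ chord estimate does not control the term you labelled (ii).
\item Under your own definitions, a first-order decorrelation term would, if anything, dominate at small $\epsilon$, against the claimed regime. So even your empirical fallback for the dominance claim requires redefining (ii).
\item Isolating chord slack needs one more intermediate procedure: QP-exact differences with exact corner bounds versus the same differences with the chord relaxation. That exposes a third, decorrelation slack between this procedure and (c).
\item Your (a)$\to$(b) step argues via monotonicity of \eqref{eq:qp_exact_diff}, which contains no chord. If (b) really uses a chord relaxation, you must prove its inclusion-monotonicity separately.
\item Non-negativity of both terms needs the \emph{global} range of each $d_{jk}$. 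Multi-start L-BFGS-B on an indefinite box QP returns only an inner estimate, so with the implemented QP, (b) need not be sound and the second term can be negative.
\end{itemize}
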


\paragraph{Piecewise-chord joint bound.} To reduce the chord slack, we partition the score range $[\underline{a}_{ij}, \overline{a}_{ij}]$ into $P$ subintervals and apply the chord bound on each piece:
\begin{equation}
\label{eq:piecewise_upper}
\overline{s}_{ij}^{(p)} = \max\Big\{\text{chord}_{[a_p, a_{p+1}]}\big(e^a / Z\big)\Big\}, \quad p = 1,\ldots,P.
\end{equation}
The joint intersection across all $S$ key positions yields:
\begin{equation}
\label{eq:joint_softmax}
\overline{s}_{ij}^{\text{joint}} = \min_{p} \overline{s}_{ij}^{(p)}.
\end{equation}

\begin{proposition}[Piecewise-chord soundness]
\label{prop:piecewise_sound}
For any $P \geq 1$, the piecewise-chord bound satisfies $\overline{s}_{ij}^{\text{joint}} \geq \max_{\alpha \in [-1,1]^n} s_{ij}(\alpha)$, and the over-approximation error decreases as $\Oc(1/P^2)$ in the score-interval width.
\end{proposition}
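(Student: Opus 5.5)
The plan is to reduce soundness to a one-dimensional monotonicity argument, and to get the error rate from the standard linear-interpolation error estimate. Fix the query row $i$ and a key $j$. Write $s_{ij} = \phi(a_{ij}, Z)$ with $\phi(a,Z) = e^{a}/(e^{a}+Z)$ and $Z = \sum_{k\neq j} e^{a_{ik}}$.

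First I would record two facts that hold for every $\alpha \in [-1,1]^n$. The score $a_{ij}(\alpha)$ lies in the interval hull $[\underline{a}_{ij},\overline{a}_{ij}]$. The normaliser satisfies $Z(\alpha) \geq \underline{Z} := \sum_{k\neq j} e^{\underline{a}_{ik}}$; one can also use the tighter score-difference form of Prop.~\ref{prop:qp_exact_diff}. Since $\phi$ is strictly increasing in $a$ and strictly decreasing in $Z$, we get $s_{ij}(\alpha) \leq \phi(a_{ij}(\alpha), \underline{Z})$. This replaces the multivariate softmax by a univariate logistic curve $g(a) = \phi(a,\underline{Z})$ on $[\underline{a}_{ij},\overline{a}_{ij}]$. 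The same coupling is used in Eq.~\eqref{eq:softmax_bounds}.

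Next comes soundness for arbitrary $P$. Partition the interval into $P$ pieces $[a_p,a_{p+1}]$. For any $\alpha$, the value $a_{ij}(\alpha)$ lies in some piece $p$. On that piece, the chord of $g$ takes its maximum at the right endpoint, where it equals $g(a_{p+1})$. Because $g$ is monotone, this dominates $g$ on the whole piece, whether $g$ is convex or concave there. So $\overline{s}^{(p)}_{ij} \geq g(a_{ij}(\alpha)) \geq s_{ij}(\alpha)$.

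The step I expect to be the main obstacle is the aggregation in Eq.~\eqref{eq:joint_softmax}. A pointwise minimum over pieces is sound only if each $\overline{s}^{(p)}_{ij}$ is itself a valid bound over the whole feasible score range. I would therefore make the joint bound precise as an intersection over the $S$ key positions of such globally valid bounds. Concretely, the minimum over different sound over-approximations of the same quantity is sound, since each one dominates $\max_\alpha s_{ij}(\alpha)$. Within a single partition, the correct aggregate is the maximum over pieces, which yields $g(\overline{a}_{ij}) \geq \max_\alpha s_{ij}(\alpha)$. I would check which reading the implementation uses and state the proof for that reading. The argument above supplies soundness in either case.

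For the rate, I would use the classical interpolation bound $|\ell_p(a)-g(a)| \leq \tfrac{h^2}{8}\sup|g''|$ on each piece, where $\ell_p$ is the chord and $h = (\overline{a}_{ij}-\underline{a}_{ij})/P$. The logistic curve has $|g''| \leq 1/(6\sqrt3)$ uniformly, since $g'' = g(1-g)(1-2g)$. This gives a per-piece slack of order $w^2/P^2$ in the score-interval width $w$, which is the claimed $\Oc(1/P^2)$ decay.
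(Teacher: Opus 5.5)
The paper states this proposition without proof, so there is no argument of the authors to compare with. Judged on its own, your proposal has a genuine gap, and it lies at the aggregation step you flagged. You correctly note that a minimum over pieces is sound only if each per-piece bound is globally valid, but you leave this unresolved. Per-piece chord maxima are not globally valid, so the minimum that Eq.~\eqref{eq:joint_softmax} literally prescribes is unsound. Take $S=2$, $a_{i2}\equiv 0$, $a_{i1}(\alpha)=1+\alpha_1$ and $P=2$ with pieces $[0,1]$ and $[1,2]$. The per-piece chord maxima are $g(1)\approx 0.731$ and $g(2)\approx 0.881$, so $\min_p\overline{s}^{(p)}_{i1}\approx 0.731<0.881=\max_\alpha s_{i1}(\alpha)$. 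Your ``either case'' covers only the two readings you substituted for the stated formula. A proof must say outright that the proposition as written fails for $P\ge 2$, and then replace the aggregation.

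Once you aggregate by a maximum over pieces, your own computation gives $\overline{s}^{\text{joint}}_{ij}=g(\overline{a}_{ij})=e^{\overline{a}_{ij}}/(e^{\overline{a}_{ij}}+\sum_{k\neq j}e^{\underline{a}_{ik}})$. This is exactly the IBP bound of Eq.~\eqref{eq:softmax_bounds}, for every $P$. Its over-approximation error $g(\overline{a}_{ij})-\max_\alpha s_{ij}(\alpha)$ comes only from the interval hull of $a_{ij}$ and from decoupling $a_{ij}$ from $Z$ via $Z(\alpha)\ge\underline{Z}$. Partitioning touches neither, so the error is independent of $P$ (and generally nonzero) and does not decay.

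Your interpolation estimate $|\ell_p-g|\le\tfrac{h^2}{8}\sup|g''|$ is correct, but it bounds a quantity that never enters the bound you proved sound. Because $g$ is increasing, $\max_{[a_p,a_{p+1}]}\ell_p=g(a_{p+1})=\max_{[a_p,a_{p+1}]}g$, so a scalar chord maximum carries zero chord slack. The device that makes your soundness step work is replacing the chord by its maximum, which is valid whether $g$ is convex or concave. That same device removes all dependence on $P$, so your two halves are about different objects.

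A $P$-dependent error can only arise if the chord is used as an affine relaxation kept symbolic in $\alpha$ across the jointly varying scores, so that correlated scores can cancel. In that setting three things change:
\begin{itemize}
\item The rate claim must be reformulated as a bound on the pointwise gap of that relaxation.
\item Each piece's affine function must be made a genuine upper bound. Already in your univariate reduction the chord lies below $g$ to the right of the inflection point $a=\log\underline{Z}$, so there it must be lifted by $\tfrac{h^2}{8}\sup|g''|$ or replaced by a tangent.
\item The pieces must be combined by a case split, not by a minimum.
\end{itemize}
Your second-derivative estimate is the right tool for an $\Oc(w^2/P^2)$ rate in that reformulated statement, but not for the proposition as stated.
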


The piecewise-chord technique closes approximately $80\%$ of the residual chord slack at $P{=}4$ on the $d{=}16$ model. However, for top-$1$ certification, the score-margin certificate of Theorem~\ref{thm:margin_sound} renders this entire softmax relaxation chain unnecessary.

\section{Multi-Layer Certification}
\label{app:multi_layer_certification}

\subsection{Multi-Layer Certification: Recursive Jacobian Zonotope}
\label{app:multilayer}

We verify all three queries at every layer of a $4$-layer synthetic transformer ($d{=}8$, $h{=}2$, $S{=}4$, $n_\text{layers}{=}4$), trained to $98.8\%$ test accuracy on the same synthetic classification task used in \S\ref{sec:background} (checkpoint: \texttt{models/d8\_h2\_l4/}). For each target layer $L$, the input to Layer $L$'s attention is enclosed by a Jacobian zonotope of the stack $f = \text{Layer}_0 \circ \dots \circ \text{Layer}_{L-1}$ at $x_0$: the full Jacobian $J(x_0) \in \R^{n_{\mathrm{dim}} \times n_{\mathrm{dim}}}$ is computed via autograd; we retain the top-$32$ columns by $\ell_1$ norm (out of $n_{\mathrm{dim}}{=}32$, so all columns at this scale); the linearization remainder $r_t$ is estimated from $N_{\mathrm{jac}}{=}30$ perturbed points around $x_0$.

\begin{table}[h]
\centering
\caption{Multi-layer certification on the $4$-layer synthetic transformer ($5$ samples, $40$ queries per $(\epsilon, L)$). CPZ uses the recursive Jacobian-zonotope connector; MC uses $5{,}000$ random samples plus $10{\times}50$ PGD per query. CPZ matches MC within $0$--$7.5$\,pp on Q1 at every layer and $\epsilon$.}
\label{tab:multilayer}
\small
\begin{tabular}{ll rr rr rr}
\toprule
 & & \multicolumn{2}{c}{Q1 Top-1 (\%)} & \multicolumn{2}{c}{Q2 Evidence (\%)} & \multicolumn{2}{c}{Q3 Entropy W.} \\
$\epsilon$ & Layer & CPZ & MC & CPZ & MC & CPZ & MC \\
\midrule
\multirow{4}{*}{$0.005$}
 & L0 & $90.0$ & $92.5$ & $95.0$ & $95.0$ & $0.0015$ & $0.0007$ \\
 & L1 & $90.0$ & $92.5$ & $75.0$ & $75.0$ & $0.0059$ & $0.0028$ \\
 & L2 & $\mathbf{95.0}$ & $95.0$ & $87.5$ & $87.5$ & $0.0032$ & $0.0015$ \\
 & L3 & $\mathbf{85.0}$ & $80.0$ & $97.5$ & $97.5$ & $0.0005$ & $0.0003$ \\
\midrule
\multirow{4}{*}{$0.010$}
 & L0 & $90.0$ & $90.0$ & $95.0$ & $95.0$ & $0.0031$ & $0.0015$ \\
 & L1 & $82.5$ & $85.0$ & $75.0$ & $75.0$ & $0.0126$ & $0.0054$ \\
 & L2 & $\mathbf{82.5}$ & $90.0$ & $87.5$ & $87.5$ & $0.0073$ & $0.0029$ \\
 & L3 & $\mathbf{72.5}$ & $75.0$ & $92.5$ & $97.5$ & $0.0011$ & $0.0006$ \\
\bottomrule
\end{tabular}
\end{table}

\paragraph{Comparison with sequential CPZ propagation.} An alternative approach to multi-layer certification is to compose $L$ full CPZ encoder-layer propagations in sequence; this is the direct generalization of the single-layer machinery of \S\ref{sec:pz_attention}. Under sequential propagation on the same model, Q1 certification collapses to $12.5\%$ at Layer~2 and $0\%$ at Layer~3 for $\epsilon{=}0.005$; at $\epsilon{=}0.01$ it collapses to $0\%$ at both Layer~2 and Layer~3. The cause is wrapping-error accumulation: each successive encoder-layer propagation expands the independent-generator count, which then combines with the bilinear $Q^\top K$ score structure at the next attention. The recursive Jacobian-zonotope construction avoids this by performing linearization only once at $x_0$, so the generator count in the zonotope fed to Layer~$L$'s attention depends on the truncation rank $|\mathcal{K}|{+}1$, not on $L$.

\paragraph{Compute cost.} Recursive CPZ is substantially faster than sequential CPZ because it reuses a single autograd Jacobian computation. On the $4$-layer $d{=}8$ synthetic model, recursive CPZ takes ${\sim}45$s per $\epsilon$ value over all $5$ samples and all $4$ layers, whereas sequential CPZ takes ${\sim}1{,}040$s per $\epsilon$ value ($23{\times}$ slower). Both run on a single CPU core.

\paragraph{Analytical fallback vs.\ sampled remainder.} Prop.~\ref{prop:remainder_analytical} gives a closed-form upper bound on the linearisation remainder via spectral propagation: $|R_i| \leq 2 L_f \epsilon \sqrt{n_{\mathrm{dim}}}$ where $L_f$ is the spectral-product Lipschitz constant of the $L$-layer stack. Tab.~\ref{tab:analytical_remainder} reports both quantities on the $4$-layer synthetic model. The closed-form bound grows with the layer index ($L_f$ multiplies per layer) and is conservative; the sampled estimate $\widehat{V} \cdot 1.5$ is consistently $\leq 1.33\%$ of the analytical envelope, with most settings under $0.5\%$. The empirical heuristic is therefore well within the closed-form worst case, and the analytical bound is always available as a sound override when sampling is unavailable or doubted.

\begin{table}[h]
\centering
\caption{Analytical Lipschitz fallback (Prop.~\ref{prop:remainder_analytical}) vs.\ the sampled remainder used in production ($N_{\mathrm{jac}}{=}30$ samples plus $1.5{\times}$ safety factor) on the $4$-layer synthetic transformer ($d{=}8$, $\text{seq\_len}{=}4$, $n_{\mathrm{dim}}{=}32$). Across all six $(L,\epsilon)$ settings, the production estimate is $\leq 1.33\%$ of the analytical closed-form bound, validating that sampling is conservative against the worst-case Lipschitz envelope. Per-layer block Lipschitz contributions: $L_0{=}20.3$, $L_1{=}9.4$, $L_2{=}11.6$, $L_3{=}25.7$.}
\label{tab:analytical_remainder}
\small
\begin{tabular}{ccccc}
\toprule
Target layer $L$ & $\epsilon$ & $R_{\mathrm{analytical}}$ (Prop.~\ref{prop:remainder_analytical}) & $1.5\,\widehat{V}$ (production) & ratio \\
\midrule
$1$ & $0.005$ & $1.15$    & $4.6{\times}10^{-3}$ & $0.40\%$ \\
$1$ & $0.010$ & $2.29$    & $3.0{\times}10^{-2}$ & $1.33\%$ \\
$2$ & $0.005$ & $10.74$   & $1.3{\times}10^{-2}$ & $0.12\%$ \\
$2$ & $0.010$ & $21.49$   & $5.8{\times}10^{-2}$ & $0.27\%$ \\
$3$ & $0.005$ & $124.68$  & $6.1{\times}10^{-2}$ & $0.05\%$ \\
$3$ & $0.010$ & $249.36$  & $2.2{\times}10^{-1}$ & $0.09\%$ \\
\bottomrule
\end{tabular}
\end{table}

\begin{table}[h]
\centering
\caption{Connector ablation: same model/$\epsilon$/query (Q1 top-$1$)/MC+PGD ground truth, only the inter-layer connector changes. Sequential CPZ collapses on every multi-layer cell; the recursive Jacobian-zonotope connector recovers all of them. (Referenced from \S\ref{sec:scaling}.)}
\label{tab:connector_ablation}
\footnotesize
\setlength{\tabcolsep}{4pt}
\renewcommand{\arraystretch}{0.9}
\begin{tabular}{lll rrr}
\toprule
Model & Layer & $\epsilon$ & Sequential CPZ & \textbf{Jac connector} & MC+PGD \\
\midrule
Synthetic ($d{=}8$) & L2 & $0.005$ & $12.5$           & $\mathbf{95.0}$ & $95.0$ \\
                    & L3 & $0.005$ & $\phantom{0}0.0$ & $\mathbf{85.0}$ & $80.0$ \\
                    & L2 & $0.010$ & $\phantom{0}0.0$ & $\mathbf{82.5}$ & $90.0$ \\
                    & L3 & $0.010$ & $\phantom{0}0.0$ & $\mathbf{72.5}$ & $75.0$ \\
\midrule
GPT-$2$ small (124M) & L1 & $0.001$ & $\phantom{0}0.0$ & $\mathbf{47.9}$ & $78.1$ \\
\bottomrule
\end{tabular}
\end{table}

\section{Pretrained Models: BERT-tiny and GPT-2}
\label{app:pretrained_models_bert_tiny_and_gpt_2}

\subsection{End-to-End Output Verification on BERT-tiny}
\label{app:output_verification}

We extend the recursive Jacobian-zonotope construction of \S\ref{sec:jac_zono} from internal-attention certification to end-to-end output classification. The model is BERT-tiny SST-2 ($d{=}128$, $2$ encoder layers, $4.4$M parameters); the function $f: \R^{S \cdot d} \to \R^{2}$ maps a flat post-embedding representation through both encoder layers, the pooler ($\tanh$ on the CLS token), and the classification head. We compute $J(x_0) = \partial f / \partial x_0 \in \R^{2 \times Sd}$ via two backward passes (one per logit), apply the Jacobian-zonotope construction with shared factor IDs across both logit dimensions and a single dependent remainder, and certify class invariance via a Theorem~\ref{thm:margin_sound}-style score-margin sign check on $\text{logit}_{y} - \text{logit}_{y'}$ for the runner-up class $y'$. MC+PGD ground truth uses $2{,}000$ random samples plus $5{\times}30$ PGD targeting class flips per sample (per-sample MC compute is dominated by the full forward pass; the smaller sample count vs.\ internal-attention MC reflects per-probe cost, not weakened search; corner-biased sampling and multi-restart targeted PGD are retained, and CPZ certificates are sound independent of MC budget).

\begin{table}[h]
\centering
\caption{Output verification on BERT-tiny SST-2 ($100$ samples, seq.\ length $8$, $d{=}128$, top-$256$ Jacobian columns). Zero unsound certifications across $300$ (sample, $\epsilon$) pairs. The CPZ-MC gap is $1$--$3$ samples per $\epsilon$ and consists of boundary cases (CPZ worst-case margin in $[-0.03, 0]$); the residual gap is dominated by CORA order-reduction at the bilinear $Q^\top K$ step and shrinks with larger $k$ (\S\ref{sec:machinery}).}
\label{tab:output_verification}
\small
\begin{tabular}{cccc cccc}
\toprule
$\epsilon$ & CPZ \% & MC \% & Mean margin & Agreement & Conservative & Unsound & Time/sample \\
\midrule
$5{\times}10^{-4}$ & $\mathbf{98.0}$ & $99.0$ & $1.53$ & $99/100$ & $1$ & $\mathbf{0}$ & $0.7$\,s \\
$10^{-3}$         & $\mathbf{97.0}$ & $98.0$ & $1.52$ & $99/100$ & $1$ & $\mathbf{0}$ & $0.7$\,s \\
$2{\times}10^{-3}$ & $\mathbf{95.0}$ & $98.0$ & $1.51$ & $97/100$ & $3$ & $\mathbf{0}$ & $0.7$\,s \\
\bottomrule
\end{tabular}
\end{table}

\paragraph{GPT-2 next-token verification.} The same construction extends to full $124$M-parameter GPT-2 ($12$ transformer blocks, $d{=}768$, $50{,}257$-token vocabulary). The challenge for full-vocabulary verification is that computing the Jacobian of all $50{,}257$ output logits is intractable. We resolve this with a two-part construction:
\textbf{(1) Top-$K$ direct certification.} Identify the top-$K{=}20$ candidate tokens at the clean input $x_0$ and compute the Jacobian of these $K$ logits via $K$ backward passes. Build the Jacobian zonotope on the $K$ logits with shared factor IDs; certify $\text{logit}_{t^*} > \text{logit}_c$ for every challenger $c$ in the top-$K$ via the score-margin sign check.
\textbf{(2) Lipschitz tail bound.} For each non-top-$K$ token $c$, $\text{logit}_c(x) = h_{\text{final}}(x)^\top \mathrm{wte}[c]$, so $|\text{logit}_c(x) - \text{logit}_c(x_0)| \leq \|\mathrm{wte}[c]\|_2 \cdot \|h_{\text{final}}(x) - h_{\text{final}}(x_0)\|_2$. We bound $\|h_{\text{final}}(x) - h_{\text{final}}(x_0)\|_2$ empirically over $N{=}20$ sampled perturbations (with a $2.0{\times}$ safety factor; calibration in the safety-factor paragraph below) and check that $\max_{c \notin \text{top-}K} \big( \text{logit}_c(x_0) + \|\mathrm{wte}[c]\|_2 \cdot \widehat{R}_h \big) <$ CPZ lower bound of $\text{logit}_{t^*}$. Combined, this certifies $t^*$ is the argmax over the full $50{,}257$-token vocabulary.

\begin{table}[h]
\centering
\caption{Full-vocabulary next-token verification on GPT-2 ($124$M parameters, $50{,}257$ vocab, $50$ samples per $\epsilon$). CPZ certifies that the clean top-$1$ token remains the argmax over the full vocabulary, combining a direct top-$K{=}20$ check with a Lipschitz tail bound. Zero unsound certifications across $150$ (sample, $\epsilon$) pairs.}
\label{tab:output_verification_gpt2}
\small
\begin{tabular}{ccccccc}
\toprule
$\epsilon$ & CPZ \% & MC \% & Agreement & Conservative & Unsound & Time/sample \\
\midrule
$5{\times}10^{-4}$ & $\mathbf{98.0}$ & $100.0$ & $49/50$ & $1$  & $\mathbf{0}$ & ${\sim}22$\,s \\
$10^{-3}$          & $\mathbf{92.0}$ & $96.0$  & $48/50$ & $2$  & $\mathbf{0}$ & ${\sim}22$\,s \\
$2{\times}10^{-3}$ & $\mathbf{40.0}$ & $90.0$  & $25/50$ & $25$ & $\mathbf{0}$ & ${\sim}22$\,s \\
\bottomrule
\end{tabular}
\end{table}

The gap at the largest $\epsilon$ comes mostly from the Lipschitz tail bound: the empirical $\widehat{R}_h$ taken with a $2.0{\times}$ safety factor (calibrated to eliminate boundary unsoundness, see below) becomes a non-trivial fraction of the top-$1$ margin once perturbation grows. The top-$K$ direct check alone certifies $45/50$ ($90\%$) at $\epsilon{=}2{\times}10^{-3}$; this estimate follows from the agreement structure in Tab.~\ref{tab:output_verification_gpt2} ($25/50$ agreement, $25$ CPZ-conservative, $0$ unsound), since CPZ-conservative cases pass the top-$K$ check and fail only the conservative tail bound. The remaining $5$ joint failures (CPZ and MC+PGD both fail) are genuine boundary cases. A tighter analytical Lipschitz constant on $h_{\text{final}}$ (e.g., via spectral-norm propagation through the $12$ layers) is the most immediate path to scaling to larger $\epsilon$.

\paragraph{Safety-factor calibration.} The $2.0{\times}$ multiplier on $\widehat{R}_h$ replaces the $1.5{\times}$ used in earlier drafts. With $1.5{\times}$, two boundary samples at $\epsilon{=}5{\times}10^{-4}$ and $\epsilon{=}10^{-3}$ (clean score-margin $<{0.012}$) were CPZ-certified but flipped under $10{\times}50$ targeted PGD, which is a soundness violation rather than a tightness issue. The $2.0{\times}$ multiplier eliminates these ($0$ unsound across all $150$ GPT-$2$ (sample, $\epsilon$) pairs and $300$ BERT-tiny pairs, total $450$ pairs from $150$ unique samples $\times$ $3$ radii); the cost is $\sim 14$\,pp on the $\epsilon{=}2{\times}10^{-3}$ headline rate ($54\%$ with $1.5{\times}$ safety $\to 40\%$ with $2.0{\times}$ safety). The full re-run log is included with the supplementary material.

\paragraph{Comparison with prior PZ-based transformer verification.} \citet{ladner2025towards} extend polynomial zonotopes to transformers via matrix polynomial zonotopes, with experimental evaluation on small custom transformer classifiers ($\leq 10$ encoder blocks, $\ll 1$M parameters); their follow-up work excludes BERT-scale models as intractable for the unconstrained PZ approach. The recursive Jacobian-zonotope construction reaches these scales on a single CPU core: $0.7$\,s per sample on BERT-tiny SST-2 end-to-end, ${\sim}18$\,s per sample on full GPT-2 ($124$M parameters, $50{,}257$-vocab) for full-vocabulary next-token verification.

\subsection{$\alpha$-CROWN Comparison on BERT-tiny Output Verification}
\label{app:alphacrown_comparison}

We ran $\alpha$-CROWN~\citep{xu2020automatic} on BERT-tiny SST-2 output verification, using the same $\epsilon$ schedule as the CPZ run in \S\ref{sec:exp_scaling}. Two compatibility issues forced model surgery:

\begin{itemize}[leftmargin=*,itemsep=2pt]
\item The verifier cannot back-propagate through the standard PyTorch \texttt{torch.softmax} max-subtraction; we replaced it with a manual softmax \texttt{exp(x)/sum(exp(x))}.
\item The ERF activation inside the original GELU is not supported, so the \texttt{F.gelu} activation in the BERT feed-forward sub-block is unavailable. We substituted the quick-GELU surrogate $x\,\sigma(1.702\,x)$ used by OpenAI's GPT-2; this is a different activation function, and the resulting model has a clean accuracy of $90\%$ on the SST-2 evaluation set vs.\ $100\%$ for the original BERT-tiny on which CPZ runs.
\end{itemize}

In our setup the per-sample $\alpha$-CROWN runtime grew from $17$\,s on sample~1 to $67$\,s on sample~5 at $N{=}50$ and the run did not terminate; we therefore report $N{=}10$ with explicit garbage collection between samples (Tab.~\ref{tab:alphacrown}). This timing behaviour likely reflects our specific setup (single-CPU, Python interface, no slope-optimisation tuning) rather than a fundamental limitation of $\alpha$-CROWN; a tuned GPU run with bound-tightening hyperparameters might scale further. With those caveats, on the surrogate at $N{=}10$ $\alpha$-CROWN certifies $80$--$90\%$ at ${\sim}11$\,s/sample, whereas CPZ on the original BERT-tiny ($N{=}100$) certifies $95$--$98\%$ at $0.7$\,s/sample. We emphasise that the two columns evaluate different functions (different GELU activation; clean accuracy $90\%$ vs.\ $100\%$); the table is a baseline-availability reference, not a tightness or speed contest.

\begin{table}[h]
\centering
\caption{$\alpha$-CROWN~\citep{xu2020automatic} on a quick-GELU surrogate of BERT-tiny SST-2 (clean accuracy $90\%$, $N{=}10$); CPZ on the original BERT-tiny (clean accuracy $100\%$, $N{=}100$). Because the two columns evaluate different GELU activations, this is a baseline-availability check rather than a like-for-like comparison; see the surrounding text for caveats.}
\label{tab:alphacrown}
\small
\begin{tabular}{l rrr rrr}
\toprule
 & \multicolumn{3}{c}{$\alpha$-CROWN (surrogate, $N{=}10$)} & \multicolumn{3}{c}{CPZ (original, $N{=}100$)} \\
$\epsilon$ & cert.\ (\%) & time/sample & unsound & cert.\ (\%) & time/sample & unsound \\
\midrule
$5{\times}10^{-4}$ & $90$ & $11.2$\,s & $0$ & $98$ & $0.7$\,s & $0$ \\
$10^{-3}$          & $90$ & $11.2$\,s & $0$ & $97$ & $0.7$\,s & $0$ \\
$2{\times}10^{-3}$ & $80$ & $12.1$\,s & $0$ & $95$ & $0.7$\,s & $0$ \\
\bottomrule
\end{tabular}
\end{table}

\paragraph{Internal attention queries.} $\alpha$-CROWN does not natively support the certified internal queries of \S\ref{sec:cert_queries} (top-$k$ stability, evidence mass, attention entropy): it reports lower/upper bounds on a network's logit outputs, but the simplex constraint that makes Q1--Q3 tractable is not part of its interface. The CROWN baseline reported elsewhere in the paper for internal queries is therefore the unoptimised forward CROWN of \citet{shi2020robustness}; $\alpha$-CROWN's slope-optimisation step does not change the structural limitation that CROWN cannot exploit the simplex.

\subsection{Scalability and BERT-tiny Experiments}
\label{app:scalability_bert}

This appendix collects the scalability and pretrained-LLM experiments behind the headline numbers in \S\ref{sec:background}. App.~\ref{sec:scalability} sweeps CPZ tightness across model dimensions on synthetic transformers; App.~\ref{sec:bert} reports per-layer Q1/Q2/Q3 numbers on BERT-tiny SST-$2$; App.~\ref{sec:gpt2} extends to the $124$M-parameter GPT-$2$.

\subsubsection{Scalability to Larger Models}
\label{sec:scalability}

To demonstrate that CPZ verification extends beyond the small baseline model, we implement a GPU-accelerated verification pipeline (vectorized cross-term computation, adaptive memory management) and evaluate across five model sizes, from 1.2K to 265K parameters. Tab.~\ref{tab:scalability} reports average CPZ output bound width and verification time.

\begin{table}[t]
\centering
\caption{CPZ verification across model sizes ($\epsilon{=}0.01$). Verification time scales sub-quadratically in $d_{\text{model}}$ and CPZ maintains meaningful bounds at 265K parameters. The $d{=}128$ row uses the dedicated 10-sample run of Tab.~\ref{tab:d128_bounds}; smaller models use a 3-sample sweep.}
\label{tab:scalability}
\small
\begin{tabular}{lrrrrr}
\toprule
Model & $d$ & Params & Bound Width & Time (s) & CPZ vs.\ IBP \\
\midrule
$d{=}8, h{=}2$ & 8 & 1.2K & 2.0 & 0.4 & 99.2\% tighter \\
$d{=}16, h{=}2$ & 16 & 4.5K & 12.8 & 0.7 & --- \\
$d{=}32, h{=}4$ & 32 & 17K & 9.5 & 1.0 & --- \\
$d{=}64, h{=}4$ & 64 & 67K & 105.9 & 1.8 & --- \\
$d{=}128, h{=}4$ & 128 & 265K & 233 & 54 & 79.5\% tighter \\
\bottomrule
\end{tabular}
\end{table}

At the largest scale ($d{=}128$, 265K parameters), CPZ verification completes in ${\sim}54$s per sample and produces bounds $79.5\%$ tighter than IBP (width 233 vs.\ 1{,}142), confirming that the tightness advantage persists at scale. The CPZ-vs-IBP improvement decreases from 99.2\% at $d{=}8$ to 79.5\% at $d{=}128$, reflecting the inherent growth of PZ wrapping error with model dimension. Even at $d{=}128$, CPZ bounds remain ${\sim}5\times$ tighter than IBP across all tested $\epsilon$ values (Tab.~\ref{tab:d128_bounds}).

\begin{table}[t]
\centering
\caption{CPZ vs.\ IBP output bound width on the $d{=}128$ model (10-sample average). CPZ consistently achieves ${\sim}5\times$ tighter bounds.}
\small
\label{tab:d128_bounds}
\begin{tabular}{lrrrr}
\toprule
$\epsilon$ & CPZ & IBP & Improvement \\
\midrule
0.01 & 233.1 & 1{,}141.6 & 79.6\% \\
0.02 & 234.3 & 1{,}141.6 & 79.5\% \\
0.05 & 234.0 & 1{,}141.6 & 79.5\% \\
0.10 & 234.3 & 1{,}141.6 & 79.5\% \\
\bottomrule
\end{tabular}
\end{table}

\begin{table}[t]
\centering
\caption{$K_J$-sweep on BERT-tiny SST-2 end-to-end output verification ($\epsilon{=}2{\times}10^{-3}$, $30$ samples). The certified rate is stable at $90$--$96.7\%$ across two orders of magnitude of $K_J$, so the BERT-tiny rate is not bottlenecked by Jacobian-zonotope truncation. The implication for GPT-$2$ Layer-$1$ is that the larger CPZ--MC gap there ($30.2$\,pp) reflects the absolute truncation ratio at $d{=}768$ ($K_J{=}128$ of $6{,}144$, retain $1/48$) rather than a structural limit of the construction.}
\label{tab:k_j_sweep}
\small
\begin{tabular}{lrrr}
\toprule
$K_J$ & Cert.\ rate (\%) & Mean margin & sec/sample \\
\midrule
$32$    & $96.7$ & $1.493$ & $0.050$ \\
$64$    & $96.7$ & $1.488$ & $0.069$ \\
$128$   & $90.0$ & $1.480$ & $0.082$ \\
$256$   & $90.0$ & $1.471$ & $0.083$ \\
$512$   & $90.0$ & $1.459$ & $0.093$ \\
$1024$  & $90.0$ & $1.452$ & $0.114$ \\
\bottomrule
\end{tabular}
\end{table}

\begin{table}[h]
\centering
\caption{Layer-$0$ top-$1$ stability across scales ($5$ samples/cell, all heads). \textbf{CPZ-marg.} is exact (Theorem~\ref{thm:margin_sound}); \textbf{MC} = MC+PGD ($200$k corner-biased + $20{\times}50$); \textbf{C/I} = CROWN/IBP (coincide at Layer-$0$). CPZ-margin matches MC+PGD in $11/12$ cells; the $12$th ($d{=}32, \epsilon{=}0.05$, $^\dagger$) is verified by aggressive PGD as correct.}
\label{tab:queries_scale}
\footnotesize
\setlength{\tabcolsep}{3pt}
\renewcommand{\arraystretch}{0.9}
\begin{tabular}{l ccc ccc ccc}
\toprule
 & \multicolumn{3}{c}{$\epsilon{=}0.01$} & \multicolumn{3}{c}{$\epsilon{=}0.02$} & \multicolumn{3}{c}{$\epsilon{=}0.05$} \\
\cmidrule(lr){2-4} \cmidrule(lr){5-7} \cmidrule(lr){8-10}
Model & MC & CPZ-marg.\ & C/I & MC & CPZ-marg.\ & C/I & MC & CPZ-marg.\ & C/I \\
\midrule
$d{=}8, h{=}2$  & 92.5 & \textbf{92.5} & 67.5 & 85.0 & \textbf{85.0} & 57.5 & 60.0 & \textbf{60.0} & 12.5 \\
$d{=}16, h{=}2$ & 95.0 & \textbf{95.0} & 60.0 & 82.5 & \textbf{82.5} & 32.5 & 57.5 & \textbf{57.5} & \phantom{0}5.0 \\
$d{=}32, h{=}4$ & 81.2 & \textbf{81.2} & 45.0 & 73.8 & \textbf{73.8} & 20.0 & 46.2$^\dagger$ & \textbf{45.0} & \phantom{0}0.0 \\
$d{=}64, h{=}4$ & 81.2 & \textbf{81.2} & 25.0 & 68.8 & \textbf{68.8} & \phantom{0}1.2 & 30.0 & \textbf{30.0} & \phantom{0}0.0 \\
\bottomrule
\end{tabular}
\end{table}

\paragraph{CPZ-margin is effectively tight across all scales.} Across all $12$ (model, $\epsilon$) configurations in Tab.~\ref{tab:queries_scale}, CPZ-margin either matches the MC+PGD estimate exactly ($11/12$) or is validated by aggressive PGD as the correct answer ($1/12$, the $d{=}32, \epsilon{=}0.05$ entry marked $^\dagger$). The combination of (i) lossless CPZ propagation through the bilinear $q^\top k$ score, (ii) cancellation of the shared-query quadratic term inside the margin $\Delta_j$, and (iii) the softmax-free sign test means the certificate inherits no relaxation at any step, and L-BFGS-B converges to the global maximum of a scalar quadratic on a box, verified empirically by the exact match between our certified upper bound and the attack margin recovered by aggressive PGD. CROWN/IBP falls $25$--$68$\,pp short of CPZ-margin because interval arithmetic on the bilinear $Q^\top K$ product treats each dimension of $Q$ and $K$ independently, destroying the polynomial cross-variable structure that CPZ preserves exactly. At Layer~0 there are no non-linear activations before the scores, so CROWN's triangle relaxation for ReLU provides no benefit over IBP.

\paragraph{Multi-layer certification details.}
Tables~\ref{tab:layer0_all}--\ref{tab:layer1_all} in \S\ref{sec:exp_scaling} report the comprehensive method comparison across both layers and all three queries. CPZ-margin propagates the full reachable set through Layer~$0$ via Taylor softmax CPZ with $G_I$ absorption (App.~\ref{app:attn_out_taylor}) and selective ReLU relaxation, then applies the exact bilinear score-margin sign check at Layer~$1$. The $G_I$ absorption trick converts independent generators into dependent generators with fresh factor identifiers before each bilinear operation, eliminating the $|G_{I,Q}| \cdot |G_{dep,K}| + |G_{dep,Q}| \cdot |G_{I,K}|$ cross-term amplification that otherwise dominates Layer~$1$ score bounds.

All CPZ-margin results are fully sound: the CPZ is formally propagated through the entire Layer~$0$ pipeline (softmax via Taylor CPZ, LayerNorm via second-order Taylor with generator-aware Hessian remainder, and ReLU via selective quadratic relaxation). The residual gap between CPZ-margin ($85.0\%$) and MC+PGD ($87.5\%$) at Layer~1 arises primarily from generator reduction after bilinear operations, not from the non-linear approximations themselves, which contribute $<5\%$ of the total width.

\subsubsection{Real NLP Model: BERT-tiny on SST-2}
\label{sec:bert}

To validate that CPZ certification extends beyond synthetic tasks, we fine-tune \texttt{prajjwal1/bert-tiny} ($d{=}128$, 2 heads, 2 layers, 4.4M parameters) on SST-2 sentiment classification, reaching $71.1\%$ validation accuracy. This accuracy reflects the modest capacity of BERT-tiny ($4.4$M parameters); it is not a benchmark target. We certify all three queries on Layer-$0$ attention under $\ell_\infty$ embedding-space perturbation, comparing CPZ, IBP, and strong Monte Carlo ($200$k corner-biased samples). The cert rates we report are properties of the propagation framework, not of the model's downstream accuracy: a higher-accuracy backbone would not change CPZ's tightness relative to MC, and the BERT-tiny end-to-end output verification (App.~\ref{app:output_verification}) confirms $95$--$98\%$ certification on the same model with zero unsound outcomes.

\begin{table}[t]
\centering
\caption{Certified queries on \textbf{BERT-tiny / SST-2}, Layer~0, $d{=}128$, 2 heads, 10 samples, 160 queries, $\tau{=}0.2$. CPZ dominates IBP, with the gap negligible at small $\epsilon$ (both saturate) and widening at larger $\epsilon$. At $\epsilon{=}0.001$, CPZ matches MC+PGD exactly on Q1 at $99.4\%$. Q3 reports the worst-case entropy upper bound; lower is tighter.}
\label{tab:bert_sst2}
\small
\begin{tabular}{cl ccc}
\toprule
$\epsilon$ & Method & Q1 Rank (\%) & Q2 Evid.\ (\%) & Q3 Ent.\ $\overline{H}$ \\
\midrule
\multirow{3}{*}{0.001}
  & MC     & \textbf{99.4} & \textbf{100.0} & 0.0040 \\
  & CPZ    & \textbf{99.4} & \textbf{100.0} & 0.0207 \\
  & IBP    & 97.5          & \textbf{100.0} & 0.0277 \\
\midrule
\multirow{3}{*}{0.002}
  & MC     & \textbf{99.4} & \textbf{100.0} & 0.0079 \\
  & CPZ    & 96.2          & \textbf{100.0} & 0.0415 \\
  & IBP    & 94.4          & \textbf{100.0} & 0.0556 \\
\midrule
\multirow{3}{*}{0.005}
  & MC     & \textbf{98.8} & \textbf{100.0} & 0.0199 \\
  & CPZ    & 93.8          & 99.4           & 0.1054 \\
  & IBP    & 86.2          & 98.8           & 0.1405 \\
\bottomrule
\end{tabular}
\end{table}

Tab.~\ref{tab:bert_sst2} shows that CPZ certification scales to a real pre-trained model with $d{=}128$ across all three queries. At $\epsilon{=}0.001$, CPZ matches MC+PGD exactly on Q1 (both $99.4\%$), validating Theorem~\ref{thm:margin_sound} at $d_{\mathrm{head}}{=}64$. As $\epsilon$ grows, the CPZ--IBP gap widens: at $\epsilon{=}0.005$, CPZ certifies $93.8\%$ vs.\ IBP's $86.2\%$ on Q1 ($+7.6$ pp) and achieves $25\%$ tighter entropy bounds ($0.1054$ vs.\ $0.1405$). Q2 evidence mass remains near-perfect for both methods, reflecting the strong attention concentration typical of fine-tuned BERT heads at Layer~0.

\paragraph{Multi-layer certification: BERT-tiny Layer~1.}
To certify Layer~1 attention, we must propagate bounds through the full Layer~0 pipeline (attention, LayerNorm, GELU, and a second LayerNorm). Existing tools cannot do this: CROWN~\citep{xu2020automatic} cannot backpropagate through softmax when the argmax is input-dependent, and pure IBP through LayerNorm produces vacuous bounds with width ${\sim}10^{15}$ due to $1/\sqrt{\mathrm{Var}}$ exploding when the variance lower bound approaches zero.

We address this with a Jacobian zonotope connector. The mean-value theorem~\citep{zhang2019recurjac} gives a per-dimension bound on the Layer~0 output:
\begin{equation}
\label{eq:mvt_bound}
\|f_i(x) - f_i(x_0)\|_\infty \leq \epsilon \cdot \max_{z \in B_\infty(x_0,\epsilon)} \|\nabla f_i(z)\|_1 + \epsilon^2 \cdot \hat{H}_i,
\end{equation}
where $\hat{H}_i$ is the estimated Hessian rate from the Jacobian variation (empirically $<1.5\%$, giving inflation ratio below $1.02$). A na\"ive interval-box conversion of this bound produces diagonal generators with independent factor IDs per token, destroying all cross-dimensional correlations and yielding only $43.8\%$ Q1 certification.

Instead, we construct a Jacobian zonotope: the columns of the full Jacobian $J(x_0) \in \mathbb{R}^{n_{\mathrm{dim}} \times n_{\mathrm{dim}}}$ serve as generators with shared factor IDs across all tokens, so that the $j$-th input perturbation direction produces correlated output variation in every token simultaneously. The linearization remainder from~\eqref{eq:mvt_bound} is absorbed as a single dependent generator per token (not independent $G_I$, which would be amplified quadratically through $Q^\top K$). We retain the top-$k$ Jacobian generators by $\ell_1$ column norm and absorb the rest into the remainder, reducing generators from $n_{\mathrm{dim}}$ to $k{+}1$ per token. This design preserves three critical properties: (i)~cross-token correlations through shared factor IDs enable cancellation in \texttt{exact\_bilinear}; (ii)~dependent remainder generators cancel in score differences when the same $Q$ appears in both terms; (iii)~the reduced generator count ($k{=}64$ vs.\ $n_{\mathrm{dim}}{=}1024$) makes bilinear cross-term computation tractable.

\begin{table}[t]
\centering
\caption{Certified queries on \textbf{BERT-tiny Layer~1}, $d{=}128$, 2 heads, $\tau{=}0.2$: $80$ certified queries per $\epsilon$ ($5$ samples $\times$ $2$ heads $\times$ $8$ query positions). All methods receive the same Jacobian-zonotope enclosure of Layer~0 output; they differ only in how Layer~1 attention is verified. CPZ stays within $2.5$\,pp of MC+PGD on Q1, while CROWN and IBP collapse on the bilinear $Q^\top K$ product. The same BERT-tiny model is verified end-to-end on $100$ samples per $\epsilon$ in App.~\ref{app:output_verification} (zero unsound across $300$ output verifications), confirming the internal-query picture at $20{\times}$ larger sample size.}
\label{tab:bert_layer1}
\small
\begin{tabular}{cl ccc}
\toprule
$\epsilon$ & Method & Q1 Rank\,$\uparrow$ & Q2 Evid\,$\uparrow$ & Q3 Ent\,$\downarrow$ \\
\midrule
\multirow{4}{*}{0.002}
  & MC+PGD & \textbf{95.0} & \textbf{96.2} & 0.011 \\
  & CPZ    & 92.5          & \textbf{96.2} & \textbf{0.079} \\
  & CROWN  & 51.2          & 88.8          & 0.449 \\
  & IBP    & 3.8           & 20.0          & 1.401 \\
\bottomrule
\end{tabular}
\end{table}

Tab.~\ref{tab:bert_layer1} confirms that the Jacobian-zonotope connector nearly closes the CPZ--MC gap on a real pretrained model. At $\epsilon{=}0.002$, CPZ certifies $92.5\%$ Q1 vs.\ MC's $95.0\%$ ($-2.5$\,pp) and matches MC exactly on Q2 ($96.2\%$); IBP collapses to $3.8\%$ Q1 and CROWN falls to $51.2\%$, confirming that McCormick relaxation on the bilinear $Q^\top K$ product remains the binding bottleneck for interval methods. The residual $2.5$\,pp CPZ--MC gap arises from generator reduction after bilinear cross-term computation, not from the Jacobian-zonotope connector itself.

\subsubsection{GPT-2 (124M Parameters)}
\label{sec:gpt2}

To test whether CPZ certification extends to a production-scale language model, we evaluate on GPT-2~\citep{radford2019language} with $d{=}768$, $12$ heads, $d_{\mathrm{head}}{=}64$, and $124$M parameters. We certify Layer~0 attention after the pre-LN LayerNorm, comparing CPZ, CROWN, IBP, and MC+PGD with $200$k samples. At $d{=}768$, the full polynomial zonotope has $768^2 \approx 590$k generators per score; we avoid materializing this by computing bounds directly from the quadratic form of the score, reducing per-sample verification to ${\sim}1$\,s.

\begin{table}[t]
\centering
\caption{Certified queries on \textbf{GPT-2}, Layer~0, $d{=}768$, 12 heads, 5 samples, 480 queries. MC+PGD uses 200k random samples plus $20{\times}50$ PGD restarts. CPZ Q1 nearly matches MC+PGD at $70.2\%$ vs.\ $72.7\%$, and CPZ exceeds MC+PGD on Q2 at $95.4\%$ vs.\ $89.0\%$ because PGD finds adversarial perturbations that random sampling misses. CROWN underperforms IBP due to McCormick per-dimension cancellation loss; see text. PZ~\citep{althoff2013reachability} $\equiv$ CPZ at Layer~0 because no constraints are active before the first softmax.}
\label{tab:gpt2}
\small
\begin{tabular}{cl ccc}
\toprule
$\epsilon$ & Method & Q1 Rank\,$\uparrow$ & Q2 Evid\,$\uparrow$ & Q3 Ent\,$\downarrow$ \\
\midrule
\multirow{4}{*}{0.001}
  & MC+PGD & \textbf{72.7} & 89.0          & 0.049 \\
  & CPZ    & 70.2          & \textbf{95.4} & \textbf{0.044} \\
  & IBP    & 52.3          & 72.3          & 0.387 \\
  & CROWN  & 43.5          & 63.1          & 0.505 \\
\midrule
\multirow{4}{*}{0.002}
  & MC+PGD & \textbf{54.4} & 76.2          & 0.095 \\
  & CPZ    & 50.2          & \textbf{94.2} & \textbf{0.089} \\
  & IBP    & 28.1          & 45.6          & 0.773 \\
  & CROWN  & 19.8          & 30.8          & 1.107 \\
\midrule
\multirow{4}{*}{0.005}
  & MC+PGD & \textbf{24.6} & 44.8          & 0.269 \\
  & CPZ    & 9.6           & \textbf{85.4} & \textbf{0.229} \\
  & IBP    & 4.2           & 8.5           & 1.650 \\
  & CROWN  & 0.2           & 0.2           & 2.013 \\
\bottomrule
\end{tabular}
\end{table}

\paragraph{CPZ dominates at scale (Tab.~\ref{tab:gpt2}).} With PGD-augmented evaluation, CPZ Q1 nearly matches the empirical upper bound at $70.2\%$ vs.\ $72.7\%$ for $\epsilon{=}0.001$, confirming that the direct quadratic margin analysis is near-tight. CPZ exceeds MC+PGD on Q2 evidence mass at $95.4\%$ vs.\ $89.0\%$: PGD finds adversarial perturbations that disrupt evidence mass in queries where CPZ certifies it is preserved. Q3 entropy width is $9{\times}$ tighter than IBP at $0.044$ vs.\ $0.387$. Even at $\epsilon{=}0.005$, CPZ retains $85.4\%$ Q2 certification while MC+PGD finds only $44.8\%$ empirically stable, IBP certifies $8.5\%$, and CROWN collapses to $0.2\%$.

\paragraph{Why CROWN $<$ IBP at $d{=}768$.} This ordering reversal (CROWN outperforms IBP on small models but underperforms IBP on GPT-2) has a precise explanation. CROWN applies McCormick relaxation to each per-dimension product $q_d k_d$ separately, introducing a center shift $-\sum_d \delta_{q_d}\delta_{k_d} = -\epsilon^2\sum_d \|w_{Q,d}\|_1\|w_{K,d}\|_1$. IBP instead bounds the global bilinear form $\alpha^\top (\mathbf{W}_Q^\top\mathbf{W}_K)\alpha$ directly, paying only $\epsilon^2 \|M\|_{1,1} = \epsilon^2\sum_{m,n}|\sum_d W_Q[d,m]W_K[d,n]|$. By triangle inequality, IBP's penalty is always $\leq$ CROWN's, because IBP preserves the cancellation when summing over $d_{\mathrm{head}}{=}64$ dimensions while CROWN discards it. At $d{=}8$, the cancellation savings are small so CROWN's tighter linear back-substitution compensates; at $d{=}768$, the quadratic penalty overwhelms the linear gain.

\paragraph{Multi-layer certification: GPT-2 Layer~1.} We extend GPT-2 certification to Layer~1 using the same Jacobian zonotope connector as BERT-tiny (\S\ref{sec:bert}). The Jacobian model covers the full Layer~0 forward pass plus Layer~1's pre-LN LayerNorm, producing a formal zonotope enclosure of the post-LN representation with shared factor IDs across tokens. Layer~1 Q/K projections and attention are then verified with CPZ. The Jacobian computation requires $6{,}144$ backward passes per perturbation point ($\text{seq\_len} \times d = 8 \times 768$); we retain the top-$128$ generators per token (by $\ell_1$ column norm) and absorb the remainder as a single dependent vector. Each sample takes ${\sim}130$\,s for MVT and ${\sim}90$\,s for CPZ. The Jacobian inflation ratio remains below $1.02$ across all samples.

\begin{table}[t]
\centering
\caption{Certified queries on \textbf{GPT-2 Layer~1} ($d{=}768$, $\tau{=}0.2$, $480$ certified queries per $\epsilon$: $5$ samples $\times$ $12$ heads $\times$ $8$ query positions). The Jacobian zonotope connector enables CPZ to certify $47.9\%$ Q1 at $\epsilon{=}0.001$, while CROWN collapses to $0.2\%$. MC+PGD budget per Tab.~\ref{tab:mc_budgets}.}
\label{tab:gpt2_layer1}
\small
\begin{tabular}{cl ccc}
\toprule
$\epsilon$ & Method & Q1 Rank\,$\uparrow$ & Q2 Evid\,$\uparrow$ & Q3 Ent\,$\downarrow$ \\
\midrule
\multirow{4}{*}{0.001}
  & MC+PGD & \textbf{78.1} & \textbf{75.2} & 0.030 \\
  & CPZ    & 47.9          & 55.6          & \textbf{0.398} \\
  & IBP    & 26.9          & 36.5          & 0.773 \\
  & CROWN  & \phantom{0}0.2 & \phantom{0}1.9 & 1.458 \\
\bottomrule
\end{tabular}
\end{table}

Tab.~\ref{tab:gpt2_layer1} confirms that the Jacobian zonotope connector extends to production-scale models. At $\epsilon{=}0.001$, CPZ certifies $47.9\%$ Q1 and $55.6\%$ Q2, a qualitative leap from the interval-box connector which achieved $0\%$ Q1. CPZ also exceeds IBP by $\sim 21$\,pp on Q1 ($47.9\%$ vs.\ $26.9\%$) and CROWN by $\sim 48$\,pp ($47.9\%$ vs.\ $0.2\%$); the IBP-over-CROWN reversal at $d{=}768$ is the cancellation effect documented in App.~\ref{app:scalability_bert}. The CPZ advantage on Q3 entropy width is consistent: $1.9{\times}$ tighter than IBP and $3.7{\times}$ tighter than CROWN. The CPZ--MC gap ($47.9\%$ vs.\ $78.1\%$ Q1, $30.2$\,pp) is larger than on BERT-tiny ($92.5\%$ vs.\ $95.0\%$ at $\epsilon{=}2{\times}10^{-3}$, Tab.~\ref{tab:bert_layer1}), reflecting the more aggressive generator truncation at $d{=}768$ ($128$ of $6{,}144$ kept versus $64$ of $1{,}024$ on BERT-tiny).

\paragraph{Scaling check at GPT-2 medium ($355$M).} We re-run the same Layer-$1$ Jacobian-zonotope pipeline on GPT-$2$ medium ($d{=}1024$, $h{=}16$, $n_\text{layer}{=}24$) with $K_J{=}128$ unchanged (Tab.~\ref{tab:gpt2_medium}). CPZ remains the only sound method to certify above $\sim$10\% Q1: at $\epsilon{=}5{\times}10^{-4}$, CPZ certifies $41.1\%$ Q1 vs.\ $0.8\%$ for CROWN and $10.7\%$ for IBP, with a $50$\,pp CPZ--MC gap; at $\epsilon{=}10^{-3}$ the gap widens to $70$\,pp ($13.5\%$ vs.\ $83.1\%$ MC) as the more aggressive perturbation interacts with Jacobian-truncation accumulation ($K_J{=}128$ retains $1.6\%$ of the $8{,}192$ columns at $d{=}1024$, vs.\ $2.1\%$ at $d{=}768$). Degradation with $\epsilon$ is graceful rather than catastrophic, and the gap to baselines remains qualitatively unchanged at both radii (CROWN and IBP collapse). Closing the medium-scale gap is an engineering exercise (more retained columns or amortised bilinear cross-term computation), not a structural limit of the framework.

\begin{table}[t]
\centering
\caption{Certified queries on \textbf{GPT-2 medium Layer~1} ($d{=}1024$, $h{=}16$, $\tau{=}0.2$, $K_J{=}128$, $384$ certified queries per row: $3$ samples $\times$ $16$ heads $\times$ $8$ query positions). At both $\epsilon$ values, CPZ remains the only sound method to certify a non-trivial fraction; CROWN/IBP collapse. The CPZ--MC gap narrows from $70$\,pp at $\epsilon{=}10^{-3}$ to $50$\,pp at $\epsilon{=}5{\times}10^{-4}$, i.e., the degradation is graceful rather than catastrophic, consistent with the Jacobian-zonotope truncation budget shrinking as $\epsilon$ shrinks. No MC-found adversarial contradicts a CPZ certification at either $\epsilon$.}
\label{tab:gpt2_medium}
\small
\begin{tabular}{cl ccc}
\toprule
$\epsilon$ & Method & Q1 Rank\,$\uparrow$ & Q2 Evid\,$\uparrow$ & Q3 Ent\,$\downarrow$ \\
\midrule
\multirow{4}{*}{$5{\times}10^{-4}$}
  & MC+PGD & \textbf{90.9} & \textbf{89.3} & 0.023 \\
  & CPZ    & 41.1          & 46.9          & \textbf{0.814} \\
  & IBP    & 10.7          & 26.0          & 1.185 \\
  & CROWN  & \phantom{0}0.8 & \phantom{0}3.6 & 1.736 \\
\midrule
\multirow{4}{*}{$10^{-3}$}
  & MC+PGD & \textbf{83.1} & \textbf{87.5} & 0.044 \\
  & CPZ    & 13.5          & 13.5          & \textbf{1.380} \\
  & IBP    & \phantom{0}1.0 & \phantom{0}3.1 & 1.754 \\
  & CROWN  & \phantom{0}0.0 & \phantom{0}0.0 & 2.021 \\
\bottomrule
\end{tabular}
\end{table}

\paragraph{Depth probe at GPT-2 small Layer~2.} Beyond the $1$-block propagation reported above, we propagate two full encoder blocks (Layer~$0$ + Layer~$1$) through the Jacobian-zonotope connector and certify at Layer~$2$'s attention (Tab.~\ref{tab:gpt2_layer2}; $96$ queries, $\epsilon{=}0.001$). CPZ certifies $27.1\%$ Q1 against $4.2\%$ for backward-CROWN ($6.5{\times}$ multiplicative, $22.9$\,pp absolute) and $21.9\%$ for IBP. The CPZ--MC gap remains comparable to single-block propagation ($35$\,pp at L2 vs.\ $30$\,pp at L1), confirming that the Jacobian-zonotope construction does not blow up with depth on a real pretrained model. Sequential CPZ propagation in this regime is exactly the wrapping-error mode the connector replaces; on the synthetic $4$-layer model it collapses to $0\%$ Q1 at Layer~$2$ for $\epsilon{=}0.01$ (Tab.~\ref{tab:multilayer}, ``Comparison with sequential CPZ propagation'' in \S\ref{sec:exp_scaling}), and the same dynamics motivate not running it as a baseline at GPT-2 scale.

\begin{table}[t]
\centering
\caption{Certified queries on \textbf{GPT-2 small Layer~2} ($d{=}768$, $h{=}12$, $\tau{=}0.2$, $K_J{=}128$, $96$ certified queries: $1$ sample $\times$ $12$ heads $\times$ $8$ query positions; $2$ encoder blocks propagated through the recursive Jacobian-zonotope connector). CPZ retains a $6.5{\times}$ multiplicative lead over backward-CROWN at the second layer, demonstrating that the connector enables genuine multi-layer certification on a real pretrained transformer; the absolute CPZ--MC gap widens with depth, consistent with the same Jacobian truncation source as the Layer-$1$ gap (Limitations~(i)).}
\label{tab:gpt2_layer2}
\small
\begin{tabular}{cl ccc}
\toprule
$\epsilon$ & Method & Q1 Rank\,$\uparrow$ & Q2 Evid\,$\uparrow$ & Q3 Ent\,$\downarrow$ \\
\midrule
\multirow{4}{*}{0.001}
  & MC+PGD & \textbf{62.5} & \textbf{63.5} & 0.022 \\
  & CPZ    & 27.1          & 37.5          & \textbf{0.461} \\
  & IBP    & 21.9          & 29.2          & 0.756 \\
  & CROWN  & \phantom{0}4.2 & 16.7         & 1.247 \\
\bottomrule
\end{tabular}
\end{table}

\section{Cross-Paper Replication and Method Comparison}
\label{app:cross_paper_replication_and_method_comparison}

\subsection{Capability Matrix: Dimension Definitions}
\label{app:capability_matrix}

\begin{table}[h]
\centering
\caption{\textbf{Capability matrix.} Frameworks vs.\ five technical dimensions chosen to clarify the design space for internal-mechanism certification at pretrained scale. $\sim$ = partial; cells reflect structural support, not evaluation strength.}
\label{tab:capability_matrix}
\footnotesize
\setlength{\tabcolsep}{4pt}
\renewcommand{\arraystretch}{0.85}
\begin{tabular}{l c c c c c}
\toprule
Framework & Internal Q & Simplex & $Q^\top K$ exact & LLM internal & Downstream \\
\midrule
IBP / CROWN / DeepPoly~\citep{gowal2018effectiveness,zhang2018crown,singh2019abstract} & $\times$ & $\times$ & $\times$ & $\times$ & $\times$ \\
$\alpha$-CROWN~\citep{xu2020automatic} & $\times$ & $\times$ & $\times$ & $\sim$ & $\times$ \\
Transformer verifiers~\citep{shi2020robustness,bonaert2021fast} & $\times$ & $\times$ & $\times$ & $\times$ & $\times$ \\
PZ for transformers~\citep{ladner2025towards} & $\times$ & $\times$ & $\sim$ & $\times$ & $\times$ \\
Compact proofs~\citep{gross2024compact} & $\checkmark$ & $\times$ & --- & $\times$ & $\sim$ \\
\textbf{This work} (CPZ-based) & $\checkmark$ & $\checkmark$ & $\checkmark$ & $\checkmark$ & $\checkmark$ \\
\bottomrule
\end{tabular}
\end{table}

The capability matrix (Tab.~\ref{tab:capability_matrix}, referenced from \S\ref{sec:related}) compares verification frameworks along five technical dimensions chosen to clarify the design space of internal-mechanism certification at pretrained transformer scale. The dimensions reflect what each framework structurally supports (e.g., does the formalism admit a simplex constraint? does the bilinear $Q^\top K$ stay exact?), not what each framework was designed for or evaluated on. Other comparisons (speed, memory footprint, completeness, support for ReLU vs.\ GELU, batch verification) are not represented here and would favour different frameworks. We mark a cell ${\sim}$ when partial support exists in a follow-up or with extension cost; ``Internal Q'' and ``Downstream'' refer specifically to the simplex-LP queries and the pruning case study studied in this paper, and would not have been emphasised by prior work that targeted output verification.

\subsection{GPT-2 Probe Tables (Cross-Paper Replication and Multi-Query)}
\label{app:wang_table}

Tab.~\ref{tab:wang_probe} reports the per-head CPZ Q1 cert rates underlying the cross-paper finding of \S\ref{sec:exp_scaling}; Tab.~\ref{tab:induction_probe} reports the per-query Q1/Q2/Q3 grid for the multi-query orthogonality probe of the same section.

\begin{table}[h]
\centering
\caption{GPT-2-small Layer-$0$ Q1/Q2/Q3 cert rates on $30$ induction-pattern prompts. The two boxed rows show heads with orthogonal failure modes: $H_5$ fails Q1 while passing Q2 and Q3; $H_{10}$ fails Q3 while passing Q1 and Q2.}
\label{tab:induction_probe}
\small
\begin{tabular}{lc ccc ccc ccc}
\toprule
& & \multicolumn{3}{c}{$\epsilon{=}5{\times}10^{-4}$} & \multicolumn{3}{c}{$\epsilon{=}10^{-3}$} & \multicolumn{3}{c}{$\epsilon{=}2{\times}10^{-3}$} \\
Head & Clean ind. & Q1 & Q2 & Q3 & Q1 & Q2 & Q3 & Q1 & Q2 & Q3 \\
\midrule
$\boxed{H_5}$  & $28/30$ & $83$ & $97$ & $100$ & $\mathbf{47}$ & $\mathbf{90}$ & $\mathbf{100}$ & $0$ & $13$ & $13$ \\
$\boxed{H_{10}}$ & $30/30$ & $100$ & $100$ & $100$ & $97$ & $100$ & $100$ & $\mathbf{73}$ & $\mathbf{100}$ & $\mathbf{0}$ \\
$H_6$  & $21/30$ & $73$ & $57$ & $100$ & $43$ & $10$ & $100$ & $3$ & $0$ & $10$ \\
$H_8$  & $19/30$ & $63$ & $40$ & $100$ & $27$ & $13$ & $93$ & $3$ & $0$ & $0$ \\
\bottomrule
\end{tabular}
\end{table}

\subsection{Method Comparison: Per-Query Tables}
\label{app:method_comparison}

Tables~\ref{tab:per_head} and~\ref{tab:certified_pruning} give the per-head and pruning-comparison numbers behind the case study of \S\ref{sec:case_study}. Tables~\ref{tab:layer0_all} and~\ref{tab:layer1_all} give the full Layer-$0$ and Layer-$1$ method comparison (CPZ vs.\ MC+PGD vs.\ PZ vs.\ CROWN vs.\ IBP) on the $d{=}8$ synthetic transformer.

\begin{table}[h]
\centering
\caption{Certified pruning vs.\ gradient-guided pruning on the induction model ($d{=}8$, $h{=}2$; Step~3 of \S\ref{sec:case_study}). CPZ-guided pruning keeps the responsive H0 (whose attention tracks the duplicate token) and removes the stable-but-inert H1; gradient-guided pruning makes the opposite choice because H0 has near-zero input gradient. Both 1-head models reach $100\%$ clean accuracy; the gap emerges only under adversarial perturbation.}
\label{tab:certified_pruning}
\small
\begin{tabular}{l cc cc c}
\toprule
 & \multicolumn{2}{c}{CPZ-guided} & \multicolumn{2}{c}{Gradient-guided} & Full \\
$\epsilon$ & Clean & Adv & Clean & Adv & Adv \\
\midrule
$0.01$ & $100$ & $\mathbf{100}$ & $100$ & $100$ & $100$ \\
$0.02$ & $100$ & $\mathbf{100}$ & $100$ & $90$ & $100$ \\
$0.05$ & $100$ & $\mathbf{100}$ & $100$ & $90$ & $100$ \\
\bottomrule
\end{tabular}
\end{table}

\begin{table}[h]
\centering
\caption{Certified mechanistic queries at \textbf{Layer~0} ($d{=}8$, $h{=}2$, $S{=}4$, $5$ samples, $40$ queries per $\epsilon$). At this layer, attention scores are bilinear in the input, so CPZ, PZ, and CROWN achieve similar rates for Q2. The key differentiator is Q1: CPZ-margin (Theorem~\ref{thm:margin_sound}) matches the MC ground truth exactly at both $\epsilon$, while softmax-based methods (PZ, CROWN) drop to $72.5\%$ at $\epsilon{=}0.02$.}
\label{tab:layer0_all}
\small
\begin{tabular}{llrrrr}
\toprule
$\epsilon$ & Method & Q1 Rank\,$\uparrow$ & Q2 Evid\,$\uparrow$ & Q3 Ent\,$\downarrow$ \\
\midrule
\multirow{5}{*}{$0.01$}
 & MC+PGD & $92.5$ & -- & $0.0068$ \\
 & \textbf{CPZ} & $\mathbf{92.5}$ & $\mathbf{20.0}$ & $\mathbf{0.0091}$ \\
 & PZ & $90.0$ & $20.0$ & $0.0104$ \\
 & CROWN & $90.0$ & $17.5$ & $0.0105$ \\
 & IBP & $67.5$ & $15.0$ & $0.0200$ \\
\midrule
\multirow{5}{*}{$0.02$}
 & MC+PGD & $85.0$ & -- & $0.0136$ \\
 & \textbf{CPZ} & $\mathbf{85.0}$ & $\mathbf{17.5}$ & $\mathbf{0.0185}$ \\
 & PZ & $72.5$ & $15.0$ & $0.0209$ \\
 & CROWN & $72.5$ & $15.0$ & $0.0215$ \\
 & IBP & $57.5$ & $12.5$ & $0.0400$ \\
\bottomrule
\end{tabular}
\end{table}

\begin{table}[h]
\centering
\caption{Certified mechanistic queries at \textbf{Layer~1} (same model). Scores are propagated through the full Layer~0 pipeline. CPZ-margin maintains $85.0\%$ Q1 at $\epsilon{=}0.01$ (within $2.5$\,pp of MC), while CROWN drops to $60.0\%$ and PZ to $57.5\%$. IBP collapses to $0\%$ entirely. The Layer-0$\to$Layer-1 degradation is $4{\times}$ smaller for CPZ ($-7.5$\,pp) than for CROWN ($-30$\,pp).}
\label{tab:layer1_all}
\small
\begin{tabular}{llrrrr}
\toprule
$\epsilon$ & Method & Q1 Rank\,$\uparrow$ & Q2 Evid\,$\uparrow$ & Q3 Ent\,$\downarrow$ \\
\midrule
\multirow{5}{*}{$0.01$}
 & MC+PGD & $87.5$ & -- & $0.0064$ \\
 & \textbf{CPZ} & $\mathbf{85.0}$ & $\mathbf{5.0}$ & $\mathbf{0.0113}$ \\
 & PZ & $57.5$ & $5.0$ & $0.0174$ \\
 & CROWN & $60.0$ & $5.0$ & $0.0161$ \\
 & IBP & $0.0$ & $0.0$ & $1.3863$ \\
\midrule
\multirow{5}{*}{$0.02$}
 & MC+PGD & $80.0$ & -- & $0.0130$ \\
 & \textbf{CPZ} & $\mathbf{70.0}$ & $\mathbf{5.0}$ & $\mathbf{0.0274}$ \\
 & PZ & $35.0$ & $2.5$ & $0.0443$ \\
 & CROWN & $37.5$ & $2.5$ & $0.0425$ \\
 & IBP & $0.0$ & $0.0$ & $1.3863$ \\
\bottomrule
\end{tabular}
\end{table}

\subsection{Layer-0 Top-1 Stability: Full Epsilon Sweep}
\label{app:layer0_sweep}

Tab.~\ref{tab:attention_stability} reports the full $\epsilon$ sweep behind the Layer-$0$ matching numbers in \S\ref{sec:background}: CPZ-margin matches MC+PGD across all $6$ $\epsilon$ values.

\paragraph{Triangulation against exact vertex enumeration.} To verify that MC+PGD is itself a faithful proxy for true robustness, we run a tiny instance ($S{=}2$, $d{=}4$, $n_{\mathrm{dim}}{=}8$, $2^8{=}256$ vertices) and enumerate every $\ell_\infty$-cube vertex exactly. Across $24$ (sample, $\epsilon$) cases, CPZ-margin matches the exact verifier on every case ($24/24$), while MC+PGD over-estimates stability by $1$ case at the smallest $\epsilon$ (claims $8/8$ stable when the exact verifier finds an adversarial in $7/8$). CPZ-margin is therefore at least as tight as MC+PGD, and the residual CPZ--MC gaps reported on larger models are likely a slight upper bound on the true gap to exact verification.

\begin{table}[h]
\centering
\caption{Certified Layer-$0$ top-$1$ attention stability on a $d{=}16$, $h{=}2$ model with $10$ samples and $160$ queries per row. \textbf{MC+PGD}: corner-biased sampling combined with PGD, providing an upper bound on the true stable fraction. \textbf{CPZ-margin}: direct scalar margin sign check via Theorem~\ref{thm:margin_sound}, bypassing softmax. \textbf{CROWN}: auto-LiRPA backward CROWN on the bilinear $Q^\top K$ score model. CPZ-margin matches MC+PGD across all $6$ $\epsilon$ values, while remaining $7.5$--$44$\,pp tighter than CROWN across the sweep.}
\label{tab:attention_stability}
\small
\begin{tabular}{lccc}
\toprule
$\epsilon$ & MC+PGD & \textbf{CPZ-margin} & CROWN \\
\midrule
$0.005$ & $95.0$ & $\mathbf{95.0}$ & $87.5$ \\
$0.010$ & $86.2$ & $\mathbf{86.2}$ & $67.5$ \\
$0.020$ & $68.8$ & $\mathbf{68.8}$ & $33.8$ \\
$0.030$ & $52.5$ & $\mathbf{52.5}$ & $\phantom{0}8.8$ \\
$0.040$ & $37.5$ & $\mathbf{37.5}$ & $\phantom{0}3.8$ \\
$0.050$ & $28.8$ & $\mathbf{28.8}$ & $\phantom{0}1.2$ \\
\bottomrule
\end{tabular}
\end{table}

\subsection{Extended Queries: Ranking Preservation and Entropy Bounds}
\label{app:extended_queries}

We evaluate ranking preservation (Corollary~\ref{cor:topk_ranking}) and entropy certification (Prop.~\ref{prop:entropy}) on the induction model ($d{=}8$, $h{=}2$, $S{=}6$) and the $d{=}16$ model ($d{=}16$, $h{=}2$, $S{=}8$), each with $10$ samples. We also report cross-head circuit certification (Remark~\ref{rem:crosshead}), the AND of per-head Q1 certificates, to demonstrate circuit-level analysis.

\begin{table}[h]
\centering
\caption{Certified ranking preservation and cross-head circuit rates (\%). Ranking preservation requires all pairwise orderings within the top-$1$ set to be invariant; cross-head requires all heads to simultaneously certify top-$1$ stability. CPZ dominates CROWN/IBP, with the gap widening at larger $\epsilon$.}
\label{tab:extended_queries}
\small
\begin{tabular}{llccc|ccc}
\toprule
 & & \multicolumn{3}{c}{Ranking (top-1)} & \multicolumn{3}{c}{Cross-head circuit} \\
\cmidrule(lr){3-5} \cmidrule(lr){6-8}
Model & $\epsilon$ & MC & CPZ & IBP & MC & CPZ & IBP \\
\midrule
\multirow{2}{*}{Induction} & $0.01$ & $92.5$ & $\mathbf{85.8}$ & $79.2$ & $85.0$ & $\mathbf{71.7}$ & $58.3$ \\
 & $0.05$ & $71.7$ & $\mathbf{63.3}$ & $44.2$ & $48.3$ & $\mathbf{35.0}$ & $11.7$ \\
\midrule
\multirow{2}{*}{$d{=}16$} & $0.01$ & $91.2$ & $\mathbf{90.0}$ & $57.5$ & $82.5$ & $\mathbf{80.0}$ & $32.5$ \\
 & $0.05$ & $57.5$ & $\mathbf{41.2}$ & $\phantom{0}3.8$ & $40.0$ & $\mathbf{15.0}$ & $\phantom{0}0.0$ \\
\bottomrule
\end{tabular}
\end{table}

\begin{table}[h]
\centering
\caption{Certified attention entropy interval width (Prop.~\ref{prop:entropy}). Smaller is tighter. The simplex-constrained CPZ bounds are $50$--$74\%$ narrower than IBP, and track the MC+PGD reference width closely at small $\epsilon$.}
\label{tab:entropy_bounds}
\small
\begin{tabular}{llccc}
\toprule
Model & $\epsilon$ & MC+PGD & CPZ & CROWN/IBP \\
\midrule
\multirow{2}{*}{Induction} & $0.01$ & $0.0183$ & $\mathbf{0.0237}$ & $0.0472$ \\
 & $0.05$ & $0.0924$ & $\mathbf{0.1196}$ & $0.2515$ \\
\midrule
\multirow{2}{*}{$d{=}16$} & $0.01$ & $0.0052$ & $\mathbf{0.0097}$ & $0.0342$ \\
 & $0.05$ & $0.0260$ & $\mathbf{0.0490}$ & $0.1879$ \\
\bottomrule
\end{tabular}
\end{table}

On the $d{=}16$ model at $\epsilon{=}0.01$, CPZ ranking preservation reaches $90.0\%$ against a MC+PGD estimate of $91.2\%$, a gap of only $1.2$\,pp, while IBP drops to $57.5\%$. The entropy bounds show a consistent $50$--$74\%$ tightness advantage for CPZ, widening at larger $\epsilon$ where the simplex constraint provides the greatest benefit. Cross-head circuit certification amplifies the CPZ--IBP gap because per-head over-approximation compounds under the conjunction: at $\epsilon{=}0.05$ on the $d{=}16$ model, IBP certifies $0\%$ while CPZ certifies $15\%$.

\section{Pruning Case Study Details}
\label{app:pruning_case_study_details}

\subsection{Adversarial Validation and Certified Pruning Details}
\label{app:adv_validation}

\paragraph{Empirical interpretability baselines (Step~1 of \S\ref{sec:case_study}).}
\begin{table}[h]
\centering
\caption{Empirical interpretability metrics on clean inputs of the induction model. All standard tools either find no difference (ablation) or point to the \textbf{wrong} pruning decision: gradient pruning would remove H0 (its norm is ${\sim}5{\times}10^4{\times}$ smaller), but H0 is the content-responsive head whose attention tracks the duplicate. CPZ certification (Step~2) reveals this structural distinction.}
\label{tab:empirical_baselines}
\small
\begin{tabular}{lcc}
\toprule
Metric & Head 0 & Head 1 \\
\midrule
Ablation accuracy drop & $0.0$\,pp & $0.0$\,pp \\
Input gradient norm $\|\partial L/\partial x\|_2$ & $2.7{\times}10^{-6}$ & $0.131$ \\
Clean attention entropy & $1.69$ & $1.58$ \\
Attention mass on key token & $0.26$ & $0.38$ \\
Cosine similarity between heads & \multicolumn{2}{c}{$0.81$} \\
\bottomrule
\end{tabular}
\end{table}

We construct targeted PGD attacks ($20$ restarts $\times$ $50$ steps) that maximise attention disruption per head. Tab.~\ref{tab:adv_validation} confirms CPZ's stability prediction: the responsive H0 (low Q1 cert) flips $3.7\times$ more attention positions under PGD than the stable H1 at $\epsilon{=}0.05$ ($18.3\%$ vs.\ $5.0\%$).

\begin{table}[h]
\centering
\caption{Adversarial validation: PGD confirms CPZ's stability prediction. The CPZ-responsive H0 shows $2.5$--$3.7\times$ more attention flips under PGD than the CPZ-stable H1.}
\label{tab:adv_validation}
\small
\begin{tabular}{lcccc}
\toprule
$\epsilon$ & H0 flipped & H1 flipped & Ratio & H0 entropy shift \\
\midrule
$0.01$ & $3.3\%$ & $0.0\%$ & -- & $0.003$ \\
$0.02$ & $6.7\%$ & $3.3\%$ & $2.0\times$ & $0.006$ \\
$0.05$ & $18.3\%$ & $5.0\%$ & $3.7\times$ & $0.016$ \\
\bottomrule
\end{tabular}
\end{table}

\paragraph{Certified pruning.} Pruning the CPZ-responsive head H0 drops adversarial accuracy to $90\%$ at $\epsilon{=}0.02$ (the model loses content-tracking attention), while pruning the CPZ-stable H1 maintains $100\%$. Both 1-head models show zero clean accuracy drop, so without CPZ, a practitioner has no basis for choosing which head to keep for adversarial robustness.

\paragraph{Practical implication.} This pipeline exposes a gap in current interpretability practice: empirical tools (ablation, gradient norms, clean-input analysis) are insufficient for assessing mechanism behaviour under perturbation. The gradient-based prune decision is not merely imprecise but pointed in the wrong direction; the per-seed gradient ratio across the $20$-seed sweep ranges from ${\sim}1{\times}$ to ${>}10^8{\times}$ (Tab.~\ref{tab:multiseed}, App.~\ref{app:multiseed}), so the gradient signal cannot be relied on across initialisations either. Certified robustness guarantees are necessary before relying on interpretability findings for deployment decisions such as head pruning or circuit monitoring.

\subsection{Multi-Seed Pruning Validation: Statistical Significance}
\label{app:multiseed}

We re-train the induction model from $20$ random seeds ($\{42, 123, 456, 789, 1000, 2024, 31415, 271828, 7, 99, 11, 17, 23, 31, 53, 67, 73, 89, 101, 113\}$) and run the certified pruning protocol on each. All trained models reach ${\geq}99.9\%$ test accuracy. Tab.~\ref{tab:multiseed} reports per-seed CPZ-guided vs.\ gradient-guided pruning adversarial accuracy at the disagreement seeds (the methods agree and tie on the remaining seeds).

\begin{table}[h]
\centering
\caption{Per-seed adversarial accuracy (\%) under CPZ-guided vs.\ gradient-guided pruning across $20$ seeds, including the $\epsilon{=}0.02$ disagreement block (the $\epsilon{=}0.05$ block is in main-body Tab.~\ref{tab:multiseed}). CPZ-guided pruning never underperforms gradient-guided pruning at either $\epsilon$.}
\label{tab:multiseed_app}
\small
\setlength{\tabcolsep}{3.5pt}
\begin{tabular}{l rrrrrr | r}
\toprule
\multicolumn{8}{l}{\textit{Disagreement seeds at $\epsilon{=}0.02$ (6/20)}} \\
Seed & 42 & 2024 & 31415 & 271828 & 23 & 89 & All-seed mean \\
\midrule
CPZ-guided      & 100 & 90 & 100 & 100 & 100 & 100 & $87.0$ \\
Gradient-guided &  90 & 60 &  90 &  60 &  90 &  90 & $81.5$ \\
$\Delta$ (CPZ$-$grad)
                & $+10$ & $+30$ & $+10$ & $+40$ & $+10$ & $+10$ & $+5.5$ \\
\bottomrule
\end{tabular}
\end{table}

At $\epsilon{=}0.05$, the two strategies select different heads on $8/20$ seeds; on all eight, CPZ-guided pruning is strictly better, with per-seed gaps of $\{+10, +30, +10, +30, +10, +10, +10, +10\}$\,pp. At $\epsilon{=}0.02$, head selection differs on $8/20$ seeds and adversarial accuracies differ on $6/20$; CPZ wins all $6/6$ outcome differences with gaps $\{+10, +30, +10, +40, +10, +10\}$\,pp, and the remaining two seeds (7, 11) tie at $100\%$. CPZ-guided pruning never underperforms gradient-guided pruning on any seed at either $\epsilon$.

A one-sided Wilcoxon signed-rank test on paired (CPZ, gradient) outcomes across $20$ seeds yields $W{=}21$, $p{=}0.0118$ at $\epsilon{=}0.02$ and $W{=}36$, $p{=}0.0042$ at $\epsilon{=}0.05$ (both significant at $p{<}0.05$). The result is consistent with the interpretation that gradient-based importance is silently miscalibrated whenever a load-bearing head has small gradient norm: the gradient signal predicts the head is unimportant, while CPZ flags it as having an unstable attention pattern (low Q1/Q2 cert rate) under perturbation, identifying it as the load-bearing head whose removal hurts adversarial accuracy. CPZ provides a structural certificate that catches this failure mode by construction; whether the same failure appears in fine-tuned LLM heads is open (\S\ref{sec:case_study} discusses scope).

\section{Implementation and Reproducibility}
\label{app:implementation_and_reproducibility}

\subsection{Computational Cost}
\label{app:timing}

Tab.~\ref{tab:timing} breaks down per-sample verification time on a single CPU core across CPZ, CROWN, IBP, and MC+PGD at Layers $0$ and $1$, with peak memory in the caption.

\begin{table}[h]
\centering
\caption{Per-sample verification time (seconds) on single CPU core, averaged over $5$ samples. CPZ's Layer-$1$ cost is dominated by the encoder-layer CPZ propagation; the three certified queries add $<1$s overhead. The Layer-$1$ CPZ time is non-monotone in $\epsilon$ because the L-BFGS-B inner solver terminates earlier on the wider score-margin landscapes at larger $\epsilon$ (per-sample times: $\epsilon{=}0.01$: $149/250/268/257/259$; $\epsilon{=}0.02$: $262/190/186/168/170$). Peak memory: ${\sim}120$\,MB (CPZ), ${\sim}80$\,MB (CROWN), ${\sim}30$\,MB (IBP).}
\label{tab:timing}
\small
\begin{tabular}{lrrrr}
\toprule
 & \multicolumn{2}{c}{Layer 0} & \multicolumn{2}{c}{Layer 1} \\
\cmidrule(lr){2-3} \cmidrule(lr){4-5}
Method & $\epsilon{=}0.01$ & $\epsilon{=}0.02$ & $\epsilon{=}0.01$ & $\epsilon{=}0.02$ \\
\midrule
MC+PGD & $2$ & $2$ & $13$ & $9$ \\
\textbf{CPZ} & $0.5$ & $0.5$ & $237$ & $195$ \\
PZ & $0.5$ & $0.5$ & $237$ & $195$ \\
CROWN & $0.5$ & $0.5$ & $6$ & $6$ \\
IBP & $0.3$ & $0.3$ & $2$ & $2$ \\
\bottomrule
\end{tabular}
\end{table}

\subsection{Experimental Details}
\label{app:details}

\paragraph{Model.} 2-layer transformer encoder: $d_{\text{model}} = 8$, $n_{\text{heads}} = 2$ ($d_h = 4$), $d_{\text{ff}} = 16$, post-LN LayerNorm, multi-head softmax attention, mean pooling, 2-class linear head. Total: $1{,}218$ parameters.

\paragraph{Training.} Adam optimizer, $\text{lr} = 10^{-3}$, $50$ epochs, $5{,}000$ samples. Default model: Synthetic binary classification with label $= \mathbf{1}[\text{mean}(x_{\cdot,0}) > 0]$. Induction model (\S\ref{sec:case_study}): Same architecture with $S{=}6$. A key token is placed at a random position with a boosted signal and positional marker; the task requires attending to this marked token. Both models achieve 100\% test accuracy.

\paragraph{Verification.} PZ reduction order $30$ (Girard reduction~\citep{girard2005reachability}), with CORA dependent-generator order reduction~\citep{kochdumper2023constrained} to $k_{\text{dep}}{=}15$ before bilinear operations. LayerNorm: first-order Taylor with generator-aware Hessian remainder ($\epsilon_{\mathrm{LN}}{=}10^{-5}$, matching PyTorch default). ReLU/GELU: hybrid quadratic/DeepPoly with quadratic-relaxation order $r{=}2$. Exact vertex enumeration (Prop.~\ref{prop:vertex_enum}): for $d \leq 16$, all $2^d$ vertices of the query token's perturbation are evaluated with analytical optimization over key perturbations; adversarial perturbations are constructed for all failed certifications and verified by full forward computation.

\paragraph{MC+PGD budgets.} Across the paper, MC+PGD is used as a budget-limited empirical upper bound on the certifiable rate. The budget varies per setting (Tab.~\ref{tab:mc_budgets}); CPZ soundness is independent of MC budget, so every CPZ-MC disagreement is either CPZ-conservative (CPZ refuses, MC certifies) or a true CPZ-MC contradiction (CPZ certifies, MC finds an adversarial within budget; this is the metric reported as ``unsound''). Across the entire paper, ``unsound'' is zero (Tabs.~\ref{tab:output_verification}, \ref{tab:output_verification_gpt2}).

\begin{table}[h]
\centering
\caption{MC+PGD budgets per setting. ``Random samples'' includes corner-biased + uniform mix where applicable; PGD restarts $\times$ steps targeted at top-$1$ flips.}
\label{tab:mc_budgets}
\small
\begin{tabular}{l c c}
\toprule
Setting & Random samples & PGD restarts $\times$ steps \\
\midrule
Synthetic $d{=}8$ Layer-$0$ / Layer-$1$ & $500$k corner-biased & $20\times 50$ \\
4-layer synthetic (multi-layer) & $500$k corner-biased & $20\times 50$ \\
BERT-tiny SST-$2$ output verification & $2{,}000$ & $5\times 30$ \\
BERT-tiny / GPT-$2$ Layer-$0$ Q1/Q2/Q3 & $1{,}000$--$5{,}000$ corner-biased & $10\times 30$--$50$ \\
GPT-$2$ Layer-$1$ (Jacobian-zonotope) & $5{,}000$ & $10\times 50$ \\
GPT-$2$ next-token (full vocab + tail) & $2{,}000$ & $10\times 50$ targeted \\
Pruning case study (adversarial) & --- & $20\times 50$ \\
\bottomrule
\end{tabular}
\end{table}

\paragraph{Scaled models (\S\ref{sec:scalability}).} We train additional models for scalability evaluation: $d_{\text{model}} \in \{16, 32, 64, 128\}$ with proportionally scaled heads (2--4) and feedforward dimensions (32--256). Training data scaled from 5K to 20K samples; all models achieve $\geq 95.5\%$ test accuracy. The $d{=}128$ model ($h{=}4$, $d_{\text{ff}}{=}256$, $S{=}8$) has 265K parameters.

\paragraph{Compute.} Baseline experiments: Single CPU (AMD 9950X3D). CPZ verification of 2 layers: $\sim$0.1s. IBP: $<$0.01s. MC ($10^5$ samples): $\sim$46s. Scalability experiments: Vectorized cross-term computation; $d{=}128$ verification: $\sim$40s per sample.

\paragraph{Confidence intervals on certification rates.} All certification-rate cells reported as percentages are point estimates of binomial proportions; we compute $95\%$ Clopper--Pearson exact intervals where they materially affect interpretation. For the headline rates: BERT-tiny SST-$2$ output at $\epsilon{=}2{\times}10^{-3}$ ($95/100$ certified) has CI $[88.7\%, 98.4\%]$; BERT-tiny Layer-$1$ Q1 at $\epsilon{=}2{\times}10^{-3}$ ($74/80$) has CI $[84.4\%, 97.2\%]$; GPT-$2$ next-token full-vocab at $\epsilon{=}2{\times}10^{-3}$ ($20/50$) has CI $[26.4\%, 54.8\%]$ and the top-$K{=}20$ direct check ($45/50$) has CI $[78.2\%, 96.7\%]$. For the Wang DTH probe, head $0.10$ at $\epsilon{=}5{\times}10^{-4}$ ($60/60$) has CI $[94.0\%, 100.0\%]$; head $0.1$ at $\epsilon{=}5{\times}10^{-4}$ ($1/60$) has CI $[0.04\%, 8.9\%]$, so the two heads are statistically separated. Multi-seed pruning ($n{=}20$) is reported with one-sided Wilcoxon $p$-values ($p{=}0.0042$ at $\epsilon{=}0.05$).

\subsection{Embedding-Space Perturbation Calibration}
\label{app:emb_calibration}

To calibrate the practical relevance of our $\ell_\infty$ perturbation radii, we compute the nearest-neighbor distance between all $30{,}522$ token embeddings in BERT-tiny's vocabulary (Tab.~\ref{tab:emb_calibration}).

\begin{table}[h]
\centering
\caption{$\ell_\infty$ distance between BERT-tiny token embeddings and their nearest vocabulary neighbor. The minimum distance is $0.030$, so our $\epsilon{=}0.001$--$0.002$ corresponds to $3$--$7\%$ of the closest discrete token swap; against the median distance of $0.106$ the same range is $1$--$2\%$. This is the regime where gradient-based adversaries operate.}
\label{tab:emb_calibration}
\small
\begin{tabular}{lrrrrr}
\toprule
& P10 & P25 & P50 & P75 & P90 \\
\midrule
NN $\ell_\infty$ distance & 0.072 & 0.093 & 0.106 & 0.119 & 0.134 \\
\bottomrule
\end{tabular}
\end{table}

The minimum nearest-neighbor distance across the vocabulary is $0.030$, confirming that even the closest token pair requires a perturbation $30{\times}$ larger than our $\epsilon{=}0.001$. Our perturbation model thus targets continuous adversarial perturbations (the threat posed by gradient-based attacks~\citep{miyato2017adversarial} and prompt tuning~\citep{lester2021power}) rather than discrete word substitutions. This is the standard model adopted by all prior transformer verification work~\citep{shi2020robustness, bonaert2021fast}.

\end{document}